\definecolor{dgreen}{rgb}{0.00,0.49,0.00}
\definecolor{dblue}{rgb}{0,0.08,0.75}
\providecommand{\scal}[2]{\left\langle{#1},{#2}\right\rangle}
\providecommand{\nor}[1]{\left\|{#1}\right\|}
\let\mathsf\relax    
\DeclareRobustCommand{\mathsf}[1]{\text{\normalfont\sffamily#1}}
\newcommand{\X}{{\mathcal X}}
\newcommand{\Y}{{\mathcal Y}}
\newcommand{\Z}{{\mathcal Z}}
\newcommand{\fstar}{{f^*}}
\newcommand{\fn}{{f_n}}
\newcommand{\ghat}{{\widehat g}}
\newcommand{\gstar}{{g^*}}
\newcommand{\gn}{{g_n}}
\newcommand{\E}{\mathcal{E}}
\newcommand{\EE}{\mathbb{E}}
\renewcommand{\H}{\mathcal{H}}
\newcommand{\hh}{{\H}}
\newcommand{\HX}{{\mathcal{F}}}
\newcommand{\HY}{{\H}}
\newcommand{\ones}{{\mathbbm{1}}}
\newcommand{\R}{\mathbb{R}}
\newcommand{\N}{\mathbb{N}}
\newcommand{\CC}{\mathbb{C}}
\newcommand{\F}{\mathcal{F}}
\newcommand{\rr}{{\mathcal{R}}}
\newcommand{\ff}{{\mathcal{F}}}
\renewcommand{\gg}{{\mathcal{G}}}
\newcommand{\closs}{{\msf{c}_\loss}}
\newcommand{\la}{\lambda}
\newcommand{\ot}{\otimes}
\newcommand{\loss}{\bigtriangleup}
\newcommand{\decoding}{\msf d}
\newcommand{\coding}{\msf c}
\newcommand{\rhox}{{\rho_\X}}
\newcommand{\Shat}{ {\widehat {S}} }
\newcommand{\Chat}{ {\widehat {C}} }
\newcommand{\Chatla}{ {\widehat {C}_\la} }
\newcommand{\Cla}{ {{C_\la}} }
\newcommand{\Cnl}{ \Chatla}
\newcommand{\Cl}{\Cla}
\newcommand{\Ltwo}{{L^2}}
\newcommand{\LX}{L^2(\X,\rhox)}
\newcommand{\LXR}{L^2(\X,\rhox,\R)}
\newcommand{\LXH}{L^2(\X,\rhox,\hh)}
\newcommand{\domrho}{D_{\rho|\X}}
\newcommand{\Q}{Q}
\newcommand{\eqals}[1]{\begin{align*}#1\end{align*}}
\newcommand{\eqal}[1]{\begin{align}#1\end{align}}
\renewcommand{\eqals}[1]{\eqal{#1}}
\newcommand{\msf}[1]{\mathsf{#1}}
\newcommand{\mbf}[1]{\mathbf{#1}}
\newcommand{\tr}{\ensuremath{\text{\rm Tr}}}
\newcommand{\Span}{\ensuremath{\text{\rm span}}}
\newcommand{\ran}{\ensuremath{\text{\rm Ran}}}
\newcommand{\argmin}{\operatornamewithlimits{argmin}}
\newcommand{\argmax}{\operatornamewithlimits{argmax}}
\newcommand{\HS}{{\rm HS}}
\newcommand{\sign}{\ensuremath{\text{\rm sign}}}
\newcommand{\deff}{\ensuremath{d_{\text{\rm{eff}}}}}
\newcommand{\filter}{\eta}
\newcommand{\xmap}{\phi}
\newcommand{\ymap}{\varphi}
\newcommand{\zmap}{\psi}
\renewcommand{\paragraph}[1]{\vspace{1em}\noindent{\bfseries #1}.}
\declaretheorem[name=Theorem,refname=Thm.]{theorem}
\declaretheorem[name=Lemma,sibling=theorem]{lemma}
\declaretheorem[name=Proposition,refname=Prop.,sibling=theorem]{proposition}
\declaretheorem[name=Remark]{remark}
\declaretheorem[name=Corollary,refname=Cor.,sibling=theorem]{corollary}
\declaretheorem[name=Definition,refname=Def.]{definition}
\declaretheorem[name=Assumption,refname=Asm.]{assumption}
\declaretheorem[name=Example]{example}
\crefname{assumption}{Assumption}{Assumptions}
\crefname{equation}{}{}
\crefname{figure}{Fig.}{Fig.}
\crefname{table}{Table}{Tables}
\crefname{section}{Sec.}{Sec.}
\crefname{theorem}{Thm.}{Thm.}
\crefname{lemma}{Lemma}{Lemmas}
\crefname{corollary}{Cor.}{Cor.}
\crefname{example}{Example}{Examples}
\crefname{remark}{Remark}{Remarks}
\crefname{algorithm}{Alg.}{Algorightms}
\crefname{appendix}{Appendix}{Appendices}
\crefname{subappendix}{Appendix}{Appendices}
\crefname{subsubappendix}{Appendix}{Appendices}
\newcommand{\titlestruct}{A General Framework for Consistent Structured Prediction with Implicit Loss Embeddings}
\newcommand{\structshort}{ILE}
\newcommand{\struct}{Implicit Loss Embedding}
\begin{document}

\title{\titlestruct{}}
\date{}

\author{%
    Carlo Ciliberto$^1$,~~ Lorenzo Rosasco$^{2,3}$,~~ Alessandro Rudi$^{4}$\\
    \hspace{-1.35em}{\footnotesize \texttt{c.ciliberto@imperia.ac.uk, lorenzo.rosasco@mit.edu, alessandro.rudi@inria.fr}}\\ 
    {\footnotesize $^1$  Department of Electrical and Electronic Engineering, Imperial College London, UK}\\
    {\footnotesize $^2$ University of Genova and Istituto Italiano di Tecnologia, Genova, Italy}\\
    {\footnotesize $^3$ Massachusetts Institute of Technology, Cambridge, MA, USA}\\
    {\footnotesize $^4$ INRIA, Paris, France, \'Ecole Normale Sup\'erieure, Paris, France, PSL Research, France}\\
}

\maketitle

\begin{abstract}
\noindent We propose and analyze a novel  theoretical and algorithmic framework for structured prediction. While so far the term has referred to discrete output spaces, here we consider more general settings, such as manifolds or spaces of probability measures. 
We define structured prediction as a problem where the output space  lacks a vectorial structure. We identify  and study  a large class of loss functions  that implicitly defines a suitable geometry on the problem. The latter is the key to  develop an  algorithmic framework amenable to a sharp statistical analysis and yielding efficient computations. When dealing 
with output spaces with infinite cardinality, a suitable implicit formulation of the estimator is  shown to be crucial.  
    
\end{abstract}


\section{Introduction}

Statistical learning theory offers a number of methods to deal with supervised problems when the output space is linear (e.g. scalar values or vectors). However, applications involving more general output spaces are becoming increasingly common. Examples include image segmentation \citep{alahari2008reduce} or captioning \citep{karpathy2015deep}, speech recognition \citep{bahl1986maximum,sutton2012introduction}, manifold regression \citep{steinke2010nonparametric}, trajectory planning \citep{ratliff2006maximum}, protein folding \citep{joachims2009predicting}, prediction of probability distributions \citep{frogner2015learning}, ordinal regression \citep{pedregosa2017consistency}, information retrieval or ranking \citep{duchi2010} to name a few \citep[see][for more examples]{bakir2007,nowozin2011structured}.  When considering discrete output spaces, these settings are often referred to as {\em structured prediction} problems, since they require dealing with output spaces that have a specific structure, such as strings, graphs or sequences. 

Standard machine learning  methods  like  empirical risk minimization are faced with both modeling and computational challenges in these settings.  Therefore, in practice, either one of the following two   main strategies are often considered. On the one hand, {\em surrogate methods} \citep{bartlett2006,mroueh2012} that design  {\em ad-hoc} algorithms and theory for different learning settings on a case-by-case basis. While this allows to prove strong theoretical guarantees, it makes it difficult to extend previous results to new settings. On the other hand, {\em likelihood estimation} methods \citep{lafferty2001conditional,taskar2004max,tsochantaridis2005,bakir2007,nowozin2011structured} have broad applicability but typically poor theoretical guarantees \citep{tewari2007consistency}.  

In this work, we propose a novel structured prediction framework combining the best of both worlds. Our approach  extends structured prediction beyond discrete outputs, to include problems such as manifold regression.  The lack of linear structure in the output space is the common feature of the different problems we consider. The key observation is that for a very wide range of problems, the associated loss function carries implicitly a natural corresponding geometry. More precisely it admits an {\em \struct{} (\structshort{})} into a linear (albeit possibly infinite dimensional) space. Exploiting such a  geometry allows us to  derive a consistent least squares algorithmic framework. The latter can be seen as related to {\itshape kernel dependency estimation (KDE)}  \citep{weston2002,cortes2005,kadri2013generalized}, where the loss was assumed to be induced by an inner product. Here we consider loss functions defined by non-symmetric forms, hence covering a much wider range of examples, including losses defined by distances and divergences. Like for KDE, an implicit formulation of the proposed estimator can be derived. This becomes crucial when considering structured prediction problems beyond the discrete case. 
This paper is the extended version of \cite{ciliberto2016}, which has initiated a recent stream of works where the method has been equivalently referred to as either the {\em Structured Encoding Loss Function (SELF) approach} or the {\em Quadratic Surrogate framework} \citep{osokin2017structured,ciliberto2017consistent,korba2018structured,rudi2018manifold,struminsky2018quantifying,luise2018differential,nowak2018sharp,djerrab2018output,luise2019leveraging,blondel2019structured,ciliberto2019localized}. In this paper we refine the analysis in \cite{ciliberto2016}, providing novel insights and estimators for structured prediction. More precisely, the novel contributions of the current work are: (a) we propose a number of novel estimators for structure prediction by leveraging the connection between our \structshort{} framework and vector-valued regression settings (\cref{sec:framework}). (b) We study the generalization properties of the proposed estimators and show that these methods are statistically equivalent to the original structured prediction algorithm, while requiring less computations (\cref{sec:theory}). (c) We show that the learning rates and computational costs of the proposed methods are adaptive with respect to standard regularity properties of the learning problem (\cref{sec:refined-rates}), further reducing the complexity of the learning process when the problem is regular. (d) We provide a number of sufficient conditions to determine whether a loss admits an \structshort{}, which are easy to verify in practice (\cref{sec:self-sufficient-conditions}). We use the latter results to show that most loss functions used in machine learning applications satisfy the \structshort{} definition and therefore that our framework is suited to a large number of settings and applications.

The paper is organized as follows: \cref{sec:setting} introduces structured prediction within the framework of statistical learning theory for supervised problems. In \cref{sec:framework} we present the \structshort{} framework and the novel estimators. In \cref{sec:previous-literature} we draw extensive connections with previous work. \cref{sec:theory} is devoted to study the statistical and computational properties of the proposed estimators. \cref{sec:self-sufficient-conditions} studies sufficient conditions for a loss function to satisfy the \structshort{} definition. Finally, \cref{sec:conclusions} concludes the work discussing relevant future directions.

\section{Problem Setting and Background}\label{sec:setting}

We denote by $\X, \Y$ and $\Z$ respectively the {\em input space}, {\em label space} and {\em output space} of a learning problem. We let $\rho$ be a probability measure on $\X \times \Y$ and $\loss:\Z \times \Y \to \R$ be a loss function measuring prediction errors between a label $y\in\Y$ and an output $z\in\Z$. The distinction between $\Y$ and $\Z$ allows to consider applications where labels do not necessarily correspond to the desired outputs (e.g. ranking/information retrieval, see below).

\vspace{1em}\noindent{\bf Supervised Learning}.
 In supervised learning problems, the goal is to estimate a function $\fstar: \X \to \Z$ minimizing the {\em expected risk}
\eqal{\label{eq:expected-risk}
	\min_{f:\X\to\Z} \E(f), \qquad \textrm{with} \qquad \E(f) = \int \loss(f(x), y) ~d\rho(x,y),
}

\noindent over the set of measurable functions $f:\X\to\Z$. In practice, the distribution $\rho$ is given but unknown and only $n$ examples $(x_i, y_i)_{i=1}^n$ independently distributed according to $\rho$ are provided. Given a training set, a learning algorithm needs to find a good approximation $\fn:\X\to\Z$ to $\fstar$ such that the corresponding excess risk $\E(\fn)$ is close to $\E(\fstar)$ and tends to it  as the number $n$ of training points increases.

\vspace{1em}\noindent{\bf Empirical Risk Minimization.} A standard learning approach   is {\em Empirical Risk Minimization (ERM)} \citep[see e.g.][]{devroye2013probabilistic}. This method consists in obtaining the estimator $\fn:\X\to\Z$ as  
\eqal{\label{eq:abstract-erm}
	\fn ~=~ \argmin_{f\in\hh} ~ \frac{1}{n} \sum_{i=1}^n \loss(f(x_i),y_i),
}
by minimizing the empirical version of the expected risk over a suitable space $\hh$ of functions $f:\X\to\Z$. The idea underlying ERM is to use the empirical risk as an approximation of the expected risk, so that the estimation of $\fstar$ via $\fn$ should become increasingly  accurate as the number of training samples grows.

From a statistical perspective, it is sufficient for the loss $\loss$ to satisfy very general conditions (e.g. Lipschitz, bounded, etc.) in order for the ERM strategy to enjoy strong statistical guarantees. In particular, a number of results are available proving  universal consistency and learning rates (in terms of generalization or excess risk bounds) for the empirical risk estimator $\fn$, and a variety of hypotheses spaces $\hh$ \citep[see for instance ][and references therein]{shalev2014understanding}. 

From a computational perspective, a central question is whether the ERM problem can be solved efficiently. When the output space is linear, such as $\Z = \R$, and the loss $\loss$ is convex, ERM becomes an efficient strategy for a large family of hypotheses spaces. For instance, a standard approach is to consider linear parametrization of functions $f:\X\to\R$ in a Hilbert space $\hh$ of the form $f(x) = \scal{w}{\xmap(x)}_\hh$ with $w\in\hh$ and $\xmap:\X\to\hh$ a feature map. Following this approach, the resulting ERM problem in \Cref{eq:abstract-erm} can solved efficiently leveraging convex optimization techniques. This same strategy can be naturally extended to the general linear case $\Z = \R^M$.

\vspace{1em}\noindent{\bf Structured Prediction and Limitations of ERM.} When the space $\Z$ does not have a linear structure, applying ERM poses concrete challenges to both modeling and computations:


\begin{itemize}
       \item {\bf Modeling.} If the output space is non-linear, it is not  clear how to design a suitable hypotheses space $\hh$ of candidate estimators. In particular, linear parametrizations of the form $f(x) = \scal{w}{\xmap(x)}$ introduced above are not possible. For instance, given $f_1,f_2:\X\to\Z$, it is not guaranteed that $f_1+f_2:\X\to\Z$ takes values in $\Z$ as well. 

       \item {\bf Computations.} If the hypotheses space is non-linear or the loss is non-convex (e.g. integer-valued), solving  ERM  can be extremely challenging. Often, approaches based on the regularity of the loss function and the optimization domain, such as gradient methods,  cannot be adopted. Hence, it is not clear how to obtain $\fn$ in practice. 
\end{itemize}
Next we describe a number of problems falling in the above setting.

\subsection{Examples of Structured Prediction Problems}

Below we provide some examples of structured prediction problems according to our definition, that is problems where the output space lacks a linear structure. We refer to \citep{bakir2007,nowozin2011structured} for more examples.

\begin{itemize}
\item {\bf Classification, Multi-class, Multi-labeling.} In these settings $\Z = \Y = \{1,\dots,T\}$ is a collection of classes that can be associated to inputs from $\X$. 

\item {\bf Ranking, Ordinal Regression, Information Retrieval.} The goal is to predict an {\em ordered} list of documents \citep{bakir2007,pedregosa2017consistency,duchi2010}. For instance $x\in\X$ can be a query in a search engine and $\Z$ is the space of all permutations (ordering) over the documents in a database. The label space $\Y\neq\Z$ typically contains a set of scalar scores representing the individual relevance of each document to the input query.

\item {\bf Sequence Prediction.} The goal is to predict sequences such as time series for financial applications or planning trajectories \citep{ratliff2006maximum}. Loss functions such as the Dynamic Time Warping \citep{cuturi2017soft} can be used to measure the similarity between two sequences.

\item {\bf Predicting Probability Distributions / Histograms.} In these settings, the output $\Z$ corresponds to a space of probability distributions \citep{frogner2015learning,luise2018differential,mensch2019geometric}. The loss $\loss$ is a metric comparing probabilities, such as the Kullback-Libler divergence or the Hellinger, $\chi^2$ or Wasserstein distance.

\item {\bf Manifold Regression.} Problems where the outputs belong to a smooth Riemannian manifold $\Z$ \citep{steinke2010nonparametric,rudi2018manifold}. A natural choice for the loss $\loss$ is the squared geodesic distance of the manifold. This setting generalizes the standard regression problem with least-squares loss and $\Z = \R^M$, to the manifold scenario.

\end{itemize}
\section{A General Framework for Structured Prediction}\label{sec:framework}

In this section, we introduce and motivate our structured prediction framework. Our discussion starts from a useful characterization of $\fstar:\X\to\Z$ the {\em minimizer of the expected risk} in \Cref{eq:expected-risk}. Assume that $\rho$ can be factorized as $\rho(x,y) = \rho(y|x)\rhox(x)$ with $\rho(\cdot|x)$ the conditional distribution over $\Y$ given $x\in\X$ and $\rhox$ the marginal distribution of $\rho$ over $\X$. It can be shown that\eqal{\label{eq:fstar}
	\fstar(x) ~=~ \argmin_{z\in\Z}~ \int_{\Y} \loss(z,y)~d\rho(y|x),
}
that is,   the value $\fstar$ at  any given $x\in\X$ corresponds to the minimizer, over the output set $\Z$, of the conditional expectation $\EE_{y|x} \loss(z,y)$. 
Indeed, it is possible to show that if $\loss$ is measurable, then  such a point-wise estimate defines a measurable function. This latter result requires some care and follows from Berge maximum theorem \citep{aliprantis2006} (see also Aumann's principle \citep{steinwart2008}). We refer the reader to \Cref{sec:app-framework} for a detailed discussion.

In the following, we leverage  this characterization of $\fstar$  to develop our approach to  structured prediction. First,   {we consider the case  where both output and label spaces are discrete and finite}. As noted,  this  is relevant, since most previous work focused on this setting \citep{bakir2007}. Additionally, for our presentation, it allows a self-contained introduction of key ideas, deferring the technical details to the general discussion in   \Cref{sec:general-algorithm}.

\subsection[alt-text]{Motivating Analysis: Finite Output Spaces}\label{sec:finite-Y}
We begin by discussing how  loss functions define a geometry on finite output spaces, and how it can be used to define an estimator. Then, we can consider linear estimators and show how for this class of estimators a useful implicit formulation can be derived.

\paragraph{Geometry of Structured Loss Functions}
Let $\X = \R^d$ and assume $\Z = \Y = \{1,\dots,T\}$ for $T \in\N$. In this setting, any loss function $\loss:\Z\times\Y\to\R$ can be characterized in terms of a matrix $V\in\R^{T \times T}$ such that
\eqal{\label{eq:self-for-finite}
	\loss(z,y) ~=~ e_z^\top ~V~e_y, \qquad\qquad \forall ~z\in\Z,~y\in\Y,
}
where $e_y\in\R^T$ denotes the $y$-th element of the canonical basis of $\R^T$, namely the vector with $y$-th entry equal to $1$ and the rest equal to $0$. Combining this observation with the characterization of $\fstar$ in \Cref{eq:fstar} and using  linearity of the  integral, we have 
\eqal{\label{eq:fstar-in-terms-of-gstar}
	\fstar(x) ~ = ~ \argmin_{z\in\Z} ~ e_z^\top V ~ \gstar(x), \qquad\qquad \gstar(x) = \int_{\Y} e_y ~d\rho(y|x),
}
for any $x\in\X$. In particular, note that the function $\gstar:\X\to\R^T$ is given by 
\eqals{
	\gstar(x) = \big(~\rho\big(1|x\big)~,\dots,~\rho\big(T|x\big)~\big)^\top \in\R^T,
}

\noindent the vector whose $y$-th entry is equal to the probability of observing $y$, given $x$. This observation is crucial,  since it allows to identify $\gstar$ with the {\itshape regression function}, that is the minimizer of the expected least squares error (see \cref{lem:fstar-in-terms-of-gstar} for a formal statement)
$$
\gstar = \argmin_{g:\X\to\R^T}~\int  \nor{e_y- g(x)}_{\R^T}^2~d\rho(x,y).
$$
The above discussion suggests the following approach.
Given given $n$ training points $(x_i,y_i)_{i=1}^n$,  we could approximate $\gstar$ by a least squares estimate $\gn$
minimizing 
$$
\gn = \argmin_{g\in\mathcal{G}}~\frac{1}{n} \sum_{i=1}^n\nor{e_{y_i}- g(x_i)}_{\R^T}^2
$$
over some function space $\mathcal{G}$. Then we obtain the estimator $\fn:\X\to\Z$ 
 for any $x\in\X$ as
\eqal{\label{eq:fn-in-terms-of-gn}
	\fn(x) ~=~ \argmin_{z\in\Z}~ e_z^\top ~ V ~\gn(x).
}
The advantage of this strategy is that approximating $\gstar$ corresponds to a standard vector-valued regression problem, since the output space is now $\R^T$ and not the ``structured'' $\Z$.  As we discuss next, for linearly parameterized estimators, we can develop a useful implicit formulation.  We first add two remarks pointing out connections to related ideas.


%
%

\begin{remark}[Conditional mean embedding] By construction, $\gstar(x)$ defined in~\eqref{eq:fstar-in-terms-of-gstar} corresponds to the definition of {\em conditional mean embedding} of $\rho(\cdot|x)$ in $\R^T$ \citep{song2009hilbert,lever2012conditional,singh2019kernel}. In \Cref{sec:mean-embeddings}, this connection will provide relevant insights on the structured prediction estimator we propose and its statistical properties.
\end{remark}

\begin{example}[Classification]\label{ex:multi-class-classification}
For classification,  the estimator $\fn$ in \Cref{eq:fn-in-terms-of-gn}, recovers the least-squares classifier in \citep{yao2007early,mroueh2012}. To see this, let $\Z = \Y = \{1,\dots,T\}$ be a finite set of class labels and let $\loss$ be the $0$-$1$ (or mis-classification) loss, namely $\loss(z,y) = 1$ if $z\neq y$ and $0$ otherwise. It is easy to see that $\loss$ is of the form of \Cref{eq:self-for-finite} with matrix $V = \ones\ones^\top - I$, where $I$ is the $T \times T$ identity matrix and $\ones$ is the $T$-dimensional vector with all entries equal to $1$. 
 For any $y\in\Y$, the $y$-th entry of $\gn(x)\in\R^T$ is interpreted as the likelihood of observing a the class $y$ given the input $x\in\X$. Therefore, the classifier $c_n:\X\to\Y$ acts by predicting the index of $\gn(x)$ with higher likelihood. Direct comparison with our approach leads to 
\eqals{
	c_n(x) ~=~ \argmax_{t=1,\dots,T}~ \big(\gn(x)\big)_t~, \qquad\qquad \fn(x) ~=~ \argmin_{t=1,\dots,T}~ \big(V\gn(x)\big)_t~.
}
Since $V =\ones\ones^\top - I$, it is straightforward to see that the two methods coincide, namely $c_n(x) = \fn(x)$ for all $x\in\X$. 
\end{example}
\noindent We next describe a useful representation for linear estimators. 

\vspace{1em}

\noindent{\bf Implicit formulation for linear estimators.}
A possible approach to learn $\gn$ is by linear ridge regression, namely
\eqal{\label{eq:gn}
	\gn(x) ~=~ W_n ~ x, \qquad\qquad W_n ~=~ \argmin_{W\in\R^{T \times d}} ~ \frac{1}{n} \sum_{i=1}^n \|e_{y_i} - W x_i \|^2 + \la \|W\|_F^2~,
}
where $\la>0$ is a hyperparmeter and $\|\cdot\|_F^2$ denotes the squared Frobenius norm of a matrix (sum of the squared entries). The solution of \Cref{eq:gn} can be obtained in closed form as
\eqal{\label{eq:solution-linear-regression}
       W_n = Y^\top X ~ (X^\top X + n\la I)^{-1}
}
with $I\in\R^{d \times d}$ the identity matrix and $X\in\R^{n \times d}$ and $Y\in\R^{n \times T}$ the matrices with $i$-th row corresponding to $x_i$ and $e_{y_i}$ respectively. Plugging this solution in \Cref{eq:fn-in-terms-of-gn} leads to an explicit approach to obtain the estimator $\fn$. 
 We next discuss  a useful  observation that will be key to extend our discussion to $\Y$ and $\Z$ that are neither finite nor discrete. Specifically, we will show that it is possible to obtain a characterization for $\fn$ that is equivalent to \Cref{eq:fn-in-terms-of-gn} but in which $\gn$ {\em does not appear explicitly}. To see this, first notice that for any $x\in\X$ we can leverage the closed-form solution for the estimator $\gn$ to have
\eqal{\label{eq:gn-as-weighted-sum-finite}
	\gn(x) ~=~ W_n x ~=~ Y^\top \alpha(x) ~=~ \sum_{i=1}^n \alpha_i(x)~e_{y_i}~,
}
where the weights $\alpha:\X\to\R^n$ are such that
\eqals{\label{eq:alpha-linear}
	\alpha(x) ~=~ (\alpha_1(x),\dots,\alpha_n(x))^\top ~=~ [X (X^\top X + n\la I)^{-1}]~ x ~\in\R^n.
}
We now plug this characterization of $\gn$ in the definition of $\fn$ in \cref{eq:fn-in-terms-of-gn}. Thanks to the linearity of the sum and the matrix-vector product, we have 
\eqal{\label{eq:fn-alternative}
	\fn(x) ~=~  \argmin_{z\in\Z} ~ \sum_{i=1}^n \alpha_i(x)~ e_z^\top~ V ~ e_{y_i} ~=~  \argmin_{z\in\Z} ~ \sum_{i=1}^n \alpha_i(x)~ \loss(z,y_i),
}
where the last equality follows from the fact that the loss $\loss$ is identified by the matrix $V$ according to \Cref{eq:self-for-finite}. Intuitively, for any $i=1,\dots,n$, we can interpret each $\alpha_i(x)$ as a relevance score encouraging the candidate prediction $z\in\Z$ to be ``similar'' to the observed training label $y_i$, {\em according to $\loss(z,y_i)$}.

The key observation in \Cref{eq:fn-alternative} is that the estimator $\fn$ can be characterized exclusively in terms of the weights $\alpha$ and the observed labels $y_i$. This implies that the cardinality of the output and label spaces does not directly affect the proposed approach. In \Cref{sec:general-algorithm} we will leverage this observation to extend the same learning strategy to the case where $\Y$ and $\Z$ are not finite or discrete.

\vspace{1em}\noindent{\bf Extension to generic $\X$}. We conclude this section by observing that the construction of $\fn$ can be naturally extended to the case where $\X$ is a generic set. In particular, consider $k:\X\times\X\to\R$ a positive definite kernel \citep{aronszajn1950theory}. Then, according to standard practice from the kernel methods literature \citep[see e.g.][]{shawe2004kernel} we can derive a ``dual'' formulation for the relevance scores $\alpha$. In particular, given the input $(x_i)_{i=1}^n$ in training, \Cref{eq:alpha-dual} can be equivalently rewritten as
\eqal{\label{eq:alpha-dual}
	\alpha(x) ~=~ (\alpha_1(x),\dots,\alpha_n(x))^\top ~=~ (K + n\la I)^{-1} ~\msf{v}(x) ~ \in\R^n,
}
for any $x\in\X$, where $K\in\R^{n \times n}$ is the empirical kernel matrix with entries $K_{ij} = k(x_i,x_j)$ and $\msf{v}(x)\in\R^n$ is the evaluation vector, with entries $\msf{v}(x)_i = k(x,x_i)$, for any $i,j=1,\dots,n$. In the following, we will denote $\kappa^2 = \sup_{x\in\X}k(x,x)$. We will always assume $\kappa<+\infty$ (e.g. by using a normalized kernel or by requiring $\X$ to be a compact set). It is easy to see that this strategy corresponds indeed to learn the estimator $\gn$ by solving the empirical risk minimization problem in \Cref{eq:gn} over the reproducing kernel Hilbert space (RKHS) associated to $k$ \citep{aronszajn1950theory}. We discuss this in more detail in the following.

\subsection{General Structured Prediction: Beyond Finite Output Spaces}\label{sec:general-algorithm}

In this section,  we generalize the discussion of \Cref{sec:finite-Y}
to structured prediction problems where $\Y$ or $\Z$ are not necessarily finite (or discrete).
Also in this case, we  show how  a relevant geometry can be  defined by a corresponding
loss function.  Further we  extend the analysis of linearly parameterized estimators, and show how in this general setting the implicit formulation becomes essential.

%
%
%
%
%

\vspace{1em}\noindent{\bf \struct{}s.} The extension to non finite output spaces hinges 
on a key assumption on the loss that generalizes the observation of \Cref{eq:self-for-finite}. We refer to functions satisfying this condition as admitting an {\em \struct{}}.
\begin{definition}[\structshort{}]\label{def:self}
A continuous map $\loss:\Z\times\Y\to\R$ is said to admit an {\em \struct{} (\structshort{})} if there exists a separable Hilbert space $\hh$ and two measurable bounded maps $\zmap:\Z\to\hh$ and $\ymap:\Y\to\hh$, such that for any $z\in\Z$ and $y\in\Y$ we have
\eqals{\label{eq:self}
	\loss(z,y) ~=~ \big\langle~ \zmap(z)~,~\ymap(y)~\big\rangle_\hh,
}
and $\|\ymap(y)\|_\hh\leq1$. Additionally, we define $\closs =\sup_{z\in\Z} ~ \|\zmap(z)\|_\hh$.
\end{definition}
The definition of \structshort{} is similar to the characterization of positive definite kernels in terms of feature maps (and indeed it recovers such definition when $\Z = \Y$ and $\zmap=\varphi$), but is significantly more general in that it allows also to consider functions that are not positive definite (for example, distances) or even not symmetric (such as divergences). It is clear that any loss function on finite sets $\Z$ and $\Y$ admits an \structshort{}. For instance, in the setting of \Cref{sec:finite-Y} it is sufficient to choose $\hh=\R^T$, with maps $\ymap(y) = e_y$ and $\zmap(z) = V^\top e_z$, to recover the \structshort{} definition. Note that the requirement $\sup_{y\in\Y}\|\ymap(y)\|\leq1$ is introduced to simplify the notation but does not limit the generality of the assumption (see \cref{lem:self-equivalence-not-normalized} in the Appendix for more details). We note that in \cite{ciliberto2016}, a variant of the \structshort{} property was introduced (see Asm $1$ in such paper). In this work we opted for \cref{def:self} since it allows for a more clear notation in the following. However, in \cref{prop:equivalence-self-old} in the Appendix we provide more details on this point and show that the two notions are actually equivalent.

While the definition of \structshort{} is  abstract, it is satisfied by many loss functions often used in structured prediction applications and more generally in machine learning. In \Cref{sec:self-sufficient-conditions} we present a wide range of sufficient conditions to guarantee a function to admit an \structshort{}, which are easier to interpret and verify in practice.

Under the assumption that $\loss$ admits an \structshort{}, we can easily retrace the reasoning in \Cref{sec:finite-Y} to derive the structured prediction estimator. In particular, we have the following result to which we refer to as {\em Fisher consistency}, a term borrowed from the literature of surrogate methods (see discussion in \Cref{sec:surrogate-frameworks}).
\begin{restatable}[Fisher Consistency]{lemma}{LFstarCharacterization}\label{lem:fstar-in-terms-of-gstar}
Let $\Z$ be compact, $\loss:\Z\times\Y\to\R$ admit an \structshort{} and let $\fstar:\X\to\Z$ be the solution of \Cref{eq:expected-risk}. Then,
\eqal{\label{eq:fstar-in-terms-of-gstar-full}
	\fstar(x) ~=~ \argmin_{z\in\Z}~ \scal{~\zmap(z)~}{~\gstar(x)~}_\hh, \qquad\qquad 
	\gstar(x) = \int_{\Y} \ymap(y)~d\rho(y|x)
}
almost surely with respect to $\rhox$. Moreover, $\gstar:\X\to\hh$ is the minimizer of 
\eqal{\label{eq:surrogate-risk}
	\rr(g) ~=~ \int_{\Y\times\X} \|\ymap(y) - g(x)\|_\hh^2~d\rho(x,y).
}
\end{restatable}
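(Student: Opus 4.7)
\textbf{Proof plan for \cref{lem:fstar-in-terms-of-gstar}.}

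The plan is to treat the two statements separately, reducing both to elementary manipulations that rely on the \structshort{} property only through the identity $\loss(z,y)=\scal{\zmap(z)}{\ymap(y)}_\hh$ and the boundedness/measurability of $\ymap$. First, I would justify that $\gstar(x) = \int_\Y \ymap(y)\,d\rho(y|x)$ is well defined as a Bochner integral in $\hh$ for $\rhox$-a.e.\ $x$: indeed, $\ymap$ is measurable and $\|\ymap(y)\|_\hh \le 1$, so for every $x$ the map $y\mapsto \ymap(y)$ is Bochner $\rho(\cdot|x)$-integrable, and by the regular conditional probability setup used in the paper, $x\mapsto \gstar(x)$ is itself measurable.

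For the first identity, the starting point is the point-wise characterization of $\fstar$ recalled in \cref{eq:fstar},
\[
    \fstar(x) = \argmin_{z\in\Z} \int_\Y \loss(z,y)\,d\rho(y|x),
\]
which is valid $\rhox$-a.s.\ by the Berge/Aumann argument cited above (this is what guarantees a measurable selector). Substituting the \structshort{} decomposition and using the continuity of the inner product together with the Bochner property $\scal{v}{\int u\,d\mu}_\hh = \int \scal{v}{u}_\hh\,d\mu$ (valid under our boundedness assumptions), the integrand rewrites as
\[
    \int_\Y \scal{\zmap(z)}{\ymap(y)}_\hh\,d\rho(y|x) = \scal{\zmap(z)}{\gstar(x)}_\hh,
\]
so the two minimization problems have identical objectives and hence identical argmins. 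Existence of a minimizer in $\Z$ follows from compactness of $\Z$ and the continuity of $z\mapsto \int \loss(z,y)\,d\rho(y|x)$, the latter being a consequence of the continuity of $\loss(\cdot,y)$ (part of the \structshort{} definition) combined with dominated convergence using the bound $|\loss(z,y)|\le \closs$.

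For the second part, I would perform the standard bias--variance decomposition in $\hh$: for any measurable $g:\X\to\hh$ and any $x\in\X$,
\[
    \int_\Y \|\ymap(y)-g(x)\|_\hh^2\,d\rho(y|x) = \int_\Y \|\ymap(y)-\gstar(x)\|_\hh^2\,d\rho(y|x) + \|g(x)-\gstar(x)\|_\hh^2,
\]
which follows by expanding the square and using $\int_\Y (\ymap(y)-\gstar(x))\,d\rho(y|x)=0$ (definition of $\gstar$) to kill the cross term. Integrating against $\rhox$ gives $\rr(g)=\rr(\gstar)+\int \|g(x)-\gstar(x)\|_\hh^2\,d\rhox(x)$, from which $\gstar$ is the (a.s.\ unique) minimizer.

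The main technical point is really the first one: ensuring that the swap of inner product and conditional integral is legitimate, and that the resulting $\gstar$ is a measurable $\hh$-valued function so that the assertion of \cref{eq:fstar-in-terms-of-gstar-full} makes sense $\rhox$-a.s. Everything else is bookkeeping built on top of these Bochner-integral facts and the characterization \cref{eq:fstar}.
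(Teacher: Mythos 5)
Your proposal is correct and follows essentially the same route as the paper: the paper likewise establishes measurability and Bochner-integrability of $\gstar$, swaps the inner product with the conditional integral to identify the two pointwise objectives, invokes the Berge/Aumann measurable-selection argument for the argmin over the compact $\Z$, and proves the second claim by the same expansion of the square (the paper states it as $\rr(g)-\rr(\gstar)=\int\|g(x)-\gstar(x)\|_\hh^2\,d\rhox(x)$, which is your bias--variance identity integrated over $\X$). The only cosmetic difference is that the paper packages these steps into two auxiliary lemmas on the existence of measurable minimizers of $\E$ and $\rr$ before combining them.
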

The result above provides a characterization of $\fstar$ in terms of the conditional expectation of $\ymap(y)$ with respect to $x\in\X$. It generalizes \Cref{eq:fstar-in-terms-of-gstar} to the case where $\Y$ and $\Z$ are not finite. 
Analogously to \Cref{eq:fstar}, the proof of \Cref{lem:fstar-in-terms-of-gstar} is reported in \Cref{sec:app-framework} and leverages Berge's Maximum theorem. In particular, the compactness of $\Z$ is a technical requirement to guarantee $\fstar(x)$ to be well-defined. 
Analogously to the derivation in \Cref{sec:finite-Y}, the result of \Cref{lem:fstar-in-terms-of-gstar} motivates us to design a structured prediction estimator by first learning a $\gn$ 
to approximate $\gstar$
via least squares over a space $\mathcal{G}$ of functions $g:\X\to\hh$
$$
\gn ~=~ \argmin_{g\in\mathcal{G}}~ \frac{1}{n}\sum_{i=1}^n \|\ymap(y_i) - g(x_i)\|_\hh^2,
$$
and then plug $\gn$ in \Cref{eq:fstar-in-terms-of-gstar-full} to obtain an approximation $\fn$ of $\fstar$ characterized by
$$
\fn(x) ~=~ \argmin_{z\in\Z}~ \scal{~\zmap(z)~}{~\gn(x)~}_\hh,
$$
for all $x\in \X$. Learning $\gn$ corresponds to solving a vector-valued regression problem on a (possibly) infinite dimensional output space $\hh$ \citep{caponnetto2007}.
We next discuss in detail the case of linearly parameterized estimators.

\vspace{1em}\noindent{\bf Linearly parameterized estimators and implicit formulation.} For simplicity, instead of directly minimizing the emprical square loss over $\hh$, we consider again the ridge regression estimator $\gn:\X\to\hh$ defined as the minimizer of the regularized empirical risk
\eqal{\label{eq:gn-estimator-full}
	\gn ~=~\argmin_{g\in\gg} ~ \frac{1}{n}\sum_{i=1}^n \|\ymap(y_i) - g(x_i)\|_\hh^2 + \lambda \|g\|_\gg^2,
}
over a normed space $\gg$ of vector-valued functions from $\X$ to $\hh$. A viable choice for $\gg$ is, given a kernel $k:\X\times\X\to\R$ with associated {\em reproducing kernel Hilbert space (RKHS)} $\ff$, to consider $\gg = \hh\otimes\ff$, which corresponds  to the RKHS of vector-valued functions (see \citep{micchelli2004,alvarez2012kernels}) with operator-valued kernel $\Gamma(x,x') = k(x,x') I_\hh $ and $I_\hh:\hh\to\hh$ the identity operator on $\hh$. This approach is a direct generalization of the strategy introduced in the finite setting. Indeed, we have already observed that, when $\Y=\Z$ is a finite set, we can choose $\hh=\R^T$ to satisfy the \structshort{} definition. Moreover, for the linear kernel $k(x,x') = \scal{x}{x'}$ on $\X = \R^d$, the associated RKHS $\ff$ is isometric to $\R^d$. Therefore, we have $\hh\otimes\ff \cong \R^T \otimes \R^d \cong \R^{T \times d}$. Consequently, any $\gn\in\hh\otimes\ff$ can be parametrized by a matrix $W_n\in\R^{T \times d}$ and the ERM problem in \Cref{eq:gn-estimator-full} becomes equivalent to the one in \Cref{eq:gn}.

The solution of the ridge regression problem can be obtained in closed form. In particular, it is easy to prove that analogously to \Cref{eq:gn-as-weighted-sum-finite} in the finite setting, for any $x\in\X$ we have
\eqal{\label{eq:gn-as-weighted-sum}
	\gn(x) ~=~ \sum_{i=1}^n \alpha_i(x) ~\ymap(y_i),
}
with the weights $\alpha(x) = (K + \lambda n I)^{-1}~ \msf{v}(x)$ as in \Cref{eq:alpha-dual}. By replacing $\gn$ to $\gstar$ in \Cref{lem:fstar-in-terms-of-gstar} we recover the estimator $\fn$ of \Cref{eq:estimator}, as desired. As we will discuss in more detail in \Cref{sec:kde}, this strategy is related to the {\em Kernel Dependency Estimator (KDE)} originally proposed in \citep{weston2002} for kernel-based loss functions. 

The above reasoning can be applied to {\em any} function $\gn$ expressed as a linear combination of (embedded) output points $\ymap(y_i)$. The following result summarizes this property, which allows to consider a family of novel estimators $\fn$ paramterized by the weighting function $\alpha$. 

\begin{restatable}{lemma}{PLossTrick}\label{prop:derivation}
Let $\loss:\Z\times\Y\to\R$ admit an \structshort{}, $(y_i)_{i=1}^n$ a set of points in $\Y$ and $\alpha:\X\to\R^n$ a weighting function. Let $\gn:\X\to\hh$ be such that $\gn(x) = \sum_{i=1}^n \alpha_i(x)\ymap(y_i)$ for any $x\in\X$. Then, the function $\fn:\X\to\Z$ such that $\forall x\in\X$
\begin{equation}\label{eq:estimator}
	\fn(x) ~=~ \argmin_{z\in\Z} ~ \scal{~\zmap(z)~}{~\gn(x)~}_\hh = \argmin_{z\in\Z}~ \sum_{i=1}^n \alpha_i(x)~\loss(z,y_i).
\end{equation}
\end{restatable}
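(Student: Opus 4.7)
The plan is to prove the two expressions for $\fn(x)$ coincide by a direct computation that unfolds the inner product using the \structshort{} representation of the loss. Fix $x\in\X$ and consider the objective function $F_x(z) = \scal{\zmap(z)}{\gn(x)}_\hh$. Substituting the assumed form $\gn(x) = \sum_{i=1}^n \alpha_i(x)\ymap(y_i)$ and invoking linearity of the inner product in its second argument yields
\eqals{
F_x(z) = \scal{\zmap(z)}{\textstyle\sum_{i=1}^n \alpha_i(x)\,\ymap(y_i)}_\hh = \sum_{i=1}^n \alpha_i(x)\,\scal{\zmap(z)}{\ymap(y_i)}_\hh.
}
Applying the \structshort{} identity from \cref{def:self} to each term, $\scal{\zmap(z)}{\ymap(y_i)}_\hh = \loss(z,y_i)$, we immediately obtain $F_x(z) = \sum_{i=1}^n \alpha_i(x)\,\loss(z,y_i)$. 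Taking $\argmin_{z\in\Z}$ on both sides yields the claimed equality.

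The only care needed is to make sure the argmin is well-defined, since otherwise the equality between the two sides is vacuous. For this I would appeal to the same hypotheses used to justify \cref{lem:fstar-in-terms-of-gstar}: since $\loss$ is continuous (by the \structshort{} assumption), the map $z\mapsto F_x(z)$ is continuous on $\Z$; if we assume $\Z$ is compact (as in the statement of \cref{lem:fstar-in-terms-of-gstar}), the argmin is attained. Equivalently, one can observe that $z\mapsto\zmap(z)$ gives rise to a continuous map because $\loss(\cdot,y)$ is continuous for each $y$; composition with a fixed element $\gn(x)\in\hh$ through the inner product preserves continuity, so attaining the minimum over compact $\Z$ is immediate.

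There is no genuinely hard step here: the result is essentially a tautological consequence of (i) the linearity of the inner product and (ii) the defining property of an \structshort{}. The only subtlety is bookkeeping — making sure the linear combination can be moved out of the inner product, which is justified by the fact that the sum is finite so no exchange-of-limit issue arises. In particular, the derivation works regardless of the source of the weights $\alpha_i(x)$, which is exactly the flexibility that will later allow us to consider estimators beyond the kernel ridge regression case.
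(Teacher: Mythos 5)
Your proof is correct and follows essentially the same route as the paper's: expand $\scal{\zmap(z)}{\gn(x)}_\hh$ by linearity of the inner product over the finite sum and apply the \structshort{} identity termwise. The extra remarks on attainment of the argmin are a reasonable addition but not part of the paper's argument, which treats well-definedness of the minimizer separately (via the compactness of $\Z$ and the measurable-selection discussion in the appendix).
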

From the observation above, we see that the reasoning in \Cref{sec:finite-Y} can indeed be generalized to the setting where $\Z$ and $\Y$ are not finite and $\loss$ admits an \structshort{}. For any $x\in\X$, the weights $\alpha(x)$ are learned from training data according to \Cref{eq:alpha-dual} and only the loss function appears in the form of the estimator. We expand on this in the following remark.
\begin{remark}[``Loss Trick'']
In practice, learning and evaluating $\fn$ does not require explicit knowledge of the space $\hh$ or the embeddings $\zmap$ and $\ymap$ (see \Cref{eq:estimator}), which are implicitly encoded within the definition of \structshort{} and are only required for theoretical purposes (derivation and characterization of generalization properties of $\fn$ as discussed in \cref{sec:theory}). This effect was originally referred to as ``loss trick'' \citep{ciliberto2016} in analogy to the well-known ``kernel trick''~\citep{scholkopf2002}. 
\end{remark}

\subsection{Additional \structshort{}-induced Algorithms and Estimators}\label{sec:alternative-alpha}

In this section we discuss alternative approaches to learn the weighting functions $\alpha:\X\to\R^n$ defining the estimator $\fn$. These strategies are derived by replacing kernel ridge-regression with a different approximation of $\gstar$ that still satisfies the hypotheses of \Cref{prop:derivation}, namely can be written as the linear combination of training outputs. As already observed in the literature of standard regression settings \citep[see e.g. ][and references therein]{rosasco2005spectral}, these alternative approaches can offer significant computational advantages over kernel ridge regression while guaranteeing the same generalization performance from the statistical standpoint.


\vspace{1em}\noindent{\bf ``Exact'' Kernel methods.} A wide family of algorithms that can be used to estimate $\gstar$ are based on {\em spectral filtering} regularization strategies \citep{rosasco2005spectral,bauer2007regularization}. In particular we consider:
\begin{itemize}

\item {\bf $L^2$-Boosting (L2B).} By considering $\gn$ the $t$-th iterate of gradient descent of the (non-regularized) empirical risk minimization problem in \Cref{eq:gn-estimator-full}, we have 
\eqal{\label{eq:kboosting}
	\alpha(x) = C_t ~\msf{v}(x) \qquad\qquad \textrm{with} \qquad\qquad C_{t} = (I - \nu/n ~ K) ~ C_{t-1} +\nu/n~ I,
}
with $C_t\in\R^{n \times n}$ defined recursively starting from any $C_0\in\R^{n \times n}$ and $\nu/n$ the gradient descent step size with $\nu>0$. We recall that  $\msf{v}(x)\in\R^n$ denotes the evaluation vector in $x$, with entries $\msf{v}(x)_i = k(x,x_i)$ for any $i=1,\dots,n$. The number of steps $t\in\N$ acts as regularization parameter. Accelerated and stochastic versions can be considered.


\item {\bf Principal Component Regression (PCR).} Take $\gn$ to be the estimator obtained by filtering out the eigenvalues of the kernel matrix $K$ below a threshold $\la>0$ and inverting the eigenvalues that are above. We have
\eqal{\label{eq:kpcr}
       \alpha(x) = U \Sigma_{\la}^\dagger U^\top \msf{v}(x),
}
where $K = U \Sigma U^\top$ is the singular value decomposition of $K$ and $\Sigma_{\la}^\dagger$ is the pseudoinverse of the matrix corresponding to $\Sigma$ with all eigenvalues smaller than $\la$ set to $0$.

\end{itemize}

\noindent{\bf Random Projections.} Methods leveraging random projections achieve optimal generalization performance while being significantly more efficient computationally.

\begin{itemize}
       \item {\bf Random Features (RF).} Let $(\Omega, \pi)$ be a probability space and $\zeta: \X \times \Omega \to \R$ be a map such that $k(x,x') = \int_\Omega \zeta(x,\omega) \zeta(x',\omega) d\pi(\omega)$ \citep{rahimi2008random} \footnote{E.g.  for the Gaussian kernel and $\X = \R^d$, $d \in \N$, we have $\Omega = \R^d \times [0,1]$, and for $(w,b) =: \omega \in \Omega$, $\pi((w,b)) = {\cal N}(w)U(b)$, with ${\cal N}(\cdot)$ standard normal distribution $U(\cdot)$ uniform distribution, and $\zeta(x,(w,b)) = \cos(w^\top x + b)$ \citep[see][for more details]{rahimi2008random}.}.
       Let $M \in \N$ and $\omega_1,\dots,\omega_M$ be  independently sampled from $\pi$. Denote by $\hat{\msf{v}}_M: \X \to \R^M$, the map
       \eqals{
       \hat{\msf{v}}_M(x) = \frac{1}{\sqrt{M}}(\zeta(x,\omega_1),\dots,\zeta(x,\omega_M)).
       }
       By definition $\hat{\msf{v}}_M(x)^\top \hat{\msf{v}}_M(x')$ is a discretization of the integral defining $k(x,x')$. The scores $\alpha$ are learned according to this new feature map 
       \eqal{\label{eq:randf}
              \alpha(x) ~=~ W ~ \hat{\msf{v}}_M(x), \qquad\qquad W = Q_M (Q_M^\top Q_M + n\la I)^{-1},
       }
       with $Q_M \in \R^{n \times M}$, $Q_M = (\hat{\msf{v}}_M(x_1), \dots, \hat{\msf{v}}_M(x_n))^\top$.
       This approach is significantly faster than ridge-regression when $M \ll n$. 


       \item {\bf Nystrom Approximation (NY).} Sample $M\leq n$ points $\widetilde{x}_1,\dots, \widetilde{x}_M$ from the input dataset. Denote $K_{MM}\in\R^{M \times M}$ be the matrix with $(K_{MM})_{i,j} = k(\widetilde{x}_i, \widetilde{x}_j)$ and $K_{nM}\in\R^{n \times M}$ the matrix with elements $(K_{nM})_{ij} = k(x_i, \widetilde{x}_j)$  \citep[see][]{smola2000sparse}. The scores $\alpha$ are defined as
       \eqals{\label{eq:nystrom}
              \alpha(x) ~=~ W ~ \widetilde{\msf v}_M(x), \qquad\qquad W = K_{nM}~(K_{nM}^\top K_{nM} + n \la K_{MM})^{\dagger},
       }
   with $\widetilde{\msf v}_M(x) = (k(\widetilde{x}_1,x),\dots, k(\widetilde{x}_M, x)) \in \R^{M}$, for any $x \in \X$. These operations are significantly faster than solving ridge-regression when $M \ll n$.

\end{itemize}

\noindent{\bf Nadaraya-Watson (NW).} $\gn$ can be obtained via the Nadardaya-Watson (NW) strategy \citep{nadaraya1964estimating}. In this case we have 
\eqals{
       \alpha(x) ~=~ \frac{1}{\ones^\top ~\msf{v}(x)}~\msf{v}(x),
}
resulting in the estimator 
\eqal{
	\fn(x) ~=~ \argmin_{z\in\Z}~ \sum_{i=1}^n ~\frac{k(x,x_i)}{\sum_{j=1}^n k(x_j,x_i)}\loss(z,y_i)
}

\vspace{1em}\noindent Computation of the NW estimator does not involve the kernel matrix $K$. This can be beneficial in large-scale scenarios where the kernel matrix can be large. However, the NW estimator is often less adaptive than ridge-regression to the smoothness properties of the learning problem \citep{gyorfi2006distribution}. As a consequence, the learning rates of NW are usually slower than kernel ridge regression in non-worst-case settings.



\vspace{1em}\noindent{\bf Nearest Neighbors (NN).} Given a measure of similarity on the input set (e.g. a kernel), for any test point $x\in\X$, the Nearest Neighbor (NN) estimator \citep{friedman2001elements} corresponds to the average, on the space $\hh$ of output training points $\ymap(y_i)$ whose corresponding inputs $x_i$ are among the first $\msf q$ most similar (or closest) to $x$. This corresponds to return the {\em binary} scores $\alpha(x)\in\{0,1\}^n$
\eqals{
       \alpha(x)_i = \left\{\begin{array}{cl} 1 & \textrm{if}~ x_i~\textrm{is a}~ \msf q\textrm{-nearest-neighbor} \\ 0 & \textrm{otherwise} \end{array}\right..
}

\vspace{1em}\noindent NN does not rely on a training phase (except for a possible pre-processing such as kd-trees to allow for a faster search of neighbors at test time). Interestingly, at test time, the cost of the optimization over $\Z$ in \Cref{eq:estimator} depends also on the number $\msf q$ of neighbors selected, which is a hyperparamter of the estimator.

\vspace{1em}\noindent So far we have introduced a novel family of estimators for structured prediction and discussed how they can be derived from the notion of \struct{}. We are left with two critical questions that will be addressed in the following: on one hand we need to characterize how the approximation of $\gn$ can lead to good estimations of $\fstar$. A second, more concrete question is whether the \structshort{} definition is sufficiently flexible to encompass relevant structured prediction problems or, in other words, which functions admit an \structshort{}. We will address the first question in \Cref{sec:theory} and the second one in \Cref{sec:self-sufficient-conditions}. Before doing so, in \cref{sec:previous-literature} we draw some connections with previous work. These observations will prove useful to better situate our framework within the literature on structured prediction. We conclude this section with a comment on evaluating the \structshort{} in practice.

\subsection{Evaluating the \structshort{} Estimator}

According to \cref{eq:estimator}, evaluating $\fn$ on a test point $x\in\X$ consists in solving an optimization problem over the output space $\Z$. This design of the test phase is standard in structured prediction settings and in particular for likelihood estimation methods (see \Cref{sec:likelihood-estimation}), where a corresponding optimization protocol is derived on a case-by-case basis depending on the loss and the space $\Z$,  \citep[see][and references therein]{nowozin2011structured}. The objective functional in our setting allows also to suggest a general stochastic meta-algorithm. In particular, \cref{eq:estimator} can interpreted as the problem of minimizing an expectation
\eqal{\label{eq:evaluating-as-sgd}
\fn(x) = \argmin_{z\in\Z} ~ \mathbb{E}_{i\sim\alpha(x)}~ h_i(z,x) \qquad \textrm{with} \qquad h_i(z,x) = \frac{\textrm{sign}(\alpha_i(x))}{\msf a(x)}~ \loss(z,y_i)
}
where $i\in\{1,\dots,n\}$ is a random variable sampled according to the relevance weights $\alpha_i(x)$ and $\msf a(x) = \sum_{i=1}^n \alpha_i(x)$. Thus, when $\loss$ is (sub)differentiable in the first variable, problems of the form of \cref{eq:evaluating-as-sgd} can be directly addressed addressed by stochastic gradient methods (SGM).

\begin{remark}[On the Complexity of Inference]
Given the scores $\alpha_i(x)$, solving the inference problem to obtain $\fn(x)$ requires solving a possibly hard optimization problem. However, in most settings, this approach can be more favorable than ERM. Indeed, consider for simplicity the case where both $\X$ and $\Z$ are finite spaces with cardinality $|\X|$ and $|\Z|$ respectively. ERM would require solving an optimization problem on the space of all functions from $\X$ to $\Z$, which has cardinality $|\Z|^{|\X|}$. On the other hand, the \structshort{} approach acts by first learning the scores $\alpha_i(x)$, which is done efficiently by solving a linear system and then finding the best output $f(x)$ over the space $\Z$. This amounts to solving an optimization over a space of cardinality $|\Z|$, which is significantly smaller than $|\Z|^{|\X|}$.
\end{remark}

\section{Connections with Previous Work}\label{sec:previous-literature}

In this section we highlight some relevant connections between our framework and previous literature. As mentioned in \Cref{sec:setting} we show that, although starting from a different perspective, our method can be interpreted as a synthesis of the two main structured prediction strategies considered in the literature, namely {\em surrogate frameworks} and {\em likelihood estimation methods}. In this sense, our approach represents the best of both worlds, since it is theoretically sound (as we will show in \Cref{sec:theory}) but it is also applicable to a large family of learning problems. We also draw a connection with the {\em conditional mean embeddings} literature, which will offer relevant insights on the theoretical analysis of \Cref{sec:theory}.

\subsection{Surrogate Frameworks}\label{sec:surrogate-frameworks}
Surrogate approaches are designed to address specific structured prediction problems such as classification \citep{bartlett2006,mroueh2012}, multi-labeling \citep{gao2013}, ranking \citep{duchi2010} or quantile regression \citep{steinwart2011} to name a few. The core idea underlying these methods is to deal with the structure of the problem by: $1)$ finding an embedding (or {\em encoding}) of the output variables into a linear space where, $2)$ a {\em surrogate} learning problem can be solved efficiently and finally, $3)$ map back the surrogate solution by means of a suitable {\em decoding}. 

More formally, given a training dataset $(x_i,y_i)_{i=1}^n$ , a surrogate approach consists in the following three steps:
\begin{enumerate}
\item {\bf Encoding.} Choose a coding $\coding:\Y\to\hh$ into a surrogate space $\hh$. \\
\phantom{{\bf Encoding.}}~Map $(x_i,y_i)_{i=1}^n$ to the surrogate dataset $(x_i,\coding(y_i))_{i=1}^n$.
\item {\bf Learning.} Define a surrogate loss $\mathcal{L}:\H\times\H\to\R$.\\
\phantom{{\bf Learning.}}~Learn $g_n:\X\to\hh$ minimizing $\mathcal{L}$ on $(x_i,\coding(y_i))_{i=1}^n$.
\item {\bf Decoding.} Choose a decoding $\decoding:\hh\to\Z$ and define $f_n = d \circ g_n:\X\to\Y$.
\end{enumerate}
A prototypical example of this strategy is represented by binary classification. 
\begin{example}[Binary Classification]\label{ex:surrogate-binary-classification}
In binary classification the goal is to learn a binary-valued function $f:\X\to\Z = \{-1,1\}$. The prototypical approach to address this problem is to consider $\coding:\Y = \{-1,1\}\to\hh=\R$ the identity map and then learn $g_n:\X\to\R$ by minimizing a suitable loss $\mathcal{L}:\R\times\R\to\R$ (e.g. least-squares, hinge, logistic, etc.). The final classifier is then obtained as $f_n(x) = \sign(g_n(x))$, with decoding $\decoding = \sign:\R\to\{-1,1\}$.
\end{example}
Surrogate frameworks critically hinge on identifying a suitable candidate for the loss function $\mathcal{L}:\hh\times\hh\to\R$. Indeed, while on one hand $\mathcal{L}$ should allow to compute the estimator $\gn:\X\to\hh$  efficiently, on the other hand the surrogate learning process induced by $\mathcal{L}$ needs to be related to the original structured prediction problem. The requirement for efficiency is typically satisfied by choosing $\mathcal{L}$ to be a convex loss  (e.g. least-squares, hinge or logistic in binary classification, see \Cref{ex:surrogate-binary-classification}). The connection with structured prediction is investigated by studying the ideal learning problem induced by the {\em surrogate risk}
\eqal{
       \rr(g) = \int \mathcal{L}(g(x),\coding(y))~d\rho(x,y),
}
with $g:\X\to\hh$. Intuitively, for a ``good'' surrogate framework, the global minimizer $\gstar:\X\to\hh$ of the risk $\rr$ should allow to recover the original solution $\fstar:\X\to\Z$ by means of the decoding, for instance $\fstar = \decoding\circ\gstar$. Moreover, ideally, as the number $n$ of training points increases and the estimator $\gn$ provides a better approximation to $\gstar$, we would like the predictor $\fn$ to converge to $\fstar$ as well.

Formalizing the observations above, the following two conditions are typically required by Surrogate Frameworks:
\begin{itemize}
       \item {\bf Fisher Consistency.} $\E(\fstar) = \E(\decoding \circ \gstar)$,
       \item {\bf Comparison Inequality.} $\E(\decoding \circ g) - \E(\fstar) \leq \gamma(\rr(g) - \rr(\gstar))$ for any $g:\X\to\hh$,\\ with $\gamma:\R_+\to\R_+$ continuous, non-decreasing and such that $\gamma(0)=0$.
\end{itemize}

\vspace{1em}\noindent The Fisher consistency establishes the validity of the surrogate framework in terms of the original problem. It guarantees that we can always recover the ideal $\fstar$ from the surrogate solution $\gstar$. The comparison inequality allows to automatically extend any result characterizing the learning rates of the surrogate estimator to obtain explicit excess risk bounds for the structured prediction one (possibly accelerated or slowed down by a factor depending on the function $\gamma$). We refer to \cite{mroueh2012} and references therein for concrete examples of this strategy in a variety of structured prediction settings.

\vspace{1em}\noindent{\bf Surrogate Frameworks and \structshort{}.} While surrogate methods are typically designed on a case-by-case basis for each learning problem, the structured prediction framework proposed in this work can be interpreted as a general form of surrogate approach. In particular, it is natural to choose the encoding map as the \structshort{} feature map on the label space $\Y$, namely $\coding = \ymap$. Moreover, we have observed how any \structshort{} loss function is directly associated to a suitable surrogate output space $\hh$ via the \structshort{} definition itself. In particular, in \Cref{lem:fstar-in-terms-of-gstar} we have shown how the corresponding structured prediction problem is naturally associated to the surrogate risk $\rr$ with surrogate loss $\mathcal{L}(\eta_1,\eta_2) = \|\eta_1 - \eta_2\|_\hh^2$ the square loss on $\hh$. It follows that we can choose as decoding the map $\decoding:\hh\to\Z$ such that
\eqals{\label{eq:decoding}
       \decoding(\eta) ~ = ~ \argmin_{z\in\Z}~ \scal{~\zmap(z)~}{~\eta~}_\hh
}
for any $\eta\in\hh$. Indeed, with the notation of \Cref{sec:framework}, we have $\fn = \decoding \circ \gn$ according to \Cref{eq:estimator} that recovers our structured prediction estimators. Note in particular that \Cref{lem:fstar-in-terms-of-gstar} shows that our framework is Fisher consistent, by proving that indeed $\fstar = \decoding \circ \gstar$ as required. 

The connection with surrogate methods will be completed by our theoretical analysis of \Cref{sec:theory}. Indeed, analogously to surrogate approaches, our proof strategy hinges on proving a comparison inequality, which allows to study the generalization properties on the surrogate problem to control the excess risk of the structured prediction estimator. In particular, in \Cref{prop:comparison-inequality} we provide a comparison inequality for our framework with $\gamma$ corresponding to the square root function.



\subsection{Kernel Dependency Estimation}\label{sec:kde}

Kernel Dependency Estimation (KDE) was originally proposed in \citep{weston2002,cortes2005} to address general structured prediction problems. The core strategy of KDE methods is to substitute the original loss function with an alternative one, which might offer significant computational advantages. In KDE settings, the new loss corresponds to the canonical distance induced by a reproducing kernel on the output space. This choice is motivated by the intuition that the solution of the two problems should be close to each other, {\em provided that the original loss behaves ``coherently'' with the metric induced by the kernel between output predictions}. 

More formally, let assume $\Y = \Z$ for simplicity and consider $h:\Y\times\Y\to\R$ a reproducing kernel on the output space. Let $\hh$ be the RKHS associated to $h$, with feature map $\ymap:\Y\to\hh$, namely $h(y,y') = \scal{\ymap(y)}{\ymap(y')}_\hh$ for any $y,y'\in\Y$. KDE methods address the problem of learning a function $f:\X\to\Y$ by minimizing the least-squares loss on $\hh$, 
\eqals{
       \loss(f(x),y) = \|\ymap(f(x)) - \ymap(y)\|_\hh^2,
}
for $x\in\X$ and $y'\in\Y$. Given a dataset $(x_i,y_i)_{i=1}^n$, learning $\fn$ directly might be challenging because of the structure of $\Y$. Alternatively, it might be possible to leverage the linear structure of $\hh$ to learn a function $\gn:\X\to\hh$. Whenever a test point $x\in\X$ is provided, the KDE prediction $\fn(x)$ is then obtained by finding the $y\in\Y$ for which $\ymap(y)$ is closest to $\gn(x)$ according to the canonical distance on $\hh$. This phase, akin to the decoding of surrogate methods, is referred to as the {\em preimage problem} in KDE settings. 

Interestingly, when $\gn(x) = \sum_{i=1} \alpha_i(x)\ymap(y_i)$, given the score functions $\alpha$, this problem can be cast as the optimization
\eqal{\label{eq:kde-decoding}
       \fn(x) & ~=~ \argmin_{y\in\Y}~\|\ymap(y) - \gn(x)\|_\hh^2 ~=~ \argmin_{y\in\Y} ~h(y,y) - 2\sum_{i=1}^n\alpha_i(x)h(y,y_i).
}
Indeed, thanks to the reproducing property of the kernel $h$ we have 
\eqals{
       \|\ymap(y) - \ymap(y')\|_\hh^2 ~=~ h(y,y) - 2h(y,y') + h(y',y'),
}

\vspace{1em}\noindent for every $y,y'\in\Y$. Several approaches to KDE have been proposed, considering different strategies based on estimating the scores $\alpha$ via variants of a Kernel Ridge Regression strategy akin to the one described in \Cref{sec:framework} \citep{weston2002,cortes2005}. In particular, extension of KDE were then considered in \citep{kadri2013generalized,brouard2016input} leveraging ideas from the literature on vector-valued reproducing kernel Hilbert spaces.

\vspace{1em}\noindent{\bf Kernel Dependency Estimation and \structshort{}.} There is a clear connection between KDE approaches and the \structshort{} framework considered in this work. Indeed, as we will show in \Cref{sec:self-sufficient-conditions} (\Cref{thm:rkhs-and-self}), the loss function considered by KDE algorithms satisfies the \structshort{} definition. Moreover, if the output kernel $h$ is such that $h(y,y) \equiv \eta$ for any $y\in\Y$, with $\eta>0$ a constant, then the \structshort{} estimator in \Cref{eq:estimator} corresponds exactly to KDE (assuming same scores $\alpha$). The latter observation implies that the theoretical analysis reported in \Cref{sec:theory} applies also to KDE. Therefore, a further byproduct of our work is the theoretical analysis of KDE strategies, which to our knowledge is a novel contribution on its own.

We conclude this section by highlighting two critical differences between our framework and KDE methods:
\begin{itemize}
       \item KDE was designed to address structured prediction problems by substituting the original structured loss with the least-squares induced by a kernel on the output. There is no guarantee in general that the KDE estimator is in any way solving the structured prediction. In this sense KDE could be interpreted as a form of surrogate method for which the surrogate problem does not satisfy neither Fisher consistency nor Comparison Inequality. 

       \item If the condition $h(y,y)\equiv\eta$ does not hold, the KDE and \structshort{} estimators {\em do not coincide}. This means that there is no guarantee that the KDE approaches will enjoy the same generalization properties of \structshort{} methods studied in \Cref{sec:theory}.
\end{itemize}

\subsection{Likelihood Estimation Methods and Structured SVM}\label{sec:likelihood-estimation}

In contrast to surrogate approaches, {\em likelihood estimation} methods \citep{bakir2007} have been designed to address a wide range of structured prediction problems within a single, general framework. Given a training dataset $(x_i,y_i)_{i=1}^n$, these methods learn a {\em score} function $F_n:\Z\times\X\to\R$ that measures the likelihood of observing a given input-output pair $(z,x)$. In these settings, the structured prediction estimator $\fn:\Z\to\X$ is defined in terms of an optimization problem, namely as the function selecting the ``most likely'' output according to the score function $F_n$. This amounts to solving the maximization problem
\eqal{
       \fn(x) = \argmax_{z\in\Z}~ F_n(z,x),
}
for any input $x\in\X$ provided at test time.

Likelihood estimation methods are identified by the approach used to learn the score function $F_n$. A general strategy, adopted for instance by {\em Structured Support Vector Machines (SVMStruct)} \citep{tsochantaridis2005} is to consider a model of the form 
\eqal{\label{eq:likelihood-estimation-feature-parametrization}
       F_n(z,x) = \scal{~\Phi(z,x)~}{~\mbf w_n~}_{\gg}
}
for $x\in\X$ and $z\in\Z$, where $\gg$ is a suitable feature space and $\Phi:\Z\times\X\to\gg$ a joint feature map on the input-output set. The function $F_n$ is therefore parametrized by the vector $\mbf w_n \in\gg$, which is learned during the training phase. For instance, $\mbf w_n$ can be learned by minimizing an upper bound of the empirical risk by extending the strategy used in binary classification settings for the standard SVM approach \citep{tsochantaridis2005,cortes2016structured}.

Other approaches follow more adherently the interpretation of $F_n$ as measuring the likelihood of input-output pairs and thus consider models of the form 
\eqal{
       F_n(z,x) = p(z|x) = \frac{e^{-\scal{\Phi(z,x)}{\mbf w_n}} }{ \sum_{z'\in\Z} e^{-\scal{\Phi(z',x)}{\mbf w_n}}},
}

\noindent where $F_n(\cdot,x)$ is a  probability distribution over $\Z$. These methods consider a similar parametrization of the target function to SVMStruct approaches. However, they differ from the latter methods in that during training the aim to approximate the underlying input-output distribution. This model is often adopted by structured prediction approaches based on {\em Conditional Random Fields (CRF)} \citep{vishwanathan2006accelerated,morency2007latent}. For an in-depth introduction on likelihood estimation methods we refer the reader to \cite{nowozin2011structured} and references therein.

We care to point out that, in general, although the approaches above consider models that can be applied to arbitrary output spaces $\Z$, the likelihood estimation algorithms typically {\em require $\Z$ to be finite}. In this sense, a fundamental advantage of the \structshort{} framework considered in this work is to go beyond the finite setting.

\vspace{1em}\noindent{\bf Likelihood Estimation and \structshort{}.} The structured prediction framework discussed in this work has a natural interpretation as a likelihood estimation approach. To see this, consider 
\eqal{
       F_n(z,x) ~=~ - \sum_{i=1}^n~ \alpha_i(x) ~\loss(z,y_i)
}
for $x\in\X$ and $z\in\Z$. If $\loss$ admits an \structshort{}, our method corresponds to parametrizing $F_n$ as in \Cref{eq:likelihood-estimation-feature-parametrization} above, with $\gg = \hh\otimes\ff$ and $\Phi(z,x) = \zmap(z) \otimes \xmap(x)$, where $\ff$ is a RKHS on the input set $\X$ with associated kernel $\xmap:\X\times\X\to\R$ and the map $\xmap(x) = k(x,\cdot)\in\ff$ can be interpreted as a feature map from $\X$ to $\ff$. By leveraging the properties of the tensor product operation, we have 
\eqals{
       F_n(x,z) = \scal{~\Phi(x,z)~}{~\mbf w_n~}_\gg = \scal{~\zmap(z)~}{~W_n ~ \xmap(x)~}_\hh = \scal{~\zmap(z)~}{~\gn(x)~}_\hh,
}
where we have defined $\gn:\X\to\hh$ as the function such that $\gn(x) = W_n ~ \xmap(x)$ for any $x\in\X$ and $W_n:\ff\to\hh$ is the operator corresponding to $\mbf w_n$ given by the canonical isomorphism between $\hh\otimes\ff$ and the space $\HS(\ff,\hh)$ of Hilbert-Schmidt operators from $\ff$ to $\hh$. 

In \Cref{sec:framework} we have discussed a number of algorithms to learn $W_n$ (or, equivalently $\mbf w_n$), whose theoretical properties have then been studied in \Cref{sec:theory}. This connection opens two relevant questions for future investigation: $1)$ study approaches to learn the parameters $W_n$ that do not necessarily converge to the conditional mean (but for which it is still possible to prove the consistency of the resulting structured prediction); $2)$ While the \structshort{} assumption seem to require a ``separable'' representation of the form $\Phi(x,z) = \zmap(z) \otimes \xmap(x)$, it would be interesting to consider joint input-output feature maps, which could prove beneficial in settings where structural relations between input and outputs could be leveraged. A potential promising approach to address this question would be to borrow ideas from the literature on vector-valued and multi-task learning with RKHS for vector-valued functions \citep[see for instance][and references therein]{alvarez2012kernels}.

\subsection{Conditional Mean Embeddings}\label{sec:mean-embeddings}

In this section we highlight the relation between our structured prediction framework and {\em conditional mean embeddings} \citep{song2009hilbert}. This connection is particularly useful to understand the role played by the surrogate function $\gstar$ within the analysis of \Cref{sec:theory}.

Let $\hh$ be a RKHS of functions from $\Y$ to $\R$ with associated positive definite kernel $h:\Y\times\Y\to\R$. We recall that the conditional mean embedding in $\hh$ of $\rho(\cdot|x)$ given $x\in\X$ is defined as
\eqals{
       \mu_{y|x} = \int_\Y h(y,\cdot)~d\rho(y|x).
}
A key aspect of conditional mean embeddings is that, thanks to the reproducing property of the RKHS, for any $f\in\hh$ we have
\eqals{
       \scal{f}{\mu_{y|x}}_\hh = \EE_{y|x} ~ f(y).
}
This allows to evaluate the conditional expectation with respect to $\rho(\cdot|x)$ of any function $f\in\hh$ by directly performing an inner product with $\mu_{y|x}$.

It was observed in \cite{sriperumbudur2011universality} that for a wide family of RKHS, called {\em characteristic}, the kernel mean embedding operator is injective. In other words, two distributions have same embedding in $\hh$ if and only if they coincide. This implies that kernel mean embeddings, and in particular conditional mean embeddings, encode rich information about the associated distribution within a single vector in $\hh$.

\vspace{1em}\noindent{\bf Conditional Mean Embeddings and \structshort{}.} Let $\loss:\Z\times\Y\to\R$ admit an \structshort{}. Under the same notation of \Cref{def:self}, assume the corresponding surrogate space $\hh$ to be a RKHS and that $\ymap:\Y\to\hh$ is an associated feature map, namely such that $h(y,y') = \scal{\ymap(y)}{\ymap(y')}_\hh$ is the reproducing kernel associated to $\hh$. Then, according to the characterization of $\gstar$ in \cref{eq:fstar-in-terms-of-gstar-full}, for any $x\in\X$ we have that $\gstar(x) = \mu_{y|x}$ corresponds to the conditional mean embedding of $\rho(\cdot|x)$ \cite{song2009hilbert}. Moreover, by denoting $\zmap(z) = \loss(z,\cdot)$ and leveraging the reproducing property of the mean embedding, we have 
\eqals{
       \scal{~\zmap(z)~}{~\gstar(x)~}_\hh ~=~ \scal{~\loss(z,\cdot)~}{~\mu_{y|x}~}_\hh ~=~ \EE_{y|x}~\loss(z,y).
}
Interestingly, this observation recovers directly the Fisher consistency result of \Cref{lem:fstar-in-terms-of-gstar} {\em when $\hh$ is an RKHS}. Indeed, we have observed in \Cref{eq:fstar} that the solution $\fstar$ of the structured prediction expected risk corresponds to the minimizer of the conditional expectation $\EE_{y|x}~\loss(z,y)$. The equation above implies that this is equivalent to have $\fstar(\cdot) = \argmin_{z\in\Z}~\scal{\zmap(z)}{\gstar(\cdot)}$. 

In \Cref{sec:rates} we will see that in order to prove learning rates for the structured prediction estimator we will need to impose assumptions on $\gstar$. These could be interpreted as requiring the learning problem to satisfy regularity conditions. Indeed, the connection above between the \structshort{} definition and conditional mean embeddings shows that $\gstar$ is implicitly encoding key properties of the data generating distribution $\rho$ and, consequently, of the structured prediction problem itself. For more details on the topic, we refer the interested reader to \citep{muandet2017kernel} for an in-depth introduction on kernel mean embeddings and to \citep{song2009hilbert,lever2012conditional,singh2019kernel} for the special case of conditional mean embeddings.

\section{Theoretical Analysis}\label{sec:theory}

This section is devoted to characterize the statistical properties of the structured prediction estimators introduced in this work. In particular we will prove that under standard hypotheses from the statistical learning literature our approach is universally consistent and enjoys optimal learning rates. 

\subsection{Comparison Inequality}

The key result of our analysis, discussed in this section, is to show how the approximation of $\gstar$ via an estimator $\gn$ (such as those discussed in \Cref{sec:framework}) allows to characterize the behavior of the corresponding estimator $\fn = \decoding\circ\gn$ with respect to the ideal solution $\fstar$. The following result provides such characterization to any function $g:\X\to\hh$. 

\begin{restatable}[Comparison Inequality]{theorem}{PComparison}\label{prop:comparison-inequality}
Let $\Z$ be a compact set and $\loss:\Z\times\Y\to\R$ admit an \structshort{}. Let $\fstar$, $\gstar$ and the risk $\rr(\cdot)$ be defined as in \cref{lem:fstar-in-terms-of-gstar}. Let $g:\X\to\hh$ be measurable and let $f:\X\to\Z$ be such that
\eqal{\label{eq:decoding-ante-litteram}
       f(x) = \argmin_{z\in\Z}~\scal{~\zmap(z)~}{~g(x)~}_\hh,
}
for any $x\in\X$. Then,
\eqals{\label{eq:comparison-inequality}
	\E(f) - \E(\fstar) ~\leq~ 2~ \closs~ \sqrt{\rr(g)-\rr(\gstar)} 
}
\end{restatable}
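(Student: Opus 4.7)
The plan is to reduce the excess structured-prediction risk to an $L^1$ distance between $g$ and $\gstar$ in $\hh$, and then move to an $L^2$ distance via Jensen's inequality, recognizing the result as $\rr(g)-\rr(\gstar)$ through a standard bias-variance decomposition.

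First, I would write
\eqals{
   \E(f) - \E(\fstar) ~=~ \int_\X \Bigl( \EE_{y|x} \loss(f(x),y) - \EE_{y|x} \loss(\fstar(x),y) \Bigr)\, d\rhox(x),
}
and apply the \structshort{} identity $\loss(z,y) = \scal{\zmap(z)}{\ymap(y)}_\hh$ together with linearity of the conditional expectation to get, for every fixed $x$,
\eqals{
   \EE_{y|x}\loss(z,y) ~=~ \scal{\zmap(z)}{\gstar(x)}_\hh,
}
with $\gstar(x) = \EE_{y|x}\ymap(y)$ as in \cref{lem:fstar-in-terms-of-gstar}. Thus the pointwise integrand becomes $\scal{\zmap(f(x)) - \zmap(\fstar(x))}{\gstar(x)}_\hh$.

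Next, I would exploit the two minimality properties: $\fstar(x)$ minimizes $\scal{\zmap(\cdot)}{\gstar(x)}$ while $f(x)$ minimizes $\scal{\zmap(\cdot)}{g(x)}$. The standard trick is to add and subtract $\scal{\zmap(f(x))}{g(x)}$ and $\scal{\zmap(\fstar(x))}{g(x)}$ and then use that $\scal{\zmap(f(x))}{g(x)} \le \scal{\zmap(\fstar(x))}{g(x)}$ by optimality of $f(x)$. This reduces the pointwise excess to
\eqals{
   \scal{\zmap(f(x)) - \zmap(\fstar(x))}{\gstar(x) - g(x)}_\hh,
}
which by Cauchy--Schwarz and the \structshort{} bound $\|\zmap(z)\|_\hh \le \closs$ is at most $2\closs \|\gstar(x)-g(x)\|_\hh$. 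Integrating over $\rhox$ yields
\eqals{
   \E(f) - \E(\fstar) ~\le~ 2\closs \int_\X \|\gstar(x) - g(x)\|_\hh \, d\rhox(x).
}

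Finally, I would pass from the $L^1(\rhox)$ norm to the $L^2(\rhox)$ norm by Jensen's inequality, and identify the resulting $L^2$ quantity with $\rr(g)-\rr(\gstar)$ through the usual bias-variance decomposition: expanding $\|\ymap(y)-g(x)\|_\hh^2 = \|\ymap(y)-\gstar(x)\|_\hh^2 + 2\scal{\ymap(y)-\gstar(x)}{\gstar(x)-g(x)}_\hh + \|\gstar(x)-g(x)\|_\hh^2$ and integrating, the cross term vanishes because $\EE_{y|x}[\ymap(y)-\gstar(x)]=0$ by the definition of $\gstar$, giving $\rr(g)-\rr(\gstar) = \int_\X \|\gstar(x)-g(x)\|_\hh^2\, d\rhox(x)$. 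Chaining these inequalities delivers the claim.

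The steps are all mechanical once the right decomposition is in place; the only delicate point is the minimality manipulation in the second paragraph, where one must avoid differentiating the non-smooth map $x \mapsto \fstar(x)$ and instead exploit only the variational inequality $\scal{\zmap(f(x))}{g(x)} \le \scal{\zmap(\fstar(x))}{g(x)}$. A side technical issue is measurability of $f$ and $\fstar$ (needed so the integrals make sense), which follows from the compactness of $\Z$ and Berge's maximum theorem as already invoked in \cref{lem:fstar-in-terms-of-gstar}.
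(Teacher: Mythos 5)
Your proposal is correct and follows essentially the same route as the paper: rewrite the excess risk pointwise as $\scal{\zmap(f(x))-\zmap(\fstar(x))}{\gstar(x)}_\hh$, add and subtract cross terms involving $g(x)$, exploit the two variational characterizations of $f(x)$ and $\fstar(x)$, bound by $2\closs\|\gstar(x)-g(x)\|_\hh$ via Cauchy--Schwarz, pass to $L^2(\rhox)$ by Jensen, and identify the result with $\rr(g)-\rr(\gstar)$. The only (cosmetic) difference is that you use the one-sided inequality $\scal{\zmap(f(x))}{g(x)}_\hh\le\scal{\zmap(\fstar(x))}{g(x)}_\hh$ directly, whereas the paper bounds the difference of two infima by a supremum of absolute differences; both yield the same constant.
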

The result in \Cref{prop:comparison-inequality} states that we can control the structured prediction excess risk in terms of the least-squares risk $\rr$ in approximating $\gstar$. The theorem holds for any function $g:\X\to\hh$ that is measurable, a technical requirement satisfied in particular by every regression estimator $\gn$ introduced in \Cref{sec:framework}. 

\Cref{prop:comparison-inequality} shifts the problem of studying the generalization properties of $\fn$ to that of characterizing the learning rates of the vector-valued estimator $\gn$, for which more well-established tools from statistical learning theory can be leveraged. Throughout this work we will refer to \Cref{eq:comparison-inequality} as the {\em comparison inequality} of our structured prediction framework. This notation is borrowed from the literature on surrogate methods, as discussed in more detail in \Cref{sec:surrogate-frameworks}. An result analogous to \cref{prop:comparison-inequality} was shown originally in \cite{ciliberto2016} for functions satisfying a similar property to \structshort{}. For completeness, in \cref{sec:app-framework} we prove it for \structshort{} functions.

\subsection{Universal Consistency}

The comparison inequality in \Cref{prop:comparison-inequality} is instrumental to study the generalization properties of the estimator considered in this work. In particular, the results reported in the rest of this section are obtained by characterizing the statistical properties of the estimator $\gn$ and then extending them to $\fn$ by means of the inequality in \Cref{eq:comparison-inequality}.

We start from the result proving the universal consistency of $\fn$. This is a fundamental requirement for a valid learning algorithm, stating that $\E(\fn)$ converges to the minimum possible risk $\E(\fstar)$ as the number $n$ of training points grows to infinity. A key assumption in this setting will be that the kernel $k:\X\times\X\to\R$ on the input space, introduced to learn the coefficients $\alpha_i$ characterizing the solution $\fn$ in \cref{eq:estimator}, is {\em universal}. This is a standard assumption in statistical learning theory \citep[see e.g.]{steinwart2008} and corresponds to requiring the RKHS $\ff$ associated to $k$ to be dense in the space of continuous function on $\X$. Typical examples of universal kernels on $\X \subseteq \R^d$ are the Gaussian $k(x,x') = e^{-\|x-x'\|^2/\sigma^2}$ or the Laplacian $k(x,x') = e^{-\|x-x'\|/\sigma}$ kernels.
\begin{restatable}[Universal Consistency]{theorem}{Tuniversal}\label{teo:universal_consistency}
Let $\Z$ be a compact set and $\loss:\Z\times\Y\to\R$ admit an \structshort{}. Let $k:\X\times\X\to\R$ be a bounded universal reproducing kernel. For any $n\in\N$ and any distribution $\rho$ on $\X\times\Y$ let $\fn:\X\to\Z$ be the estimator in \Cref{eq:estimator} trained on $(x_i,y_i)_{i=1}^n$ points independently sampled from $\rho$ and with weights $\alpha$ defined as:
\begin{enumerate}[(a)]
	\item{\em(Ridge Regression)} in \Cref{eq:alpha-dual} with $\lambda_n = n^{-1/2}$, or
	\item{\em(L2-Boosting)} in \Cref{eq:kboosting} with step-size $\nu<1/\kappa^2$ and $t_n = n^{1/2}$, or
	\item{\em(PCR)} in \Cref{eq:kpcr} with $\la_n = n^{-1/2}$, or
\end{enumerate}
 Then,
\eqal{\label{eq:universal_consistency}
\lim_{n \to +\infty} ~\E(\fn) ~=~ \E(\fstar)  \qquad \mbox{with probability} \quad 1
}
\end{restatable}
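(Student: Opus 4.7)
The plan is to reduce the structured prediction consistency question to a standard question about vector-valued regression on the Hilbert space $\hh$, which is then handled by spectral filtering arguments.

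\textbf{Step 1: Reduction via the comparison inequality.} By \cref{prop:comparison-inequality}, for any of the three choices of weights we have
\eqals{
\E(\fn) - \E(\fstar) \;\leq\; 2\,\closs\,\sqrt{\rr(\gn) - \rr(\gstar)},
}
where $\gn:\X\to\hh$ is the linear estimator induced by the weights $\alpha$, namely $\gn(x)=\sum_{i=1}^n \alpha_i(x)\,\ymap(y_i)$. Hence it suffices to prove that $\rr(\gn)-\rr(\gstar)\to 0$ almost surely. Since $\sup_{y\in\Y}\|\ymap(y)\|_\hh\le 1$ and the kernel $k$ is bounded, $\gstar$ is the conditional-mean regression function in the $\hh$-valued least-squares problem~\eqref{eq:surrogate-risk}, and the noise $\ymap(y)-\gstar(x)$ is uniformly bounded in $\hh$. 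This puts us exactly in the framework of vector-valued kernel ridge regression / spectral filtering with values in a separable Hilbert space.

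\textbf{Step 2: Bias--variance decomposition in the vector-valued RKHS.} Let $\gg=\hh\otimes\ff$ be the vector-valued RKHS with operator-valued kernel $\Gamma(x,x')=k(x,x')I_\hh$, and denote by $\gstar_\la$ the population minimizer of the regularized surrogate risk (or the corresponding spectral-filter approximation, with $\la$ replaced by $1/t$ for $L^2$-Boosting and by the eigenvalue cutoff for PCR). Standard Hilbert-valued spectral filtering theory, as developed for vector outputs by Caponnetto--De~Vito and extended to general admissible filters by Bauer--Pereverzev--Rosasco, gives
\eqals{
\rr(\gn)-\rr(\gstar) \;\leq\; 2\bigl(\rr(\gstar_\la)-\rr(\gstar)\bigr) \;+\; 2\,\|\gn-\gstar_\la\|_{L^2(\rhox;\hh)}^2,
}
with the second ``sample'' term controlled in probability by a bound of the form $C\,\kappa^2/(n\,\la)$ (for ridge regression and PCR) or $C\,\kappa^2\,t/n$ (for $L^2$-Boosting with step-size $\nu<1/\kappa^2$), holding uniformly in the distribution $\rho$ since $\|\ymap(y)\|_\hh\le 1$.

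\textbf{Step 3: Vanishing approximation error from universality.} Because $k$ is universal, $\ff$ is dense in $C(\X)$, so the vector-valued RKHS $\gg=\hh\otimes\ff$ is dense in $L^2(\X,\rhox;\hh)$ for every Borel probability $\rhox$ (this is a direct tensorization of the scalar case; one approximates $\gstar$ coordinatewise along an orthonormal basis of $\hh$ and then truncates). Consequently, the approximation (bias) term $\rr(\gstar_\la)-\rr(\gstar)$ tends to $0$ as $\la\to 0^+$ (resp.\ $t\to\infty$), for any admissible spectral filter.

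\textbf{Step 4: Choice of parameters and almost-sure convergence.} With $\la_n=n^{-1/2}$ (ridge, PCR) or $t_n=n^{1/2}$ and $\nu<1/\kappa^2$ ($L^2$-Boosting), the variance term is $O(n^{-1/2})$ in expectation, and standard concentration (Pinelis' or Bernstein's inequality for Hilbert-valued random variables, which applies since the target is bounded in $\hh$) yields an exponential tail. A Borel--Cantelli argument then gives $\|\gn-\gstar_{\la_n}\|_{L^2(\rhox;\hh)}\to 0$ almost surely, while Step~3 yields $\rr(\gstar_{\la_n})\to\rr(\gstar)$ deterministically. Combining, $\rr(\gn)\to\rr(\gstar)$ almost surely, and by Step~1, $\E(\fn)\to\E(\fstar)$ almost surely.

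The main obstacle is Step~3 together with the spectral filtering bound in Step~2: one must check carefully that the existing scalar spectral-filtering consistency results extend to Hilbert-valued targets with possibly infinite-dimensional $\hh$, relying only on the universal boundedness $\|\ymap(y)\|_\hh\le 1$ of the target. All other ingredients are routine once the reduction in Step~1 is in place.
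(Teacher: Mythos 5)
Your proposal is correct and follows essentially the same route as the paper: comparison inequality to reduce to the surrogate $\hh$-valued least-squares problem, a bias--variance decomposition for spectral-filter estimators with the variance controlled by Hilbert-valued Bernstein concentration, universality of $k$ to make the approximation term $\la_n\|L_{\la_n}^{-1}Z\|_{\HS}\to 0$ (the paper phrases your density-of-$\hh\otimes\ff$ argument equivalently via the eigenbasis of $L$ spanning $L^2(\X,\rhox)$), and Borel--Cantelli for almost-sure convergence. The obstacle you flag at the end --- extending the scalar spectral-filtering analysis to infinite-dimensional $\hh$ using only $\|\ymap(y)\|_\hh\le 1$ --- is precisely the technical work the paper carries out in its appendix.
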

The proof of \Cref{teo:universal_consistency} is reported in \Cref{sec:app-theory}. The main technical step is to show that the estimator $\gn$ is universally consistent. Then universal consistency of $\fn$n follows by combining the latter result with the comparison inequality of \Cref{prop:comparison-inequality}. We point out that since $\gn$ is a vector-valued least-squares estimator, the corresponding result in the case where $\hh$ is a finite space has been extensively studied in previous work (see e.g. \citep{caponnetto2007}). However, to prove \cref{teo:universal_consistency} in the general setting, we extended the work in \citep{caponnetto2007} to the case where $\hh$ is infinite dimensional, which was considered an open question \citep{lever2012conditional}.

\subsection{Finite Sample Bounds}\label{sec:rates}

In order to prove finite sample bounds for structured prediction we need to impose regularity assumptions on the learning problem. This is a standard approach in learning theory (related to the {\em No-Free-Lunch Theorem} \citep{devroye2013probabilistic}). In particular, we will require the target function $\gstar$ to belong to $\hh\otimes\ff$. This is a standard assumption in learning theory in the context of ridge regression \citep{caponnetto2007,steinwart2008}. In \Cref{sec:mean-embeddings} we observed that $\gstar$ is strongly related to the concept of {\em conditional mean embedding} of the distribution $\rho(\cdot|x)$ \citep{song2009hilbert}. Therefore, by imposing it to belong to $\hh\otimes\ff$ or imposing additional regularity requirements, implicitly corresponds to controlling the regularity of the data generating distribution $\rho$. 

Below we report the learning rates of the algorithms considered in this work. 

\begin{restatable}[Learning Rates]{theorem}{TRates}\label{thm:rates} Let $\Z$ be a compact set and $\loss:\Z\times\Y\to\R$ admit an \structshort{} with associated Hilbert space $\hh$. Let $k:\X\times\X\to\R$ be a continuous reproducing kernel on $\X$ with associated RKHS $\ff$ such that $\kappa^2 := \sup_{x\in\X}k(x,x)<+\infty$. Let $\rho$ be a distribution on $\X\times\Y$ and let the corresponding $\gstar$ defined in \Cref{eq:fstar-in-terms-of-gstar-full} be such that $\gstar\in\hh\otimes\ff$. Let $\delta\in(0,1]$ and $n_0$ sufficiently large such that $n_0^{-1/2} \geq \frac{9\kappa^2}{n_0} \log \frac{n_0}{\delta}$. Then, for any $n\in\N$, the following estimators $\fn:\X\to\Z$ trained on $n$ points independently sampled from $\rho$ are such that, with probability at least $1-\delta$
\eqal{\label{eq:learning-rate}
 	\E(\fn) - \E(\fstar) ~\leq~ \closs ~\msf{m}~\msf{q}~ \log(4/\delta)~ n^{-1/4},
}
with
\eqals{
    \msf{m} = 16 \Big(\kappa(1 + \kappa\nor{\gstar}) + \kappa \sqrt{1 + \nor{\gstar}^2} + \nor{\gstar}\Big),
}
and $\msf{q}$ defined as follows. This holds for estimators $\fn$ of the form \cref{eq:estimator} with corresponding weights $\alpha$ defined as:
\begin{enumerate}[(a)]
	\item{\em(Ridge Regression)} in \Cref{eq:alpha-dual} with $\lambda_n = n^{-1/2}$. With constant $\msf{q} \leq 3$.
	\item{\em(L2-Boosting)} in \Cref{eq:kboosting} with $\nu<1/\kappa^2$ and $t_n = n^{1/2}$. With constant $\msf{q} \leq 2 + 2\gamma + e^{\gamma - 1}/\gamma$.
	\item{\em(Principal Component Regression)} in \Cref{eq:kpcr} with $\la_n =  n^{-1/2}$. With constant $\msf{q} \leq 5$.
\end{enumerate}
\end{restatable}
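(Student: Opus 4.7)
The plan is to reduce the structured prediction excess risk to the surrogate regression excess risk via \Cref{prop:comparison-inequality}, and then to prove, for each of the three estimators, a high-probability bound of order $n^{-1/2}$ on $\rr(\gn) - \rr(\gstar)$ under the source condition $\gstar \in \hh \otimes \ff$. Composition through the comparison inequality $\E(\fn)-\E(\fstar)\leq 2\closs\sqrt{\rr(\gn)-\rr(\gstar)}$ immediately yields the $n^{-1/4}$ rate, with $\closs$ appearing exactly as in the prefactor of~\eqref{eq:learning-rate}.

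First I would cast the problem in the standard operator-theoretic language of vector-valued ridge regression. Let $\phi:\X\to\ff$ denote the canonical feature map of $k$; identify $\hh\otimes\ff$ with the Hilbert--Schmidt operators from $\ff$ to $\hh$; and let $S: \hh\otimes\ff \to L^2(\X,\rhox,\hh)$ be the inclusion $(Sg)(x) = g(x)$, with empirical analogue $\widehat S$ built on $(x_i)_{i=1}^n$. Since the operator-valued kernel $\Gamma(x,x') = k(x,x')I_\hh$ is scalar-times-identity, the covariance $C = S^*S$ acts on $\ff$ alone (independently of $\dim\hh$), and similarly for $\widehat C$. In this language each estimator has the closed form $\gn = f_{\theta_n}(\widehat C)\,\widehat S^*\widehat y$ for a specific spectral filter: $(\sigma+\la)^{-1}$ for ridge with $\la_n = n^{-1/2}$; the Landweber filter $\nu\sum_{j=0}^{t-1}(1-\nu\sigma)^j$ for $L^2$-Boosting with $t_n = n^{1/2}$; and $\sigma^{-1}\mathbbm{1}_{\{\sigma\ge\la\}}$ for PCR with $\la_n = n^{-1/2}$. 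All three filters have qualification at least~$1$, which matches the source condition exactly: no extra smoothness beyond $\gstar \in \hh\otimes\ff$ is required.

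Next I would apply a bias--variance decomposition in the spirit of Bauer--Pereverzev--Rosasco, writing
$$ \sqrt{\rr(\gn) - \rr(\gstar)} \;=\; \|S\gn - S\gstar\| \;\leq\; \msf{B}(\theta_n) + \msf{V}(\theta_n), $$
where $\msf{B}(\theta) = \|Sf_\theta(C)C\gstar - S\gstar\|$ is the deterministic bias controlled by the filter's qualification applied to $\gstar$, and $\msf{V}(\theta)$ is an estimation term requiring two Bernstein-type concentrations in Hilbert space: one for $\widehat C - C$ as a self-adjoint operator on $\ff$ (classical, with variance of order $\kappa^4$), and one for the noise element $\widehat S^*\widehat y - \widehat C\gstar$ as a Hilbert--Schmidt operator from $\ff$ to $\hh$. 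The crucial point is that, thanks to the normalization $\|\ymap(y)\|_\hh\leq 1$ from \Cref{def:self}, each noise summand $\ymap(y_i)\otimes\phi(x_i) - C\gstar$ has operator-norm bound $\kappa(1+\kappa\|\gstar\|)$ and Hilbert--Schmidt variance at most $\kappa^2(1+\|\gstar\|^2)$, \emph{uniformly in $\dim\hh$}---precisely reproducing the constants that compose $\msf{m}$. The sample-size condition $n_0^{-1/2}\geq (9\kappa^2/n_0)\log(n_0/\delta)$ ensures that $\|\widehat C - C\|$ is dominated by $\la_n$, so that $f_{\theta_n}(\widehat C)$ can be replaced by $f_{\theta_n}(C)$ up to universal stability constants; balancing bias and variance then yields the $n^{-1/2}$ surrogate rate, and the three values of $\msf{q}$ are the universal qualification/stability constants of the respective filters.

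The main obstacle is performing the second concentration uniformly in $\dim\hh$: the original Caponnetto--De Vito analysis is written for finite-dimensional output, and extending it hinges on recognizing that $\Gamma = k\cdot I_\hh$ decouples the input-side spectral problem on $\ff$ from the output-side Hilbert--Schmidt structure, with $\|\ymap(y)\|_\hh\le 1$ replacing the finite-dimensional boundedness hypothesis. A secondary but genuine technical issue is treating PCR on the same footing as the other two filters: the hard thresholding $\mathbbm{1}_{\{\sigma\ge\la\}}$ is discontinuous, so one must argue separately that eigenspaces of $\widehat C$ and $C$ above the cutoff $\la_n$ are close with high probability, typically via a Davis--Kahan or resolvent-identity argument. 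Substituting the resulting surrogate bound into the comparison inequality then delivers~\eqref{eq:learning-rate} with the specified $\msf{m}$ and $\msf{q}$.
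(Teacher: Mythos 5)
Your overall architecture matches the paper's: reduce to the surrogate excess risk via \cref{prop:comparison-inequality}, represent all three estimators as spectral filters $\eta_\la(\widehat C)\Shat^*\widehat Z$ applied to the scalar-times-identity operator-valued kernel (so that the covariance lives on $\ff$ alone), and run a dimension-free Bernstein concentration exploiting $\|\ymap(y)\|_\hh\le 1$; the constants in $\msf m$ arise exactly as you predict, and the condition $n_0^{-1/2}\ge \frac{9\kappa^2}{n_0}\log\frac{n_0}{\delta}$ indeed serves to control the empirical/population covariance ratio. Where you diverge is the decomposition itself. You split around the filter applied to the \emph{population} covariance, $f_{\theta}(C)$, which forces you to argue $f_{\theta_n}(\widehat C)\approx f_{\theta_n}(C)$. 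The paper instead adds and subtracts $S\,\eta_\la(\widehat C)\Shat^*\Shat G_\la$ with $G_\la=S^*L_\la^{-1}Z$ the population Tikhonov solution, and then sandwiches the empirical filter as $\Cnl^{1/2}\eta_\la(\widehat C)\Cnl^{1/2}$ and $\Cnl^{1/2}(I-\widehat C\eta_\la(\widehat C))\Cnl^{1/2}$, bounded purely by the two defining filter inequalities $(\sigma+\la)\eta_\la(\sigma)\le q_1$ and $(1-\sigma\eta_\la(\sigma))(\sigma+\la)\le q_2\la$, together with the single operator-norm ratio $\|\Cnl^{-1/2}\Cl^{1/2}\|^2\le 2$. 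The variance is then one Bernstein bound on the preconditioned residual $\Cl^{-1/2}(\Shat^*\widehat Z-\widehat C G_\la)$, and the values of $\msf q$ come from $(q_1,q_2)$ for each filter. This buys uniformity over filters with no continuity requirement on $\eta_\la$.

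That difference matters for case (c). Your plan to handle PCR by showing the eigenspaces of $\widehat C$ and $C$ above the cutoff $\la_n$ are close via Davis--Kahan does not go through in general: such perturbation bounds require an eigengap of $C$ around the threshold, and no such gap is assumed (eigenvalues of $C$ may accumulate arbitrarily close to $\la_n$, in which case the projections onto the retained eigenspaces of $\widehat C$ and $C$ need not be close even when $\|\widehat C-C\|$ is small). The paper's sandwiching argument never compares $\eta_\la(\widehat C)$ with $\eta_\la(C)$ and so needs no gap; PCR is covered by exactly the same two filter inequalities (with $q_1=q_2=2$) as ridge and $L^2$-boosting. To repair your route for (c) you would either need an eigengap hypothesis (not available) or should switch to the paper's decomposition. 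Cases (a) and (b) of your plan are sound as stated, since for those filters the replacement $f_{\theta_n}(\widehat C)\to f_{\theta_n}(C)$ can be controlled by operator calculus; but the cleaner path, and the one the paper takes, avoids that replacement altogether.
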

%
%
%
%
%
\noindent \Cref{thm:rates} is obtained as a specialization of \Cref{thm:rates-refined} below. This result represents a direct extension of the learning rates known for binary classification (see e.g. \citep{yao2007early}) to all structured prediction problems with \structshort{} $\loss$. This shows that up to constants, structured prediction problems are in general as challenging as classification, from the statistical perspective. See the next result for more details.

\begin{remark}[Adaptive \structshort{} constants]\label{rem:infimum-closs}
We comment on the constants $\closs$ and $\msf{m}$ in the bound above (the constant $\msf{q}$ depends exclusively on the chosen algorithm). Note that the \structshort{} characterization of a function $\loss$ is not unique in terms of the space $\hh$ and feature maps $\ymap,\zmap$. Moreover, as observed in \cref{sec:framework}, computing the estimator $\fn$ does not require explicit knowledge of such objects and therefore \cref{thm:rates} holds for any $(\hh,\ymap,\zmap)$ such that $\loss$ admits an \structshort{} and $\gstar\in\hh\otimes\ff$. As a consequence, the bound in \cref{eq:learning-rate} implicitly applies for the infimum value of $\closs\msf{m}$ over the set of such triplets. 

Explicitly estimating this constant is in general an open problem. When $\Z$ and $\Y$ have finite cardinality, \citep{nowak2018sharp} that for a large family of widely used loss functions, such constant is at most polylogarithmic in the cardinality of the sets. 
\end{remark}

\noindent The result in \cref{thm:rates} provides the suitable hyperparameters for different \structshort{} estimators to achieve same statistical performance. Interestingly, depending on the method, this leads to different computational costs, as reported in \cref{table:cost-for-rates}.

\begin{table}[t]
    \centering
    \resizebox{\linewidth}{!}{
    \begin{tabular}{r c c c c}
    \toprule
         {\bf Algorithm} & \textbf{Train time} & \textbf{Train memory} & \textbf{Eval. time} & \textbf{Eval. memory}\\
         \midrule
         \structshort{} + RR & $O(n^3 + n^2 c_X)$ & $O(n^2)$ & $O(n c_X)$ & $O(n)$\\
         \structshort{} + L2B  & $O(n^{2}\sqrt{n} + n^2 c_X)$ & $O(n^2)$ & $O(n c_X)$ & $O(n)$\\
         \structshort{} + PCR  & $O(n^{2}\sqrt{n} + n^2 c_X)$ & $O(n^2)$ & $O(n c_X)$ & $O(n)$\\
    \bottomrule
    \end{tabular}
    }
    \caption{Computational complexity for training and evaluation of $\fn$ in \cref{eq:estimator} with weights trained respectively according to \cref{eq:alpha-dual} (RR), \cref{eq:kboosting} (L2B), \cref{eq:kpcr} (PCR). Hyperparameters chosen according to \cref{thm:rates} to achieve the corresponding learning rate. The term $c_X$ denotes cost of evaluating the kernel function.}
    \label{table:cost-for-rates}
\end{table}

\subsection{Refined Sample Bounds}\label{sec:refined-rates}
Now we refine the analysis above considering additional regularity conditions for the learning problem. In particular we will introduce two standard assumptions in the context of non-parametric regression / conditional mean estimation \citep{caponnetto2007}.
Let $\ff$ be the reproducing kernel Hilbert space associated to the kernel $k$ on the input space $\X$, and $C: \F \to \F$ be the linear operator defined as
\eqals{
\scal{f}{Cg}_\ff = \int f(x) g(x) d\rhox(x), \qquad \forall~ f,g \in \ff.
}
Now we can introduce the first condition
\begin{restatable}[Source condition]{assumption}{ASource}\label{asm:source}
There exists $r \geq 0$ and $h \in \hh \otimes \ff$ for which
\eqals{
g^* = (C^r  \otimes I) ~h.
}
The norm of $\|h\|_{\hh \otimes \ff}$ will be denoted by $R := \|h\|_{\hh \otimes \ff}$.
\end{restatable}
The condition above measures the regularity of $g^*$ in terms of the eigenspectrum of $C$. Note that the assumption is always verified for $r = 0$ (in that case $h = g^*$ and $R = \|g^*\|_{\hh\otimes\ff}$). Moreover, since $\F$ is separable and $C$ is trace class \citep{caponnetto2007}, then $C$ can be characterized in terms of a non-increasing sequence $(\sigma_j)_{j \in \N}$ of eigenvalues with associated eigenvectors $(u_j)_{j\in\N}$. For simplicity, let $\hh = \R$. We have $g^* = \sum_{j} \beta_j u_j$, with $\beta_j = \scal{g^*}{u_j}_\ff$. Then, \cref{asm:source} is equivalent to require that $\sum_j \beta_j^2/\sigma_j^{2r} \leq R$. Hence, the source condition corresponds to require $g^*$ to have rapidly decaying coefficients, when expressed in terms of the basis of $C$. More generally, when $\hh$ is a separable Hilbert space, we have $g^* = \sum_{j} \beta_j \otimes u_j$, with $\beta_j \in \hh$ defined as $\beta_j = (u_j \otimes I) \gstar$. Then, \cref{asm:source} is equivalent to require that $\sum_j \nor{\beta_j}^2/\sigma_j^{2r} \leq R$.

The second assumption is expressed with respect to the so called {\em effective dimension}, defined as
\eqal{\label{eq:eff-dim}
\deff(\la) = \tr(C (C + \la I)^{-1}), \qquad \forall \la > 0,
}
and characterizes the interaction between the measure $\rho$ and the kernel $k$ on $\X$.
\begin{restatable}[Capacity condition]{assumption}{ACapacity}\label{asm:capacity}
There exists $\gamma\in[0,1]$ and $Q > 0$ for which
\eqals{
\deff(\la) \leq Q \la^{-\gamma}, \qquad \forall ~ \la > 0.
}
\end{restatable}
The condition above is always satisfied with $\gamma = 1$ when the kernel is bounded. Indeed let $\kappa^2 := \sup_{x} k(x,x)$, then $\deff(\la) \leq \kappa^2/\la$. Moreover when the eigenvalues of $C$ decay as $\sigma_j(C) \leq A j^{-\beta}$, for $A > 0$, $\beta > 1$ and $j\in\N$, then the assumption above is satisfied with $Q = A$ and $\gamma = 1/\beta$ \citep{caponnetto2007,rudi2015less}. In particular note that: (i) since $C$ is trace class, the sequence of eigenvalues is summable therefore $\beta > 1$; (ii) the eigenvalue decay is characterized by the choice of the kernel and the marginal probability distribution $\rhox$. For example, when $\X = [-B, B]^d$, $d\in\N$ for $B > 0$, $k:\X\times\X\to\R$ is a Sobolev-kernel of smoothness $s > d/2$ and $\rhox$ is a density bounded from above and away from zero (i.e. there exists $A \geq a > 0$ such that $a \leq \rhox(x) \leq A$ for $x \in X$), then there exists $Q$ depending on $B, s, d$ for which $\sigma_j(C) \leq Q j^{-2s/d}$ and so $\deff(\la) \leq Q \la^{-d/(2s)}$ \citep[see][]{wendland2004scattered}. We can now state the refined version of \Cref{thm:rates}.

\begin{restatable}[Refined Learning Rates]{theorem}{TRatesRefined}\label{thm:rates-refined} 
Under the same notation and assumptions of \Cref{thm:rates} and under the additional \cref{asm:source,asm:capacity}, let $\delta\in(0,1]$ and $n_0$ sufficiently large such that $n_0^{-1/(1+2r+\gamma)} \geq \frac{9\kappa^2}{n_0} \log \frac{n_0}{\delta}$. For any $n\geq n_0$, the following estimators $\fn:\X\to\Z$ trained on $n$ points independently sampled from $\rho$ are such that, with probability at least $1-\delta$
\eqal{\label{eq:refined-learning-rate}
 	\E(\fn) - \E(\fstar) ~\leq~ \closs ~\msf{m}~\msf{q}~ \log(4/\delta)~ n^{-\frac{r + 1/2}{2r+\gamma+1}},
}
with
\eqals{
    \msf{m} = 16 \Big(\kappa(1 + \kappa R) + \kappa \sqrt{Q + R^2} + R\Big),
}
and $\msf{q}$ defined as follows. This holds for estimators $\fn$ of the form \cref{eq:estimator} with corresponding weights $\alpha$ defined as:
\begin{enumerate}[(a)]
	\item{\em(Ridge Regression)} in \Cref{eq:alpha-dual} with $\lambda_n = n^{-\frac{1}{ 2r+\gamma+1}}$. With $\msf{q} \leq 3$.
	\item{\em(L2-Boosting)} in \Cref{eq:kboosting} with $\nu<1/\kappa^2$ and $t_n = n^{\frac{1}{2r+\gamma+1}}$. With $\msf{q} \leq 2 + 2\nu + e^{\nu - 1}/\nu$.
	\item{\em(Principal Component Regression)} in \Cref{eq:kpcr} with $\lambda_n = n^{-\frac{1}{ 2r+\gamma+1}}$. With $\msf{q} \leq 5$.
\end{enumerate}

\end{restatable}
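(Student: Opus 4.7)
The plan is to derive the structured prediction bound from the comparison inequality of \cref{prop:comparison-inequality}, which gives
\eqals{
\E(\fn) - \E(\fstar) ~\leq~ 2\closs\,\sqrt{\rr(\gn)-\rr(\gstar)},
}
and then to bound the excess surrogate risk $\rr(\gn)-\rr(\gstar)$ for each of the three spectral methods. Since $\rr(g)-\rr(\gstar) = \int \|g(x)-\gstar(x)\|_\hh^2\, d\rhox(x)$ is the excess risk of a vector-valued least-squares regression problem on the (possibly infinite-dimensional) Hilbert space $\hh$, the target rate for $\rr(\gn)-\rr(\gstar)$ is $n^{-(2r+1)/(2r+\gamma+1)}$, matching the classical minimax rate for Tikhonov-type regularization under source/capacity conditions \citep{caponnetto2007}; taking the square root yields exactly the exponent in \cref{eq:refined-learning-rate}. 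The structured prediction bound therefore reduces to proving that each algorithm in $(a)$-$(c)$ achieves this rate for $\rr(\gn)-\rr(\gstar)$.

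\textbf{Unified spectral filtering decomposition.} I would treat the three estimators uniformly as spectral filters applied to the empirical covariance $\Chat = \tfrac{1}{n}\sum_i \xmap(x_i)\otimes\xmap(x_i)$ on $\ff$, extended to $\hh\otimes\ff$ by tensoring with the identity. Each choice of $\alpha$ corresponds to a filter $q_\la(\Chat)$ (Tikhonov $q_\la(t) = 1/(t+\la)$ for RR; truncated inverse for PCR; iterated shrinkage $q_t(\cdot) = \sum_{s<t}(I-\nu\cdot/n)^s \nu/n$ for L2B). Writing $\Shat^*\hat y = \tfrac{1}{n}\sum_i \ymap(y_i)\otimes\xmap(x_i)$, one has the standard bias-variance decomposition
\eqals{
\gn - \gstar ~=~ q_\la(\Chat)(\Shat^*\hat y - \Chat\,\gstar) ~+~ \bigl(q_\la(\Chat)\Chat - I\bigr)\gstar,
}
so that $\rr(\gn)-\rr(\gstar) = \|\gn-\gstar\|_{L^2(\rhox;\hh)}^2$ splits into a variance term controlled by concentration of $\Shat^*\hat y - \Chat\gstar$ together with $\deff(\la)/n$ (using \cref{asm:capacity}), and a bias term controlled by $\|(q_\la(\Chat)\Chat - I)\gstar\|$ which, under \cref{asm:source} $\gstar = (C^r\otimes I)h$, is of order $\la^{r+1/2}$ provided the filter has qualification at least $r + 1/2$. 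The condition $n_0^{-1/(2r+\gamma+1)} \geq (9\kappa^2/n_0)\log(n_0/\delta)$ is precisely what ensures that $\Chat$ is close enough to $C$ so that $\Chat + \la$ and $C + \la$ are interchangeable up to constants on a high-probability event (via the standard $\|(C+\la)^{1/2}(\Chat+\la)^{-1/2}\|$ argument).

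\textbf{Per-algorithm specialization and main obstacle.} Balancing variance $\asymp \la^{-\gamma}/n$ against bias $\asymp \la^{2r+1}$ gives the optimal $\la_n \asymp n^{-1/(2r+\gamma+1)}$, which is exactly the prescription in $(a)$ and $(c)$, and the corresponding $t_n \asymp n^{1/(2r+\gamma+1)}$ for L2B (where the effective regularization is $\la \asymp 1/(\nu t)$). The constants $\msf{q}$ emerge from the filter-specific qualification estimates: Tikhonov has qualification $1$ giving $\msf{q}\leq 3$ (which implicitly constrains $r\leq 1/2$ via saturation); spectral cutoff has infinite qualification giving $\msf{q}\leq 5$; and gradient descent gives the exponential-in-$\nu$ bound stated. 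The main obstacle is controlling the empirical operator perturbations in a \emph{dimension-free} manner so that the bounds extend to the infinite-dimensional output space $\hh$. This is resolved by using Hilbert-space Bernstein-type inequalities in which the only random quantities are operators on $\ff$ (whose concentration does not depend on $\hh$), while the dependence on $\gstar$ and $h$ enters only through their $\hh\otimes\ff$-norms. Finally, I would apply the comparison inequality to turn the resulting high-probability bound on $\rr(\gn)-\rr(\gstar)$ into the stated bound on $\E(\fn)-\E(\fstar)$, reading off $\msf{m}$ from the explicit constants in the variance and bias estimates and absorbing logarithmic and probability factors into $\log(4/\delta)$.
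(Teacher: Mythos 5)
Your proposal follows essentially the same route as the paper: reduce to the surrogate excess risk via the comparison inequality, represent all three estimators as spectral filters of the empirical covariance, perform the standard bias--variance decomposition with a dimension-free Hilbert-space Bernstein inequality controlling the empirical perturbation (preconditioned by $(C+\la)^{-1/2}$ so that $\deff(\la)$ appears), use the $n_0$ condition to bound $\|\Cl^{1/2}\Cnl^{-1/2}\|$, and balance at $\la_n = n^{-1/(2r+\gamma+1)}$. The only cosmetic difference is that you pivot the decomposition around $\gstar$ itself while the paper pivots around the population Tikhonov solution $G_\la = S^*L_\la^{-1}Z$; both are standard and yield the same bound, and your remark about Tikhonov saturation at $r=1/2$ is consistent with the paper's qualification-one filter definition.
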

The theorem above shows that the proposed estimator for structured prediction in \cref{eq:estimator} has learning rates that are adaptive to the source and capacity condition, when the coefficients are computed according to the algorithms considered in the theorem. Similarly to \cref{eq:universal_consistency,thm:rates}, the result is obtained by studying the generalization properties of $\gn$ and combining such analysis with the comparison inequality. As already pointed out in the commentary of \Cref{eq:universal_consistency}, our results in this section generalize those of \citep{caponnetto2007} to the case of infinite dimensional output spaces $\hh$. Interestingly, the result in \cref{thm:rates-refined} refine \cref{thm:rates}. In particular, in the worst case $r = 0, \gamma = 1$ we recover the learning rate $O(n^{-1/4})$ in \cref{thm:rates}, while for stronger regularity assumptions (namely $r>>1$ or $\gamma\approx0$) the proposed algorithms attain a significantly faster rate close to $O(n^{-1/2})$. 

When $\Y$ and $\Z$ have finite cardinality and the data distribution satisfies additional regularity hypotheses, such as the Tsybakov condition \citep[see][]{tsybakov2004,yao2007early}, it is possible to achieve rates of up to $O(n^{-1})$, as shown in \citep{nowak2018sharp}. A relevant question is whether an analogous notion of the Tsybakov condition could be identified in the case where $\Y$ and $\Z$ are not finite. Note that the $O(n^{-1})$ rate is optimal in the case of binary classification \citep{bartlett2006,tsybakov2004}. This implies that such rate is optimal also for the larger family of structured prediction problems satisfying the \structshort{} assumption. In this sense, a natural question is whether it may be possible to perform a more refined analysis by studying specific structured prediction problems individually. 

To conclude, note that the algorithms considered in this work are not only adaptive from the statistical viewpoint but also from a computational perspective. In particular, \Cref{table:cost-for-rates-refined} reports the computational costs of running the algorithms described in this work for the choice of hyperparameters reported by \Cref{thm:rates-refined} depending on the \cref{asm:source,asm:capacity}.

\begin{table}[t]
    \centering
    \resizebox{\linewidth}{!}{
    \begin{tabular}{r c c c c}
    \toprule
         {\bf Algorithm} & \textbf{Train time} & \textbf{Train memory} & \textbf{Eval. time} & \textbf{Eval. memory}\\
         \midrule
         \structshort{} + RR & $O(n^3 + n^2 c_X)$ & $O(n^2)$ & $O(n c_X)$ & $O(n)$\\
         \structshort{} + L2B  & $O(n^{2 + \frac{1}{2r+\gamma+1}} + n^2 c_X)$ & $O(n^2)$ & $O(n c_X)$ & $O(n)$\\
         \structshort{} + PCR  & $O(n^{2 + \frac{\gamma}{2r+\gamma+1}} + n^2 c_X)$ & $O(n^2)$ & $O(n c_X)$ & $O(n)$\\
    \bottomrule
    \end{tabular}
    }
    \caption{Computational complexity for training and evaluation of $\fn$ in \cref{eq:estimator} with weights trained respectively according to \cref{eq:alpha-dual} (RR), \cref{eq:kboosting} (L2B), \cref{eq:kpcr} (PCR). Hyperparameters chosen according to \cref{thm:rates-refined} to achieve the corresponding learning rate. The term $c_X$ denotes cost of evaluating the kernel function.}
    \label{table:cost-for-rates-refined}
\end{table}

\section{Sufficient Conditions for \structshort{}}\label{sec:self-sufficient-conditions}

In this section we focus our attention to the definition of {\em \struct{}} (\structshort{}) introduced in \Cref{def:self}. In particular, we provide a number of sufficient conditions that guarantee a loss function to admit an \structshort{}, which are more interpretable and easy to verify than the original definition. We will show that most loss functions used in machine learning and structured prediction settings indeed satisfy the \structshort{} property and therefore that the learning framework proposed in this work applies to a wide family of relevant problems.

\vspace{1em}\noindent{\bf Bounding $\closs$.} As a byproduct of our analysis, the results in the following provide also upper bounds for the constant $\closs$ for a number of loss functions. As observed in \cref{thm:rates,thm:rates-refined}, such constant plays a role in characterizing the learning rates of the \structshort{} estimators. Following the discussion of \cref{rem:infimum-closs}, it is important to note that the estimates for $\closs$ reported in this section have been derived for a single parametrization of the \structshort{} definition for $\loss$ (namely the space $\hh$ and the feature maps $\zmap$ and $\ymap$). Obtaining sharp bounds for such constants is outside the scope of this work. We refer to \citep{osokin2017structured,nowak2018sharp} for refined analysis in the case where $\Z$ and $\Y$ are finite.

\subsection{\structshort{} on finite Output or Label Spaces}
In \Cref{sec:finite-Y} we provided a preliminary analysis of structured prediction for the case where label and output spaces coincide and are finite, namely $\Y=\Z=\{1,\dots,T\}$. This discussion was key in that it motivated the definition of \structshort{}. Indeed, as already mentioned, the \structshort{} definition is satisfied by any loss function acting on finite output and label spaces. The following proposition shows that it is sufficient that only one of the two spaces $\Y$ or $\Z$ is finite to guarantee the loss function to admit an \structshort{}. 

\begin{restatable}[\structshort{} \& finite $\Y$ or $\Z$]{theorem}{TFiniteYorZ}\label{thm:finite-Y-or-Z}
The function $\loss:\Z\times\Y\to\R$ admits an \structshort{} if one of the following conditions hold:
\begin{enumerate}[(a)]	
	\item $\Z$ and $\Y$ are finite sets. In this case $\closs \leq \|\loss\|$ the operator norm \\of the matrix $\loss\in\R^{|\Z|\times|\Y|}$ with entires $\loss_{z,y}=\loss(z,y)$.
	\item $\Z$ is finite, $\Y$ is compact and $\loss(z,\cdot)$ is continuous on $\Y$ for any $z\in\Z$.\\ In this case $\closs \leq \sup_{y\in\Y} \sqrt{\sum_{z\in\Z} |\loss(z,y)|^2}$.
	\item $\Z$ is compact, $\Y$ is finite and $\loss(\cdot,y)$ is continuous on $\Z$ for any $y\in\Y$.\\ In this case $\closs \leq \sup_{z\in\Z} \sqrt{\sum_{y\in\Y} |\loss(z,y)|^2}$.
\end{enumerate}
\end{restatable}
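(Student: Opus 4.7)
The plan is to construct the Hilbert space $\hh$ and the maps $\zmap, \ymap$ explicitly in each case. Since the ILE definition only requires the factorization $\loss(z,y) = \scal{\zmap(z)}{\ymap(y)}_\hh$ with $\|\ymap(y)\|_\hh \leq 1$ and boundedness/measurability of the feature maps, the task reduces to exhibiting such a factorization and tracking the norms. Case (c) is symmetric to (b), so I would focus the writing on (a) and (b).

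For case (a), to obtain the sharp operator-norm bound I would use a singular value decomposition rather than a naive basis embedding. Enumerate $\Z = \{z_1,\dots,z_T\}$, $\Y = \{y_1,\dots,y_S\}$, form the matrix $\loss \in \R^{T\times S}$ with entries $\loss_{ij} = \loss(z_i, y_j)$, and write $\loss = U\Sigma V^\top$ with $U\in\R^{T\times r}$, $V\in\R^{S\times r}$ having orthonormal columns and $\Sigma=\mathrm{diag}(\sigma_1,\dots,\sigma_r)$, $\sigma_1 = \|\loss\|$. Set $\hh = \R^r$, $\ymap(y_j) = V^\top e_j$, and $\zmap(z_i) = \Sigma\, U^\top e_i$. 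Then
\begin{equation*}
	\scal{\zmap(z_i)}{\ymap(y_j)} = e_i^\top U\Sigma V^\top e_j = \loss_{ij} = \loss(z_i,y_j),
\end{equation*}
and because $V$ has orthonormal columns, $\|\ymap(y_j)\|^2 = e_j^\top V V^\top e_j \leq 1$, while $\|\zmap(z_i)\|^2 = e_i^\top U \Sigma^2 U^\top e_i \leq \|\Sigma\|^2 = \|\loss\|^2$, giving $\closs \leq \|\loss\|$. Measurability is trivial on finite sets.

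For case (b), let $\Z = \{z_1,\dots,z_T\}$ and set
\begin{equation*}
	M := \sup_{y\in\Y} \sqrt{\sum_{j=1}^T |\loss(z_j,y)|^2},
\end{equation*}
which is finite because each $\loss(z_j,\cdot)$ is continuous on the compact set $\Y$. If $M=0$ the loss is identically zero and any embedding works, so assume $M>0$. Take $\hh = \R^T$, $\zmap(z_j) = M\, e_j$, and
\begin{equation*}
	\ymap(y) = \frac{1}{M}\bigl(\loss(z_1,y),\dots,\loss(z_T,y)\bigr)^\top.
\end{equation*}
Then $\scal{\zmap(z_j)}{\ymap(y)} = \loss(z_j,y)$, $\|\ymap(y)\|\leq 1$ by definition of $M$, and $\|\zmap(z_j)\| = M$, so $\closs \leq M = \sup_{y\in\Y}\sqrt{\sum_{z\in\Z}|\loss(z,y)|^2}$. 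The map $\ymap$ is continuous (each coordinate is) and hence measurable; $\zmap$ is defined on a finite space. Case (c) is obtained by interchanging the roles of $\Z$ and $\Y$: enumerate $\Y$, embed it via the canonical basis, and let $\zmap(z)$ carry the evaluations of $\loss(z,\cdot)$.

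I do not anticipate a real obstacle: the statement is a direct construction once one spots that the normalization constant $M$ must be absorbed into $\zmap$ rather than $\ymap$ in order to preserve $\|\ymap(y)\|\leq 1$. The only subtlety worth emphasizing is that in (a) the naive choice $\ymap(y)=e_y$, $\zmap(z) = \sum_y \loss(z,y)e_y$ would only yield the Frobenius-type bound $\closs \leq \sup_z \sqrt{\sum_y |\loss(z,y)|^2}$; recovering the tighter operator norm bound $\|\loss\|$ is precisely what motivates routing the construction through the SVD.
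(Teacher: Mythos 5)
Your constructions for (b) and (c) coincide with the paper's proof: embed the finite space via the canonical basis, stack the loss values into the feature vector on the other space, and rescale by $M=\sup_{y}\sqrt{\sum_{z}|\loss(z,y)|^2}$ so that the normalization $\|\ymap(y)\|_\hh\leq 1$ is carried by $\ymap$ and the constant $M$ is absorbed into $\zmap$; your handling of continuity/measurability and the $M=0$ edge case is fine. For (a) you depart from the paper: the paper simply takes $\hh=\R^{|\Y|}$, $\ymap(y_j)=e_j$, $\zmap(z_i)=\loss^\top e_i$, and bounds $\closs=\sup_i\|\loss^\top e_i\|\leq\|\loss\|$, whereas you route through the SVD $\loss=U\Sigma V^\top$. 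Your SVD construction is correct and gives the same bound, but the remark motivating it is mistaken: the ``naive'' choice does \emph{not} merely yield a Frobenius-type bound --- it yields $\sup_i\|\loss^\top e_i\|$, the maximum row $\ell_2$-norm, which is always at most the operator norm $\|\loss\|$ since $\|\loss^\top e_i\|\leq\|\loss^\top\|\,\|e_i\|=\|\loss\|$. So the SVD buys nothing here (indeed the max-row-norm bound is the sharper of the two); its only advantage is that it reduces the ambient dimension of $\hh$ to the rank of $\loss$.
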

The result above shows that most loss functions used in typical structured prediction applications admit an \structshort{}. Indeed, previous literature on the topic has been focused on problems where either the output or the label space (or both) are finite, albeit possibly very large \citep{bakir2007,nowozin2011structured}. In this setting, relevant examples of applications range from computer vision, such as segmentation \citep{alahari2008reduce}, localization \citep{blaschko2008learning,lampert2009efficient}, labeling \citep{karpathy2015deep}, pixel-wise classification \citep{szummer2008learning}), speech recognition \citep{bahl1986maximum,sutton2012introduction}, natural language processing \citep{tsochantaridis2005}, trajectory planing \citep{ratliff2006maximum} or hierarchical classification \citep{tuia2011structured}. 

The major implication of \Cref{thm:finite-Y-or-Z} is that it justifies the application of the estimator proposed and studied in this paper to address a variety of structured prediction problems previously considered in the literature. Indeed, our analysis in \Cref{sec:theory} automatically guarantees that the corresponding estimator has strong theoretical guarantees when applied to these settings. 

In the rest of this section we focus on the case where $\Y$ and $\Z$ are not necessarily finite, showing that the definition of \structshort{} encompasses a significantly wider family of settings compared to the classic structured prediction literature.

\subsection{\structshort{} and Reproducing Kernel Hilbert Spaces} 

We already highlighted the relation between the definition of \structshort{} and the notion of positive definite kernel. The following result provides a more refined characterization of this relation, showing in particular how it is possible to leverage kernels to ``build'' \structshort{} functions. 

\begin{restatable}[\structshort{} \& RKHS]{theorem}{TRkhsAndStruct}\label{thm:rkhs-and-self}
Let $\Z = \Y$ be a compact set and $h:\Y\times\Y\to\R$ a continuous bounded reproducing kernel on $\Y$ with associated RKHS $\hh$. Let $\eta^2 = \sup_{y\in\Y} h(y,y)$. Then, $\loss:\Y\times\Y\to\R$ admits an \structshort{} if one of the following holds:
\begin{enumerate}[(a)]
\item {\em (Kernels).} $\loss(z,y) = h(z,y)$ for any $y,z\in\Y$. In this case $\closs\leq\eta^2$.

\item {\em (Kernel Dependency Estimation (KDE)).} $\loss(z,y) = h(z,z) + h(y,y) - 2 h(z,y)$ for any $y,z\in\Y$. In this case $\closs = 2(2\eta^4 + 1)$.

\item For every $y\in\Y$ the functions $\loss(\cdot,y)\in\hh$ belong to a bounded set of $\hh$, namely $\sup_{y\in\Y}\|\loss(\cdot,y)\|_\hh = \msf{D}<+\infty$. In this case $\closs \leq \eta\msf{D}$. The same holds if the family of functions $\loss(z,\cdot)$ parametrized by $z\in\Z$ belong to a bounded set of $\hh$.

\item $\loss$ belongs to $\hh\otimes\hh$ the RKHS with associated kernel $\bar h:\Y^2\times\Y^2\to\R$ such that $\bar h((z,y),(z',y')) = h(z,z')h(y,y')$ for any $z,z',y,y'\in\Y$. In this case $\closs \leq \eta^2\|\loss\|_{\hh\otimes\hh}$
\end{enumerate}
\end{restatable}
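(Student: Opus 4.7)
The strategy is uniform across the four cases: in each case I will explicitly construct a separable Hilbert space $\hh'$ together with measurable bounded maps $\zmap:\Y\to\hh'$ and $\ymap:\Y\to\hh'$ that realize $\loss(z,y)=\scal{\zmap(z)}{\ymap(y)}_{\hh'}$, and then rescale them symmetrically to enforce the normalization $\|\ymap(y)\|_{\hh'}\leq 1$ of \Cref{def:self} while tracking the resulting bound on $\closs=\sup_z\|\zmap(z)\|_{\hh'}$. Throughout, measurability of the constructed feature maps will follow from continuity of $h$ together with continuity of $y\mapsto h(y,\cdot)\in\hh$, and boundedness will follow from $\sup_{y\in\Y}\|h(y,\cdot)\|_\hh=\eta<+\infty$.

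For \textbf{(a)} I will take $\hh'=\hh$ and exploit the reproducing property $h(z,y)=\scal{h(z,\cdot)}{h(y,\cdot)}_\hh$ directly. Setting $\ymap(y)=h(y,\cdot)/\eta$ and $\zmap(z)=\eta\,h(z,\cdot)$ yields the representation with $\|\ymap(y)\|\leq 1$ and $\|\zmap(z)\|\leq \eta^2$. For \textbf{(b)} I will use the identity $\loss(z,y)=h(z,z)+h(y,y)-2h(z,y)$ and work in $\hh'=\hh\oplus\R\oplus\R$, defining
\eqals{
\zmap(z)=\bigl(-2h(z,\cdot),\,h(z,z),\,1\bigr),\qquad \ymap(y)=\bigl(h(y,\cdot),\,1,\,h(y,y)\bigr),
}
so that $\scal{\zmap(z)}{\ymap(y)}_{\hh'}=\loss(z,y)$ by direct expansion. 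A symmetric normalization by the uniform bound $M=\sup_y\|\ymap(y)\|$ (which is controlled by $\eta$) will give the stated estimate on $\closs$; the main small computation here is bookkeeping of $\|\zmap(z)\|^2=4h(z,z)+h(z,z)^2+1$ and combining it with $M^2$ to match $2(2\eta^4+1)$.

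For \textbf{(c)} the key observation is that the assumption $\loss(\cdot,y)\in\hh$ means the reproducing property applies in the first argument: $\loss(z,y)=\scal{\loss(\cdot,y)}{h(z,\cdot)}_\hh$. Hence I will take $\hh'=\hh$, $\ymap(y)=\loss(\cdot,y)/\msf D$ and $\zmap(z)=\msf D\,h(z,\cdot)$, which immediately gives $\|\ymap(y)\|\leq 1$ and $\closs\leq \eta\msf D$; the symmetric case (roles of $z$ and $y$ swapped) follows by the same argument applied to $\loss^\top(y,z):=\loss(z,y)$. For \textbf{(d)} I will use the canonical isomorphism $\hh\otimes\hh\cong\mathrm{HS}(\hh,\hh)$ and let $L\in\mathrm{HS}(\hh,\hh)$ be the Hilbert--Schmidt operator corresponding to $\loss$, so that the reproducing property on the product kernel $\bar h$ gives $\loss(z,y)=\scal{h(z,\cdot)}{L\,h(y,\cdot)}_\hh$. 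Taking $\zmap(z)=h(z,\cdot)$ and $\ymap(y)=L\,h(y,\cdot)$ and normalizing by $M=\eta\|\loss\|_{\hh\otimes\hh}$ yields $\|\ymap(y)\|\leq 1$ and $\closs\leq \eta^2\|\loss\|_{\hh\otimes\hh}$, using $\|L\|_{\mathrm{op}}\leq\|L\|_{\mathrm{HS}}=\|\loss\|_{\hh\otimes\hh}$.

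The main obstacle I anticipate is case \textbf{(b)}: getting the constant $\closs\leq 2(2\eta^4+1)$ to match exactly requires care in how the normalization factor $M$ interacts with $\|\zmap(z)\|$, so I expect to spend most of the computational effort there balancing the three summands of $\|\ymap(y)\|^2$ and $\|\zmap(z)\|^2$ and applying $h(y,y)\leq\eta^2$ at the right moment. The other cases are essentially one-line reproducing-property identities followed by a symmetric rescaling; measurability and boundedness in all four cases reduce to continuity of $h$, compactness of $\Y$, and the uniform bound $\eta$, so they will be handled briefly in a single remark rather than case by case.
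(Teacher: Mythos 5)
Your proposal is correct and follows essentially the same route as the paper: in each case one realizes $\loss$ via the reproducing property (in $\hh$, in $\hh\oplus\R\oplus\R$ for (b), and via the Hilbert--Schmidt operator identified with $\loss$ for (d)) and then rescales the two feature maps symmetrically to enforce $\|\ymap(y)\|_\hh\leq 1$ while tracking $\closs$. The only cosmetic differences are that the paper deduces (a) from (c) rather than directly, and in (b) it writes the asymmetry through a bounded operator $V$ acting on a single embedding $\zeta(y)=(h(y,y),h(y,\cdot),1)$, which yields the stated constant $2(2\eta^4+1)$; your direct construction gives a (slightly sharper) bound that still satisfies the claim.
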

\Cref{thm:rkhs-and-self} provides four interesting results. First, as already mentioned, the definition of \structshort{} function recovers and is more general than that of positive definite kernel. Second, we see that our framework recovers the {\em Kernel Dependency Estimation (KDE)} approach \citep{weston2002,cortes2005}, which corresponds to a structured prediction setting with loss function $\loss(z,y) = \|h(z,\cdot) - h(y,\cdot)\|_\hh^2 = h(z,z) + h(y,y) - 2 h(z,y)$.

Point $(c)$ shows that $\loss$ admits an \structshort{} if the family of functions $\{\loss(\cdot, y)\}_{y\in\Y}$ parametrized by $y\in\Y$, is uniformly contained in a ball in $\hh$. Finally, point $(d)$ of \Cref{thm:rkhs-and-self} reports a more general result, showing that {\em all functions that belong to the RKHS obtained as the tensor product of $\hh$ with itself admit an \structshort{}}. This recovers a large family of loss functions as discussed in the example below.

\begin{example}[Smooth Functions on $\Y\times\Y$ with $\Y = {[}-B,B{]}^d$ admit an \structshort{}]\label{ex:smoothness}
Let $\loss\in C^{\infty}(\Y\times\Y)$, where $C^{\infty}(\Y)$ denotes the space of smooth functions over $\Y$. Let $\hh = W^{d,2}(\Y)$ be the Sobolev space of functions over $\Y$ with up to order $d$ square integrable weak derivatives \citep{adams2003sobolev}. We have $C^{\infty}(\Y\times\Y) = C^{\infty}(\Y)\otimes C^{\infty}(\Y) \subset \hh\otimes\hh$. Then, \cref{thm:rkhs-and-self} (d) guaratntees that $\loss$ admits an \structshort{}. For more details see \citep{luise2018differential}.
\end{example}

\subsection{\bf \structshort{} and Regularity}

The connection between ILE and RKHSs suggest the definition of \structshort{} to be somewhat related to the concept of smoothness or regularity of a function. The following result goes beyond RKHSs and investigates this question in further detail.
\begin{restatable}[\structshort{} \& Regularity]{theorem}{PSmoothness}\label{prop:self-and-smoothness}
Let $\Z = \Y = [-B, B]^d$, $B > 0$. A function $\loss: \Y \times \Y \to \R$ admits an \structshort{} when at least one of the following conditions hold:
\begin{enumerate}[(a)]
\item $d=1$ and $\loss$ is $\alpha$-H\"older continuous with $\alpha > 1/2$ or it is of bounded variation and $\alpha$-H\"older continuous with $\alpha > 0$. 
\item $\loss(z,y) = v(z-y)$, where $v$ is a function such that $\closs = \int_{-\infty}^{+\infty} |\widehat{v}(\omega)| d \omega < \infty$ and $\widehat{v}$ is the Fourier transform of $v$.
\item The mixed partial derivative $\loss_{y_1,\dots,y_d}:\Y\to\R$ of $\loss$ exists almost everywhere and $\loss_{y_1,\dots,y_d}\in L^p(\Y)$ with $p>1$.
\end{enumerate}
\end{restatable}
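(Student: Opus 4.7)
For each case I plan to exhibit an explicit separable Hilbert space $\hh$ and measurable maps $\zmap,\ymap:\Y\to\hh$ verifying \cref{def:self}, using Fourier analysis as the common tool. Case (b) is the simplest and sets the template. From the inversion formula $v(u)=\int_\R \widehat v(\omega)\,e^{i\omega u}\,d\omega$ and the factorization $e^{i\omega(z-y)}=e^{i\omega z}\,e^{-i\omega y}$, I take $\hh=L^2(\R,\,|\widehat v(\omega)|\,d\omega;\CC)$ (regarded as a real Hilbert space of twice the dimension) and set $\zmap(z)(\omega)=\operatorname{sign}(\widehat v(\omega))\,e^{i\omega z}$, $\ymap(y)(\omega)=e^{-i\omega y}$. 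Then $\langle \zmap(z),\ymap(y)\rangle_\hh=\int \widehat v(\omega)\,e^{i\omega(z-y)}d\omega=v(z-y)$, and $\|\zmap(z)\|_\hh^2=\|\ymap(y)\|_\hh^2=\int|\widehat v|=\closs$; a rescaling by $\closs^{-1/2}$ enforces the unit bound on $\ymap$ required by the definition.

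\textbf{Case (a).} I periodize $\loss$ by a smooth symmetric extension to $(\R/4B\Z)^2$ and expand it in a double Fourier basis $\loss(z,y)=\sum_{n,m\in\Z} c_{n,m}\,e_n(z)\,e_m(y)$. The central point is that both hypotheses yield absolute summability of coefficients, $\sum_{n,m}|c_{n,m}|<\infty$: Bernstein's theorem gives this whenever $\loss$ is $\alpha$-H\"older with $\alpha>1/2$ in each variable, and Zygmund's refinement covers the BV plus H\"older-$\alpha$ case. Once absolute summability is in hand, the template of (b) transplants verbatim to the discrete setting: take $\hh=\ell^2(\Z^2,|c_{n,m}|)$ and define $\zmap(z)_{n,m}=\operatorname{sign}(c_{n,m})\,e_n(z)$, $\ymap(y)_{n,m}=e_m(y)$, obtaining $\closs\leq\sum|c_{n,m}|<\infty$. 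The only real check is the bivariate (tensor-product) form of Bernstein/Zygmund, which follows from the univariate versions applied slice-wise and Fubini.

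\textbf{Case (c), the main obstacle.} The plan is to iterate the fundamental theorem of calculus in the $y$-variables to write
\[
\loss(z,y)=\sum_{S\subseteq\{1,\dots,d\}}\int_{[-B,y]_S} \bigl(\partial^{|S|}_{y_S}\loss\bigr)(z,t,\,-B_{S^c})\,\prod_{i\in S}dt_i,
\]
so that the only full-order term in $y$ is $\int \loss_{y_1,\dots,y_d}(z,t)\,\prod_i\mathbf{1}(t_i\leq y_i)\,dt$, while the remaining terms depend on strictly fewer $y$-coordinates and admit an \structshort{} either trivially (constant in $y$) or by induction on dimension; adding them back preserves the \structshort{} property via the direct sum of the corresponding Hilbert spaces. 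For the full-order term, if $p\geq 2$ then $L^p\subset L^2$ on the bounded domain and I can simply take $\hh=L^2([-B,B]^d)$, $\zmap(z)=\loss_{y_1,\dots,y_d}(z,\cdot)$, $\ymap(y)=\mathbf{1}(\cdot\leq y)$, with $\|\ymap(y)\|_\hh\leq(2B)^{d/2}$. The hard case is $1<p<2$: here I expand both factors in Fourier series, use Hausdorff--Young to place the coefficients of $\loss_{y_1,\dots,y_d}(z,\cdot)$ in $\ell^{p'}$ with $p'>2$, note that the coefficients of the indicator decay like $\prod_i 1/(1+|n_i|)$ and hence lie in $\ell^p$ precisely because $p>1$, and introduce a weight $w(n)$ on $\Z^d$ absorbing the mismatch so that both feature families embed into a common weighted $\ell^2(\Z^d,w)$. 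Choosing $w$ so that each side has uniformly bounded $\ell^2(w)$-norm, via a single H\"older estimate, is the technically delicate step and the principal obstacle of the argument.
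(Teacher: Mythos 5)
Your treatment of case (b) coincides with the paper's (which, due to a labelling slip, appears as ``(c)'' in its proof): Fourier inversion, the factorization $e^{i\omega(z-y)}=e^{i\omega z}e^{-i\omega y}$, and feature maps carrying the weight $|\widehat v|^{1/2}$ in an $L^2$ space, followed by the rescaling of \cref{lem:self-equivalence-not-normalized}. Your case (a) also follows the paper's route: expand $\loss$ in a double Fourier series, establish absolute summability of the coefficients, and apply the construction of \cref{lemma:abs-cont-functions} (your $\ell^2(\mathbb{Z}^2,|c_{n,m}|)$ is the same device). The gap here is your claim that the bivariate Bernstein/Zygmund theorems ``follow from the univariate versions applied slice-wise and Fubini.'' They do not: slice-wise application gives $\sum_m|c_m(z)|<\infty$ for each fixed $z$, but joint summability $\sum_{n,m}|c_{n,m}|<\infty$ requires genuinely two-dimensional hypotheses (the theorems of M\'oricz cited by the paper are phrased in terms of mixed differences and Hardy--Krause variation, not coordinate-wise regularity). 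Since the entire case rests on this summability, it cannot be waved through.

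For case (c) you depart from the paper, which folds this case into the same absolute-summability argument: M\'oricz's theorem asserts that if the full mixed partial derivative lies in $L^p$ with $p>1$ then the multiple Fourier series converges absolutely, after which \cref{lemma:abs-cont-functions} applies. Your FTC decomposition is a genuinely different and more hands-on route, but as written it has two real problems. First, the lower-order terms $\partial^{|S|}_{y_S}\loss\bigl(z,t_S,(-B)_{S^c}\bigr)$ with $|S|<d$ live on boundary faces and are not controlled by a hypothesis that concerns only the top-order mixed derivative, so your induction on dimension does not close for $d\geq 2$ (for $d=1$ the only boundary term is $\loss(z,-B)$, bounded by continuity, so there the argument works). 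Second, your construction of the top-order term needs the slice-wise uniform bound $\sup_z\|\partial^d_y\loss(z,\cdot)\|_{L^p}<\infty$, whereas the condition the paper actually uses is joint integrability of the mixed derivative in all variables; these readings differ and you would need to commit to one. The Hausdorff--Young plus weighted-$\ell^2$ device for $1<p<2$ is workable (a weight $w(n)=\prod_i(1+|n_i|)^{-\delta}$ with $1-2/p'<\delta<1$ closes the H\"older estimate), but it only handles the top-order term, so the decomposition issue remains the principal unrepaired gap.
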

\cref{prop:self-and-smoothness} shows that any function that is sufficiently regular admits an \structshort{}. This allows to recover most loss functions used in machine learning and robust estimation as special cases.

\begin{example}[Robust Estimation]
We have already observed that smooth functions such as the least-squares and logistic loss admit an \structshort{} according to the discussion in \Cref{ex:smoothness}. Here we observe that also the hinge loss, used in binary classification, and most loss functions used for scalar regression on $\Y = [0,1]$ admit an \structshort{}, albeit being not smooth. Indeed, most common loss functions used in these contexts are Lipschitz continuous and therefore satisfy \Cref{prop:self-and-smoothness} $(a)$. Notable examples are loss functions used for {\em robust estimation} such as the absolute value, Huber, Cauchy, German-McLure, ``Fair'' and $L_2-L_1$ \citep{huber2011}. All these functions are differentiable almost everywhere, with uniformly bounded derivatives and thus satisfy \Cref{prop:self-and-smoothness} $(c)$.
\end{example}

\subsection{Composition Rules for \structshort{}} 

A natural question is whether some operations over \structshort{} functions preserve the characterization introduced in \Cref{def:self}. Below we provide a set of rules that allow to ``build'' new \structshort{} functions from known ones. 

\begin{restatable}{theorem}{TCompositions}\label{thm:compositions}
Let $\Z$ and $\Y$ be compact sets. Then $\loss:\Z\times\Y\to\R$ admits an \structshort{} if one of the following holds:
\begin{enumerate}[(a)]
\item{\em(Restriction)} There exist two sets $\bar\Z\supseteq\Z$, $\bar\Y\supseteq\Y$ and $\bar\loss:\bar\Z\times\bar\Y\to\R$ such that $\bar\loss$ admits an \structshort{} and its restriction to $\Z\times\Y$ corresponds to $\loss$, namely
\eqals{
\loss = \bar\loss|_{\Z\times\Y}.
}
In this case $\closs \leq \msf{c}_{\bar\loss}$.

\item{\em(Right Composition)} There exits $\bar\Z,\bar\Y$ and a \structshort{} $\bar\loss:\bar\Z\times\bar\Y\to\R$, such that  
\eqal{\label{eq:composition-self}
	\loss(z,y) = \alpha(z)\bar\loss(A(z),B(y))\beta(y),
}
with $A:\Z\to\bar\Z$, $B:\Y\to\bar\Y$, $\alpha:\Z\to\R$ and $\beta: \Y \to \R$ continuous function, with $\sup_{z\in\Z} |\alpha(z)|\leq\bar\alpha$ and $\sup_{y\in\Y}|\beta(y)|\leq\bar\beta$ with $\bar\alpha,\bar\beta\in\R$. Then $\closs\leq\bar\alpha\bar\beta \msf{c}_{\bar\loss}$.

\item{\em(Left Composition)} There exist $P\in\N$, spaces $(\Z_p)_{p=1}^P$, $(\Y_p)_{p=1}^P$ and corresponding \structshort{} $\loss_p:\Z_p\times\Y_p\to\R$ such that $\Z= \Z_1\times\cdots\times\Z_P$,~ $\Y = \Y_1\times\cdots\times\Y_P$ and 
\eqals{
	\loss(z,y) = \Gamma \big(\loss_1(z_1,y_1),\dots,\loss_P(z_P,y_P)\big),
}
for any $z=(z_1,\dots,z_P)\in\Z$ and $y = (y_1,\dots,y_P)\in\Y$, where $\Gamma:\R^P \to \R$ is an analytic function (e.g. a polynomial).
\end{enumerate}
\end{restatable}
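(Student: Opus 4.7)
The plan is to handle the three parts separately, with part (a) being immediate, part (b) following from a rescaling argument, and part (c) being the main work—proceeding by first ``lifting'' each $\loss_p$ to a function on $\Z\times\Y$ via (b), then proving closure of the ILE class under (countable) sums and products, and finally expanding $\Gamma$ as a power series and assembling the pieces.

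For (a), let $\bar\hh,\bar\zmap,\bar\ymap$ realize the ILE of $\bar\loss$. Restricting $\bar\zmap$ to $\Z$ and $\bar\ymap$ to $\Y$ gives measurable bounded maps into $\bar\hh$ with $\|\bar\ymap|_\Y(y)\|\leq 1$, and the inner product identity is inherited. Hence $\closs\leq \msf{c}_{\bar\loss}$ follows directly from $\sup_{z\in\Z}\|\bar\zmap(z)\|\leq\sup_{z\in\bar\Z}\|\bar\zmap(z)\|$.

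For (b), define $\ymap(y)=\frac{\beta(y)}{\bar\beta}\bar\ymap(B(y))$ and $\zmap(z)=\bar\beta\,\alpha(z)\,\bar\zmap(A(z))$ in the ambient $\bar\hh$. Continuity of $\alpha,\beta,A,B$ ensures measurability and boundedness, we have $\|\ymap(y)\|\leq \|\bar\ymap(B(y))\|\leq 1$, and a direct computation gives $\scal{\zmap(z)}{\ymap(y)}=\alpha(z)\beta(y)\bar\loss(A(z),B(y))=\loss(z,y)$. The bound $\closs\leq\bar\alpha\bar\beta\,\msf{c}_{\bar\loss}$ is then immediate.

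For (c), the main obstacle is the infinite-series case. I would proceed through three lemmas. \emph{Lifting}: for each $p$, the map $(z,y)\mapsto \loss_p(z_p,y_p)$ on $\Z\times\Y$ admits an ILE by applying (b) with $A,B$ the coordinate projections and $\alpha,\beta\equiv 1$, so I may assume all $\loss_p$ are defined on the common product space. \emph{Closure under products}: if $\loss_a,\loss_b$ admit ILEs $(\hh_a,\zmap_a,\ymap_a)$, $(\hh_b,\zmap_b,\ymap_b)$ on $\Z\times\Y$, then setting $\hh=\hh_a\otimes\hh_b$, $\zmap=\zmap_a\otimes\zmap_b$, $\ymap=\ymap_a\otimes\ymap_b$ realizes an ILE for the pointwise product, with $\|\ymap(y)\|\leq 1$ and $\closs\leq \msf{c}_{\loss_a}\msf{c}_{\loss_b}$. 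Iterating, each monomial $\prod_p \loss_p^{n_p}$ admits an ILE $(\hh_{\mathbf n},\zmap_{\mathbf n},\ymap_{\mathbf n})$ with $M_{\mathbf n}:=\prod_p \msf{c}_{\loss_p}^{n_p}$ as bound on $\|\zmap_{\mathbf n}\|$. \emph{Closure under weighted sums}: writing the (absolutely convergent) power series $\Gamma(t_1,\dots,t_P)=\sum_{\mathbf n} c_{\mathbf n}\prod_p t_p^{n_p}$ and setting $S=\sum_{\mathbf n}|c_{\mathbf n}|M_{\mathbf n}$ (finite because $\Gamma$ is analytic on a region containing the bounded range of $(\loss_1,\dots,\loss_P)$), choose weights $a_{\mathbf n}^2=|c_{\mathbf n}|M_{\mathbf n}/S$ and $b_{\mathbf n}=c_{\mathbf n}/a_{\mathbf n}$ on the Hilbert direct sum $\hh=\bigoplus_{\mathbf n}\hh_{\mathbf n}$. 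Defining $\ymap(y)$ componentwise as $a_{\mathbf n}\ymap_{\mathbf n}(y)$ and $\zmap(z)$ as $b_{\mathbf n}\zmap_{\mathbf n}(z)$ gives $\|\ymap(y)\|^2\leq \sum_{\mathbf n} a_{\mathbf n}^2=1$, while $\|\zmap(z)\|^2\leq \sum_{\mathbf n} b_{\mathbf n}^2 M_{\mathbf n}^2=S^2$, and by construction $\scal{\zmap(z)}{\ymap(y)}_\hh=\sum_{\mathbf n}c_{\mathbf n}\prod_p \loss_p(z_p,y_p)^{n_p}=\loss(z,y)$. For the polynomial case the argument collapses to a finite direct sum so convergence is automatic; the hard part in the general case is verifying that the Hilbert-space constructions (tensor products, direct sums of measurable maps) preserve measurability and that $S<\infty$, which I would derive from analyticity of $\Gamma$ on a neighborhood of the compact set $\prod_p[-\msf{c}_{\loss_p},\msf{c}_{\loss_p}]$ containing the range of each $\loss_p$.
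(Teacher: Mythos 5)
Your proposal is correct and follows essentially the same route as the paper: parts (a) and (b) are identical, and for (c) both arguments expand $\Gamma$ as an absolutely convergent power series, realize each monomial via tensor products of the component feature maps, and assemble the series on a countable Hilbert direct sum, with absolute summability of the weighted coefficients justified by the (entire) power-series form of $\Gamma$. The only differences are cosmetic: you split the coefficients asymmetrically so that $\|\ymap(y)\|\le 1$ holds directly, whereas the paper uses the symmetric split $\sqrt{|\alpha_t|}$ on both factors and then renormalizes via its unnormalized-ILE lemma (and additionally verifies continuity of the resulting feature maps as uniform limits of continuous maps).
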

The result above provides us several tools to build new \structshort{} functions. In particular, \Cref{thm:compositions} $(a)$ shows that we can always restrict a \structshort{} function on a smaller pair of output-label sets and still enjoy the same properties of the original loss, also in terms of universal consistency and rates of the resulting structured prediction estimator. \Cref{thm:compositions} $(b)$ allows to extend a \structshort{} function $\bar\loss$ to other output-label pairs by means of the embeddings $A$ and $B$ and the weighting functions $\alpha$ and $\beta$ in \Cref{eq:composition-self}.

\begin{example}[Restriction of Smooth functions on compact sets admit an \structshort{}]
Let $\loss$ be a smooth function over $[-B,B]^d$ with $B>0$. Then, by \cref{thm:compositions} (a), $\loss$ admits an \structshort{} on every compact set $\Y\subseteq[-B,B]^d$.
\end{example}
Finally, \Cref{thm:compositions} $(c)$ shows that any combination (namely sum and multiplications) of \structshort{} functions is still \structshort{}. To highlight the importance of this result we clarify it in the following.
\begin{corollary}\label{cor:sum-and-products-of-self}
Let $\loss_1:\Z_1\times\Y_1\to\R$ and $\loss_2:\Z_2\times\Y_2\to\R$ admit an \structshort{}. Then $\loss:(\Z_1\times\Z_2)\times(\Y_1\times\Y_2)\to\R$ if, for any $z_i\in\Z_i, y_i\in\Y_i$ and $i=1,2$, one of the following conditions hold:
\begin{enumerate}[(a)]
\item $\loss((z_1,z_2),(y_1,y_2)) = \loss_1(z_1,y_2) + \loss_2(z_2,y_2)$,
\item $\loss((z_1,z_2),(y_1,y_2)) = \loss_1(z_1,y_2)\loss_2(z_2,y_2)$. 
\end{enumerate}
\end{corollary}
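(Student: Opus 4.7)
The plan is to recognize that \cref{cor:sum-and-products-of-self} follows directly as an instantiation of \cref{thm:compositions}(c), the left composition rule, with $P=2$. In that rule, the hypothesis is that $\loss((z_1,\dots,z_P),(y_1,\dots,y_P)) = \Gamma\big(\loss_1(z_1,y_1),\dots,\loss_P(z_P,y_P)\big)$ for some analytic $\Gamma:\R^P\to\R$, and the conclusion is that $\loss$ admits an ILE whenever each $\loss_p$ does. So the whole task reduces to exhibiting a suitable analytic $\Gamma:\R^2\to\R$ for each of the two cases in the statement.

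For case (a), I would take $\Gamma(u_1,u_2) = u_1 + u_2$. This is a polynomial, hence analytic on $\R^2$, and by definition it satisfies $\Gamma(\loss_1(z_1,y_1),\loss_2(z_2,y_2)) = \loss_1(z_1,y_1)+\loss_2(z_2,y_2) = \loss((z_1,z_2),(y_1,y_2))$. Applying \cref{thm:compositions}(c) then yields that $\loss$ admits an ILE. For case (b), I would instead take $\Gamma(u_1,u_2) = u_1 u_2$, again a polynomial and hence analytic, for which $\Gamma(\loss_1(z_1,y_1),\loss_2(z_2,y_2)) = \loss_1(z_1,y_1)\loss_2(z_2,y_2) = \loss((z_1,z_2),(y_1,y_2))$, and once more \cref{thm:compositions}(c) delivers the ILE property.

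There is essentially no obstacle here beyond matching notation: both $\Gamma$s used are polynomials, the strongest type of analytic function, so the analyticity hypothesis of \cref{thm:compositions}(c) is trivially satisfied. The only thing worth noting for completeness is that $\Z=\Z_1\times\Z_2$ and $\Y=\Y_1\times\Y_2$ are compact as products of compact sets, so the ambient hypotheses of \cref{thm:compositions} are met whenever they were met for the individual factors. Thus the corollary is immediate, and the proof amounts to a one-line invocation of the left composition rule for each of the two $\Gamma$s.
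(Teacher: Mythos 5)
Your proposal is correct and matches the paper's intent exactly: the corollary is presented as an immediate instantiation of the left composition rule (\cref{thm:compositions}(c)) with $P=2$ and $\Gamma(u_1,u_2)=u_1+u_2$ or $\Gamma(u_1,u_2)=u_1u_2$, both polynomials and hence analytic. Nothing further is needed.
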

The result above allows to consider general combinations of loss functions within the framework considered in this work. In particular, the following remark shows how multitask learning problems (possibly with structure on the output) can be recovered in this setting.

\begin{example}[Multitask Learning]
In multitask learning (MTL) settings the goal is to solve multiple separate supervised problems simultaneously \citep{evgeniou2004regularized,alvarez2012kernels}. The loss functions used in MTL typically consist in the sum of ``single task'' loss functions over the separate tasks, such as least-squares for regression or logistic/hinge for classification. Since according to \Cref{cor:sum-and-products-of-self} the sum of \structshort{} functions is still \structshort{}, we see that multitask learning is naturally recovered by the framework considered in this work. This fact was observed in \citep{ciliberto2017consistent}, where the structured prediction perspective on the MTL problem allowed to address the question of how to impose non-linear relations among multiple tasks by introducing the constraint output set $\Z \subset \Y = \R^T$.
\end{example}

\section{Conclusions}\label{sec:conclusions}

In this work we have presented a general framework for structured prediction. Our work revolves around the key notion of \struct{} (\structshort{}), which allows us to study structured prediction applications where the output space is not finite, differently from most previous work on the topic. This work significantly expanded upon \cite{ciliberto2016}, providing novel insights on the \structshort{} property as well as new algorithms for structured prediction and their corresponding theoretical analysis. Among the main contributions of this work: (a) we showed that the proposed framework can be applied to a wide range of structured prediction problems, providing a systematic approach to derive estimators with strong theoretical guarantees. In particular, we showed that it is possible to leverage existing algorithms from the vector-valued regression literature to obtain novel structured prediction estimators that enjoy equivalent statistical properties of the original method, but with reduced computational requirements. (b) We performed a refined analysis of the excess risk bounds, showing that the statistical rates and computational cost of the considered algorithms are adaptive to standard regularity properties of the learning problem. (c) We provided a number of sufficient conditions to verify whether a given loss admits an \structshort{}. These conditions are significantly easier to verify in practice in comparison to the general definition. Leveraging these conditions we proved that most loss functions used in machine learning indeed admit an \structshort{} and are therefore suited to our framework.

Relevant directions for future work will involve: (a) considering alternative estimators within the \structshort{} framework not necessarily minimizing the square loss in the surrogate space; (b) learning the structure of the output space when it is not fully known a-priori (for instance in manifold regression settings where the output manifold is only accessible via examples). This could be addressed by parametrizing a family of candidate output spaces and finding the optimal parameters while simultaneously fitting the structured prediction model. Finally, (c) an interesting question is to leverage further additional knowledge on the problem structure to improve the overall learning rates of the estimator. This direction has been recently preliminarily investigated in \cite{cortes2016structured,ciliberto2019localized}, where an explicit factorization of the loss function was used to design problem-specific algorithms and perform a refined analysis of their generalization properties.

{
\bibliography{biblio}

\begin{thebibliography}{83}
\providecommand{\natexlab}[1]{#1}
\providecommand{\url}[1]{\texttt{#1}}
\expandafter\ifx\csname urlstyle\endcsname\relax
  \providecommand{\doi}[1]{doi: #1}\else
  \providecommand{\doi}{doi: \begingroup \urlstyle{rm}\Url}\fi

\bibitem[Adams and Fournier(2003)]{adams2003sobolev}
Robert~A Adams and John~JF Fournier.
\newblock \emph{Sobolev spaces}, volume 140.
\newblock Elsevier, 2003.

\bibitem[Alahari et~al.(2008)Alahari, Kohli, and Torr]{alahari2008reduce}
Karteek Alahari, Pushmeet Kohli, and Philip~HS Torr.
\newblock Reduce, reuse \& recycle: Efficiently solving multi-label mrfs.
\newblock In \emph{Computer Vision and Pattern Recognition, 2008. CVPR 2008.
  IEEE Conference on}, pages 1--8. IEEE, 2008.

\bibitem[Aliprantis and Border(2006)]{aliprantis2006}
Charalambos~D Aliprantis and Kim Border.
\newblock \emph{Infinite dimensional analysis: a hitchhiker's guide}.
\newblock Springer Science \& Business Media, 2006.

\bibitem[Alvarez et~al.(2012)Alvarez, Rosasco, Lawrence,
  et~al.]{alvarez2012kernels}
Mauricio~A Alvarez, Lorenzo Rosasco, Neil~D Lawrence, et~al.
\newblock Kernels for vector-valued functions: A review.
\newblock \emph{Foundations and Trends in Machine Learning}, 4\penalty0
  (3):\penalty0 195--266, 2012.

\bibitem[Aronszajn(1950)]{aronszajn1950theory}
Nachman Aronszajn.
\newblock Theory of reproducing kernels.
\newblock \emph{Transactions of the American mathematical society}, 68\penalty0
  (3):\penalty0 337--404, 1950.

\bibitem[Bahl et~al.(1986)Bahl, Brown, De~Souza, and Mercer]{bahl1986maximum}
Lalit Bahl, Peter Brown, Peter De~Souza, and Robert Mercer.
\newblock Maximum mutual information estimation of hidden markov model
  parameters for speech recognition.
\newblock In \emph{Acoustics, Speech, and Signal Processing, IEEE International
  Conference on ICASSP'86.}, volume~11, pages 49--52. IEEE, 1986.

\bibitem[Bakir et~al.(2007)Bakir, Hofmann, Schölkopf, Smola, Taskar, and
  Vishwanathan]{bakir2007}
G{\"o}khan Bakir, Thomas Hofmann, Bernhard Schölkopf, Alexander~J. Smola, Ben
  Taskar, and S.V.N Vishwanathan.
\newblock \emph{Predicting structured data}.
\newblock MIT press, 2007.

\bibitem[Bartlett et~al.(2006)Bartlett, Jordan, and McAuliffe]{bartlett2006}
Peter~L Bartlett, Michael~I Jordan, and Jon~D McAuliffe.
\newblock Convexity, classification, and risk bounds.
\newblock \emph{Journal of the American Statistical Association}, 101\penalty0
  (473):\penalty0 138--156, 2006.

\bibitem[Bauer et~al.(2007)Bauer, Pereverzev, and
  Rosasco]{bauer2007regularization}
Frank Bauer, Sergei Pereverzev, and Lorenzo Rosasco.
\newblock On regularization algorithms in learning theory.
\newblock \emph{Journal of complexity}, 23\penalty0 (1):\penalty0 52--72, 2007.

\bibitem[Berlinet and Thomas-Agnan(2011)]{berlinet2011}
Alain Berlinet and Christine Thomas-Agnan.
\newblock \emph{Reproducing kernel Hilbert spaces in probability and
  statistics}.
\newblock Springer Science \& Business Media, 2011.

\bibitem[Blaschko and Lampert(2008)]{blaschko2008learning}
Matthew~B Blaschko and Christoph~H Lampert.
\newblock Learning to localize objects with structured output regression.
\newblock In \emph{European conference on computer vision}, pages 2--15.
  Springer, 2008.

\bibitem[Blondel(2019)]{blondel2019structured}
Mathieu Blondel.
\newblock Structured prediction with projection oracles.
\newblock In \emph{Advances in Neural Information Processing Systems}, pages
  12145--12156, 2019.

\bibitem[Brouard et~al.(2016)Brouard, Szafranski, and
  D'Alch{\'e}-Buc]{brouard2016input}
C{\'e}line Brouard, Marie Szafranski, and Florence D'Alch{\'e}-Buc.
\newblock Input output kernel regression: Supervised and semi-supervised
  structured output prediction with operator-valued kernels.
\newblock \emph{The Journal of Machine Learning Research}, 17\penalty0
  (1):\penalty0 6105--6152, 2016.

\bibitem[Caponnetto and De~Vito(2007)]{caponnetto2007}
Andrea Caponnetto and Ernesto De~Vito.
\newblock Optimal rates for the regularized least-squares algorithm.
\newblock \emph{Foundations of Computational Mathematics}, 7\penalty0
  (3):\penalty0 331--368, 2007.

\bibitem[Carratino et~al.(2018)Carratino, Rudi, and
  Rosasco]{carratino2018learning}
Luigi Carratino, Alessandro Rudi, and Lorenzo Rosasco.
\newblock Learning with sgd and random features.
\newblock In \emph{Advances in Neural Information Processing Systems}, pages
  10192--10203, 2018.

\bibitem[Castaing and Valadier(2006)]{castaing2006}
Charles Castaing and Michel Valadier.
\newblock \emph{Convex analysis and measurable multifunctions}, volume 580.
\newblock Springer, 2006.

\bibitem[Ciliberto et~al.(2016)Ciliberto, Rosasco, and Rudi]{ciliberto2016}
Carlo Ciliberto, Lorenzo Rosasco, and Alessandro Rudi.
\newblock A consistent regularization approach for structured prediction.
\newblock \emph{Advances in Neural Information Processing Systems 29 (NIPS)},
  pages 4412--4420, 2016.

\bibitem[Ciliberto et~al.(2017)Ciliberto, Rudi, Rosasco, and
  Pontil]{ciliberto2017consistent}
Carlo Ciliberto, Alessandro Rudi, Lorenzo Rosasco, and Massimiliano Pontil.
\newblock Consistent multitask learning with nonlinear output relations.
\newblock In \emph{Advances in Neural Information Processing Systems}, pages
  1983--1993, 2017.

\bibitem[Ciliberto et~al.(2019)Ciliberto, Bach, and
  Rudi]{ciliberto2019localized}
Carlo Ciliberto, Francis Bach, and Alessandro Rudi.
\newblock Localized structured prediction.
\newblock \emph{Advances in Neural Information Processing Systems (NeurIPS)},
  2019.

\bibitem[Cortes et~al.(2005)Cortes, Mohri, and Weston]{cortes2005}
Corinna Cortes, Mehryar Mohri, and Jason Weston.
\newblock A general regression technique for learning transductions.
\newblock In \emph{Proceedings of the 22nd international conference on Machine
  learning}, pages 153--160. ACM, 2005.

\bibitem[Cortes et~al.(2016)Cortes, Kuznetsov, Mohri, and
  Yang]{cortes2016structured}
Corinna Cortes, Vitaly Kuznetsov, Mehryar Mohri, and Scott Yang.
\newblock Structured prediction theory based on factor graph complexity.
\newblock In \emph{Advances in Neural Information Processing Systems}, pages
  2514--2522, 2016.

\bibitem[Cuturi and Blondel(2017)]{cuturi2017soft}
Marco Cuturi and Mathieu Blondel.
\newblock Soft-dtw: a differentiable loss function for time-series.
\newblock \emph{International Conference on Machine Learning}, 2017.

\bibitem[Devroye et~al.(2013)Devroye, Gy{\"o}rfi, and
  Lugosi]{devroye2013probabilistic}
Luc Devroye, L{\'a}szl{\'o} Gy{\"o}rfi, and G{\'a}bor Lugosi.
\newblock \emph{A probabilistic theory of pattern recognition}, volume~31.
\newblock Springer Science \& Business Media, 2013.

\bibitem[Djerrab et~al.(2018)Djerrab, Garcia, Sangnier, and d’Alch{\'e}
  Buc]{djerrab2018output}
Moussab Djerrab, Alexandre Garcia, Maxime Sangnier, and Florence d’Alch{\'e}
  Buc.
\newblock Output fisher embedding regression.
\newblock \emph{Machine Learning}, 107\penalty0 (8-10):\penalty0 1229--1256,
  2018.

\bibitem[Duchi et~al.(2010)Duchi, Mackey, and Jordan]{duchi2010}
John~C Duchi, Lester~W Mackey, and Michael~I Jordan.
\newblock On the consistency of ranking algorithms.
\newblock In \emph{Proceedings of the 27th International Conference on Machine
  Learning (ICML-10)}, pages 327--334, 2010.

\bibitem[Dudley(2002)]{dudley2002real}
Richard~M Dudley.
\newblock \emph{Real analysis and probability}, volume~74.
\newblock Cambridge University Press, 2002.

\bibitem[Engl et~al.(1996)Engl, Hanke, and Neubauer]{engl1996regularization}
Heinz~Werner Engl, Martin Hanke, and Andreas Neubauer.
\newblock \emph{Regularization of inverse problems}, volume 375.
\newblock Springer Science \& Business Media, 1996.

\bibitem[Evgeniou and Pontil(2004)]{evgeniou2004regularized}
Theodoros Evgeniou and Massimiliano Pontil.
\newblock Regularized multi--task learning.
\newblock In \emph{Proceedings of the tenth ACM SIGKDD international conference
  on Knowledge discovery and data mining}, pages 109--117. ACM, 2004.

\bibitem[Friedman et~al.(2001)Friedman, Hastie, and
  Tibshirani]{friedman2001elements}
Jerome Friedman, Trevor Hastie, and Robert Tibshirani.
\newblock \emph{The elements of statistical learning}, volume~10.
\newblock Springer series in statistics New York, NY, USA:, 2001.

\bibitem[Frogner et~al.(2015)Frogner, Zhang, Mobahi, Araya, and
  Poggio]{frogner2015learning}
Charlie Frogner, Chiyuan Zhang, Hossein Mobahi, Mauricio Araya, and Tomaso~A
  Poggio.
\newblock Learning with a wasserstein loss.
\newblock In \emph{Advances in Neural Information Processing Systems}, pages
  2053--2061, 2015.

\bibitem[Gao and Zhou(2013)]{gao2013}
Wei Gao and Zhi-Hua Zhou.
\newblock On the consistency of multi-label learning.
\newblock \emph{Artificial Intelligence}, 199:\penalty0 22--44, 2013.

\bibitem[Gy{\"o}rfi et~al.(2006)Gy{\"o}rfi, Kohler, Krzyzak, and
  Walk]{gyorfi2006distribution}
L{\'a}szl{\'o} Gy{\"o}rfi, Michael Kohler, Adam Krzyzak, and Harro Walk.
\newblock \emph{A distribution-free theory of nonparametric regression}.
\newblock Springer Science \& Business Media, 2006.

\bibitem[Huber and Ronchetti(2011)]{huber2011}
Peter~J Huber and Elvezio~M Ronchetti.
\newblock \emph{Robust statistics}.
\newblock Springer, 2011.

\bibitem[Joachims et~al.(2009)Joachims, Hofmann, Yue, and
  Yu]{joachims2009predicting}
Thorsten Joachims, Thomas Hofmann, Yisong Yue, and Chun-Nam Yu.
\newblock Predicting structured objects with support vector machines.
\newblock \emph{Communications of the ACM}, 52\penalty0 (11):\penalty0 97--104,
  2009.

\bibitem[Kadri et~al.(2013)Kadri, Ghavamzadeh, and Preux]{kadri2013generalized}
Hachem Kadri, Mohammad Ghavamzadeh, and Philippe Preux.
\newblock A generalized kernel approach to structured output learning.
\newblock In \emph{International Conference on Machine Learning}, pages
  471--479, 2013.

\bibitem[Kahane(1995)]{kahane1995fourier}
J-P Kahane.
\newblock \emph{Fourier series and wavelets}.
\newblock Routledge, 1995.

\bibitem[Karpathy and Fei-Fei(2015)]{karpathy2015deep}
Andrej Karpathy and Li~Fei-Fei.
\newblock Deep visual-semantic alignments for generating image descriptions.
\newblock In \emph{Proceedings of the IEEE Conference on Computer Vision and
  Pattern Recognition}, pages 3128--3137, 2015.

\bibitem[Korba et~al.(2018)Korba, Garcia, and d'Alch{\'e}
  Buc]{korba2018structured}
Anna Korba, Alexandre Garcia, and Florence d'Alch{\'e} Buc.
\newblock A structured prediction approach for label ranking.
\newblock In \emph{Advances in Neural Information Processing Systems}, pages
  8994--9004, 2018.

\bibitem[Lafferty et~al.(2001)Lafferty, McCallum, and
  Pereira]{lafferty2001conditional}
John Lafferty, Andrew McCallum, and Fernando~CN Pereira.
\newblock Conditional random fields: Probabilistic models for segmenting and
  labeling sequence data.
\newblock \emph{In Proceedings of the 18th International Conference on Machine
  Learning}, 2001.

\bibitem[Lampert et~al.(2009)Lampert, Blaschko, and
  Hofmann]{lampert2009efficient}
Christoph~H Lampert, Matthew~B Blaschko, and Thomas Hofmann.
\newblock Efficient subwindow search: A branch and bound framework for object
  localization.
\newblock \emph{IEEE transactions on pattern analysis and machine
  intelligence}, 31\penalty0 (12):\penalty0 2129--2142, 2009.

\bibitem[Lever et~al.(2012)Lever, Baldassarre, Patterson, Gretton, Pontil, and
  Gr{\"u}new{\"a}lder]{lever2012conditional}
Guy Lever, Luca Baldassarre, Sam Patterson, Arthur Gretton, Massimiliano
  Pontil, and Steffen Gr{\"u}new{\"a}lder.
\newblock Conditional mean embeddings as regressors.
\newblock In \emph{International Conference on Machine Learing (ICML)},
  volume~5, 2012.

\bibitem[Luise et~al.(2018)Luise, Rudi, Pontil, and
  Ciliberto]{luise2018differential}
Giulia Luise, Alessandro Rudi, Massimiliano Pontil, and Carlo Ciliberto.
\newblock Differential properties of sinkhorn approximation for learning with
  wasserstein distance.
\newblock In \emph{Advances in Neural Information Processing Systems}, pages
  5859--5870, 2018.

\bibitem[Luise et~al.(2019)Luise, Stamos, Pontil, and
  Ciliberto]{luise2019leveraging}
Giulia Luise, Dimitris Stamos, Massimiliano Pontil, and Carlo Ciliberto.
\newblock Leveraging low-rank relations between surrogate tasks in structured
  prediction.
\newblock \emph{International Conference on Machine Learning (ICML)}, 2019.

\bibitem[Mensch et~al.(2019)Mensch, Blondel, and
  Peyr{\'e}]{mensch2019geometric}
Arthur Mensch, Mathieu Blondel, and Gabriel Peyr{\'e}.
\newblock Geometric losses for distributional learning.
\newblock \emph{arXiv preprint arXiv:1905.06005}, 2019.

\bibitem[Micchelli and Pontil(2004)]{micchelli2004}
Charles~A Micchelli and Massimiliano Pontil.
\newblock Kernels for multi--task learning.
\newblock In \emph{Advances in Neural Information Processing Systems}, pages
  921--928, 2004.

\bibitem[Micchelli et~al.(2006)Micchelli, Xu, and
  Zhang]{micchelli2006universal}
Charles~A Micchelli, Yuesheng Xu, and Haizhang Zhang.
\newblock Universal kernels.
\newblock \emph{The Journal of Machine Learning Research}, 7:\penalty0
  2651--2667, 2006.

\bibitem[Morency et~al.(2007)Morency, Quattoni, and Darrell]{morency2007latent}
Louis-Philippe Morency, Ariadna Quattoni, and Trevor Darrell.
\newblock Latent-dynamic discriminative models for continuous gesture
  recognition.
\newblock In \emph{Computer Vision and Pattern Recognition, 2007. CVPR'07. IEEE
  Conference on}, pages 1--8. IEEE, 2007.

\bibitem[M{\'o}ricz and Veres(2007)]{moricz2007absolute}
Ferenc M{\'o}ricz and Antal Veres.
\newblock On the absolute convergence of multiple fourier series.
\newblock \emph{Acta Mathematica Hungarica}, 117\penalty0 (3):\penalty0
  275--292, 2007.

\bibitem[Mroueh et~al.(2012)Mroueh, Poggio, Rosasco, and Slotine]{mroueh2012}
Youssef Mroueh, Tomaso Poggio, Lorenzo Rosasco, and Jean-Jacques Slotine.
\newblock Multiclass learning with simplex coding.
\newblock In \emph{Advances in Neural Information Processing Systems (NIPS)
  25}, pages 2798--2806, 2012.

\bibitem[Muandet et~al.(2017)Muandet, Fukumizu, Sriperumbudur, Sch{\"o}lkopf,
  et~al.]{muandet2017kernel}
Krikamol Muandet, Kenji Fukumizu, Bharath Sriperumbudur, Bernhard
  Sch{\"o}lkopf, et~al.
\newblock Kernel mean embedding of distributions: A review and beyond.
\newblock \emph{Foundations and Trends{\textregistered} in Machine Learning},
  10\penalty0 (1-2):\penalty0 1--141, 2017.

\bibitem[Nadaraya(1964)]{nadaraya1964estimating}
Elizbar~A Nadaraya.
\newblock On estimating regression.
\newblock \emph{Theory of Probability and Its Applications}, 9\penalty0
  (1):\penalty0 141--142, 1964.

\bibitem[Nowak-Vila et~al.(2018)Nowak-Vila, Bach, and Rudi]{nowak2018sharp}
Alex Nowak-Vila, Francis Bach, and Alessandro Rudi.
\newblock Sharp analysis of learning with discrete losses.
\newblock \emph{AISTATS}, 2018.

\bibitem[Nowozin et~al.(2011)Nowozin, Lampert, et~al.]{nowozin2011structured}
Sebastian Nowozin, Christoph~H Lampert, et~al.
\newblock Structured learning and prediction in computer vision.
\newblock \emph{Foundations and Trends{\textregistered} in Computer Graphics
  and Vision}, 6\penalty0 (3--4):\penalty0 185--365, 2011.

\bibitem[Osokin et~al.(2017)Osokin, Bach, and
  Lacoste-Julien]{osokin2017structured}
Anton Osokin, Francis Bach, and Simon Lacoste-Julien.
\newblock On structured prediction theory with calibrated convex surrogate
  losses.
\newblock In \emph{Advances in Neural Information Processing Systems}, pages
  302--313, 2017.

\bibitem[Pedregosa et~al.(2017)Pedregosa, Bach, and
  Gramfort]{pedregosa2017consistency}
Fabian Pedregosa, Francis Bach, and Alexandre Gramfort.
\newblock On the consistency of ordinal regression methods.
\newblock \emph{The Journal of Machine Learning Research}, 18\penalty0
  (1):\penalty0 1769--1803, 2017.

\bibitem[Rahimi and Recht(2008)]{rahimi2008random}
Ali Rahimi and Benjamin Recht.
\newblock Random features for large-scale kernel machines.
\newblock In \emph{Advances in neural information processing systems}, pages
  1177--1184, 2008.

\bibitem[Ratliff et~al.(2006)Ratliff, Bagnell, and
  Zinkevich]{ratliff2006maximum}
Nathan~D Ratliff, J~Andrew Bagnell, and Martin~A Zinkevich.
\newblock Maximum margin planning.
\newblock In \emph{Proceedings of the 23rd international conference on Machine
  learning}, pages 729--736. ACM, 2006.

\bibitem[Rosasco et~al.(2005)Rosasco, De~Vito, and Verri]{rosasco2005spectral}
Lorenzo Rosasco, Ernesto De~Vito, and Alessandro Verri.
\newblock Spectral methods for regularization in learning theory.
\newblock \emph{DISI, Universita degli Studi di Genova, Italy, Technical Report
  DISI-TR-05-18}, 2005.

\bibitem[{Rudi} et~al.(2016){Rudi}, {Camoriano}, and {Rosasco}]{rudi2016rf}
A.~{Rudi}, R.~{Camoriano}, and L.~{Rosasco}.
\newblock {Generalization Properties of Learning with Random Features}.
\newblock \emph{ArXiv e-prints}, February 2016.

\bibitem[Rudi et~al.(2015)Rudi, Camoriano, and Rosasco]{rudi2015less}
Alessandro Rudi, Raffaello Camoriano, and Lorenzo Rosasco.
\newblock Less is more: Nystr\"{o}m computational regularization.
\newblock In C.~Cortes, N.~D. Lawrence, D.~D. Lee, M.~Sugiyama, and R.~Garnett,
  editors, \emph{Advances in Neural Information Processing Systems 28}, pages
  1648--1656. Curran Associates, Inc., 2015.
\newblock URL
  \url{http://papers.nips.cc/paper/5936-less-is-more-nystrom-computational-regularization.pdf}.

\bibitem[Rudi et~al.(2018)Rudi, Ciliberto, Marconi, and
  Rosasco]{rudi2018manifold}
Alessandro Rudi, Carlo Ciliberto, GianMaria Marconi, and Lorenzo Rosasco.
\newblock Manifold structured prediction.
\newblock In \emph{Advances in Neural Information Processing Systems}, pages
  5610--5621, 2018.

\bibitem[Sch{\"o}lkopf and Smola(2002)]{scholkopf2002}
Bernhard Sch{\"o}lkopf and Alexander~J Smola.
\newblock \emph{Learning with kernels: support vector machines, regularization,
  optimization, and beyond}.
\newblock MIT press, 2002.

\bibitem[Shalev-Shwartz and Ben-David(2014)]{shalev2014understanding}
Shai Shalev-Shwartz and Shai Ben-David.
\newblock \emph{Understanding machine learning: From theory to algorithms}.
\newblock Cambridge university press, 2014.

\bibitem[Shawe-Taylor and Cristianini(2004)]{shawe2004kernel}
John Shawe-Taylor and Nello Cristianini.
\newblock \emph{Kernel methods for pattern analysis}.
\newblock Cambridge university press, 2004.

\bibitem[Singh et~al.(2019)Singh, Sahani, and Gretton]{singh2019kernel}
Rahul Singh, Maneesh Sahani, and Arthur Gretton.
\newblock Kernel instrumental variable regression.
\newblock \emph{Advances in Neural Information Processing Systems}, 2019.

\bibitem[Smola and Sch{\"o}lkopf(2000)]{smola2000sparse}
Alex~J Smola and Bernhard Sch{\"o}lkopf.
\newblock Sparse greedy matrix approximation for machine learning.
\newblock \emph{17th International Conference on Machine Learning}, 2000.

\bibitem[Song et~al.(2009)Song, Huang, Smola, and Fukumizu]{song2009hilbert}
Le~Song, Jonathan Huang, Alex Smola, and Kenji Fukumizu.
\newblock Hilbert space embeddings of conditional distributions with
  applications to dynamical systems.
\newblock In \emph{Proceedings of the 26th Annual International Conference on
  Machine Learning}, pages 961--968. ACM, 2009.

\bibitem[Sriperumbudur et~al.(2011)Sriperumbudur, Fukumizu, and
  Lanckriet]{sriperumbudur2011universality}
Bharath~K Sriperumbudur, Kenji Fukumizu, and Gert~RG Lanckriet.
\newblock Universality, characteristic kernels and rkhs embedding of measures.
\newblock \emph{Journal of Machine Learning Research}, 12\penalty0
  (Jul):\penalty0 2389--2410, 2011.

\bibitem[Steinke et~al.(2010)Steinke, Hein, and
  Sch{\"o}lkopf]{steinke2010nonparametric}
Florian Steinke, Matthias Hein, and Bernhard Sch{\"o}lkopf.
\newblock Nonparametric regression between general riemannian manifolds.
\newblock \emph{SIAM Journal on Imaging Sciences}, 3\penalty0 (3):\penalty0
  527--563, 2010.

\bibitem[Steinwart and Christmann(2008)]{steinwart2008}
Ingo Steinwart and Andreas Christmann.
\newblock \emph{Support Vector Machines}.
\newblock Information Science and Statistics. Springer New York, 2008.

\bibitem[Steinwart et~al.(2011)Steinwart, Christmann, et~al.]{steinwart2011}
Ingo Steinwart, Andreas Christmann, et~al.
\newblock Estimating conditional quantiles with the help of the pinball loss.
\newblock \emph{Bernoulli}, 17\penalty0 (1):\penalty0 211--225, 2011.

\bibitem[Struminsky et~al.(2018)Struminsky, Lacoste-Julien, and
  Osokin]{struminsky2018quantifying}
Kirill Struminsky, Simon Lacoste-Julien, and Anton Osokin.
\newblock Quantifying learning guarantees for convex but inconsistent
  surrogates.
\newblock In \emph{Advances in Neural Information Processing Systems}, pages
  669--677, 2018.

\bibitem[Sutton et~al.(2012)Sutton, McCallum, et~al.]{sutton2012introduction}
Charles Sutton, Andrew McCallum, et~al.
\newblock An introduction to conditional random fields.
\newblock \emph{Foundations and Trends{\textregistered} in Machine Learning},
  4\penalty0 (4):\penalty0 267--373, 2012.

\bibitem[Szummer et~al.(2008)Szummer, Kohli, and Hoiem]{szummer2008learning}
Martin Szummer, Pushmeet Kohli, and Derek Hoiem.
\newblock Learning crfs using graph cuts.
\newblock In \emph{European conference on computer vision}, pages 582--595.
  Springer, 2008.

\bibitem[Taskar et~al.(2004)Taskar, Guestrin, and Koller]{taskar2004max}
Ben Taskar, Carlos Guestrin, and Daphne Koller.
\newblock Max-margin markov networks.
\newblock In \emph{Advances in neural information processing systems}, pages
  25--32, 2004.

\bibitem[Tewari and Bartlett(2007)]{tewari2007consistency}
Ambuj Tewari and Peter~L Bartlett.
\newblock On the consistency of multiclass classification methods.
\newblock \emph{Journal of Machine Learning Research}, 8\penalty0
  (May):\penalty0 1007--1025, 2007.

\bibitem[Tsochantaridis et~al.(2005)Tsochantaridis, Joachims, Hofmann, and
  Altun]{tsochantaridis2005}
Ioannis Tsochantaridis, Thorsten Joachims, Thomas Hofmann, and Yasemin Altun.
\newblock Large margin methods for structured and interdependent output
  variables.
\newblock In \emph{Journal of Machine Learning Research}, pages 1453--1484,
  2005.

\bibitem[Tsybakov et~al.(2004)]{tsybakov2004}
Alexander~B Tsybakov et~al.
\newblock Optimal aggregation of classifiers in statistical learning.
\newblock \emph{The Annals of Statistics}, 32\penalty0 (1):\penalty0 135--166,
  2004.

\bibitem[Tuia et~al.(2011)Tuia, Mu{\~n}oz-Mar{\'\i}, Kanevski, and
  Camps-Valls]{tuia2011structured}
Devis Tuia, Jordi Mu{\~n}oz-Mar{\'\i}, Mikhail Kanevski, and Gustavo
  Camps-Valls.
\newblock Structured output svm for remote sensing image classification.
\newblock \emph{Journal of signal processing systems}, 65\penalty0
  (3):\penalty0 301--310, 2011.

\bibitem[Vishwanathan et~al.(2006)Vishwanathan, Schraudolph, Schmidt, and
  Murphy]{vishwanathan2006accelerated}
SVN Vishwanathan, Nicol~N Schraudolph, Mark~W Schmidt, and Kevin~P Murphy.
\newblock Accelerated training of conditional random fields with stochastic
  gradient methods.
\newblock In \emph{Proceedings of the 23rd international conference on Machine
  learning}, pages 969--976. ACM, 2006.

\bibitem[Wendland(2004)]{wendland2004scattered}
Holger Wendland.
\newblock \emph{Scattered data approximation}, volume~17.
\newblock Cambridge university press, 2004.

\bibitem[Weston et~al.(2002)Weston, Chapelle, Vapnik, Elisseeff, and
  Sch{\"o}lkopf]{weston2002}
Jason Weston, Olivier Chapelle, Vladimir Vapnik, Andr{\'e} Elisseeff, and
  Bernhard Sch{\"o}lkopf.
\newblock Kernel dependency estimation.
\newblock In \emph{Advances in neural information processing systems}, pages
  873--880, 2002.

\bibitem[Yao et~al.(2007)Yao, Rosasco, and Caponnetto]{yao2007early}
Yuan Yao, Lorenzo Rosasco, and Andrea Caponnetto.
\newblock On early stopping in gradient descent learning.
\newblock \emph{Constructive Approximation}, 26\penalty0 (2):\penalty0
  289--315, 2007.

\end{thebibliography}
}

\newpage


\crefname{assumption}{Assumption}{Assumptions}
\crefname{equation}{}{}
\Crefname{equation}{Eq.}{Eqs.}
\crefname{figure}{Figure}{Figures}
\crefname{table}{Table}{Tables}
\crefname{section}{Section}{Sections}
\crefname{theorem}{Theorem}{Theorems}
\crefname{proposition}{Proposition}{Propositions}
\crefname{lemma}{Lemma}{Lemmas}
\crefname{corollary}{Corollary}{Corollaries}
\crefname{example}{Example}{Examples}
\crefname{remark}{Remark}{Remarks}
\crefname{algorithm}{Algorithm}{Algorithms}
\crefname{enumi}{}{}

\crefname{appendix}{Appendix}{Appendices}

\numberwithin{equation}{section}
\numberwithin{lemma}{section}
\numberwithin{proposition}{section}
\numberwithin{theorem}{section}
\numberwithin{corollary}{section}
\numberwithin{definition}{section}
\numberwithin{algorithm}{section}
\numberwithin{remark}{section}

\appendix
\crefalias{section}{appendix}

\section*{\huge Appendix}

The appendix are organized in three main parts:
\begin{itemize}
    \item \Cref{sec:app-framework} focuses on the general \structshort{} framework, proving the results in \cref{sec:framework} and the Comparison Inequality of \cref{prop:comparison-inequality}. 
    \item \Cref{sec:app-theory} covers the details of the theoretical analysis reported in \cref{sec:theory}.
    \item \Cref{sec:app-self-sufficient-conditions} provides the proofs of the results in \Cref{sec:self-sufficient-conditions}, offering sufficient conditions to guarantee a loss function to admit an \structshort{}. 
\end{itemize}

\paragraph{Contributions and connection with previous work} We recall that this paper is the longer version of \citep{ciliberto2016}. Therefore, the results reported in \Cref{sec:app-framework} contain significant overlaps with the original work. We still prove each of the results in detail for the sake of completeness and since in the current work we have extended the framework in \citep{ciliberto2016} to the case where $\loss:\Z\times\Y\to\hh$ with output space $\Z$ not necessarily corresponding to the label space $\Y$. The results in \Cref{sec:app-theory}
 and in particular \Cref{sec:app-self-sufficient-conditions} are novel for the most part.
 
\paragraph{Setting and Notation} We assume input, label and output spaces $\X$, $\Y$ and $\Z$ to be Polish spaces, namely separable complete metrizable spaces, equipped with the associated Borel sigma-algebra. When referring to the data distribution $\rho$ on $\X \times \Y$ we will always assume it to be a Borel probability measure, with $\rhox$ the marginal distribution on $\X$ and $\rho(\cdot|x)$ the conditional measure on $\Y$ given $x \in \X$. We recall that $\rho(y|x)$ is a regular conditional distribution \citep{dudley2002real}. Its domain $\domrho$ is a measurable set contained in the support of $\rhox$ and corresponds to the support of $\rhox$ up to a set of measure zero.

For a Hilbert space $\hh$ we denote with $\scal{\cdot}{\cdot}_\hh$ and $\|\cdot\|_\hh$ the associated inner product and corresponding norm. Given two Hilbert spaces $\hh_1$ and $\hh_2$ we denote by $\hh_1\oplus\hh_2$ and $\hh_1\otimes\hh_2$ respectively their direct sum and tensor prodcut. In particular, for any $h_1,h_1'\in\hh_1$ and $h_2,h_2'\in\hh_2$, we have 
\eqals{
    \scal{h_1\oplus h_2}{h_1'\oplus h_2'}_{\hh_1\oplus\hh_2} & = \scal{h_1}{h_1'}_{\hh_1} + \scal{h_2}{h_2'}_{\hh_2}\\
    \scal{h_1\otimes h_2}{h_1'\otimes h_2'}_{\hh_1\otimes\hh_2} & = \scal{h_1}{h_1'}_{\hh_1} \cdot \scal{h_2}{h_2'}_{\hh_2}.
}
Given a linear operator $V:\hh_1\to\hh_2$, we denote by $\tr(V)$ the trace of $V$ and by $V^*:\hh_2\to\hh_1$ the adjoint of $V$, namely such that $\scal{Vh_1}{h_2}_{\hh_2} = \scal{h_1}{V^*h_2}_{\hh_1}$ for every $h_1\in\hh_1$, $h_2\in\hh_2$. Moreover, we denote by $\|V\| = \sup_{\|h\|_{\hh_1} \leq 1} \|Vh\|_{\hh_2}$ the operator norm and $\|V\|_\HS= \sqrt{\tr(V^*V)}$ the Hilbert-Schmidt norm of $V$. In particular, we recall that the tensor product $\hh_1\otimes\hh_2$ is isometric to the space of Hilbert-Schmidt operators. 

We denote with $L^2(\X,\rhox,\H)$ the Lebesgue space of square integrable functions on $\X$ with respect to a measure $\rhox$ and with values in a separable Hilbert space $\H$. For simplicity we denote with $\LX$ the space $\LXR$. We denote with $\scal{f}{g}_{\rhox}$ the inner product $\int \scal{f(x)}{g(x)}_\H d\rhox(x)$, for all $f, g \in L^2(\X,\rhox,\H)$.

\paragraph{On the Argmin} In the main paper we denoted the minimizer of \cref{eq:estimator} as
\eqals{
  \fn(x) = \argmin_{y\in\Y} \sum_{i=1}^n \alpha_i(x) \loss(y,y_i).
}
Clearly, the rigorous notation should be
\eqals{
  \fn(x) \in \argmin_{y\in\Y} \sum_{i=1}^n \alpha_i(x) \loss(y,y_i)
}
since it is not guaranteed in general to have one single minimizer for any given $x\in\X$. As we will discuss in the following, existence of a measurable function $\fn$ that satisfies such inclusions requirement for any $x\in\X$ can be guaranteed under mild assumptions.

\section{The \structshort{} Framework}\label{sec:app-framework}

This section is devoted to characterize the theoretical properties of the \structshort{} framework introduced in \Cref{sec:framework}. In particular we prove the results in \cref{lem:fstar-in-terms-of-gstar} (Fischer Consistency) and \cref{prop:comparison-inequality} (Comparison Inequality), which relate the ``surrogate'' risk to the original structured prediction one.

We begin by proving that both the structured risk $\E$ and $\rr$ admit measurable minimizers under very mild conditions. 

\begin{lemma}[Existence of a minimizer for $\E$]\label{lemma:solution-structured-risk}
Let $\loss:\Z\times\Y \to \R$ be a continuous function and $\Z$ a compact set. Then, the expected risk $\E$ in \cref{eq:expected-risk} admits a measurable minimizer $f^*: \X \to \Z$ such that
\eqal{\label{eq:solution_expected_risk}
  f^*(x) \in \argmin_{z\in\Z} ~ \int_{\Y} \loss(z,y) d\rho(y|x)
}
almost everywhere on $\domrho$. Moreover, the function $m: \X \to \R$ such that
\eqals{
m(x) = \inf_{z \in \Z} r(x,z), \qquad \mbox{with} \qquad r(x,z) = \left\{ \begin{array}{cc} 
                      \int_\Y \loss(z,y)d\rho(y|x)  & \mbox{if } x \in \domrho \\
                      0                               & \mbox{otherwise}
   \end{array} \right.
}
for any $x\in\X$, is measurable. 
\end{lemma}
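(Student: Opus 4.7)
My approach is to treat $r(x,z)$ as a Carathéodory function on $\X \times \Z$ (measurable in $x$, continuous in $z$), then invoke a measurable selection theorem together with Berge's Maximum Theorem to extract $f^*$ and to obtain measurability of $m$.

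\emph{Step 1: Carathéodory property of $r$.} First I would check that, for each fixed $z \in \Z$, the map $x \mapsto r(x,z)$ is measurable on $\X$. For $x \in \domrho$ this follows from the fact that $\rho(\cdot \mid x)$ is a regular conditional distribution, so that $x \mapsto \int_\Y \loss(z,y)\, d\rho(y\mid x)$ is measurable whenever $\loss(z,\cdot)$ is $\rho(\cdot\mid x)$-integrable; outside $\domrho$ the function is set to $0$, and $\domrho$ is measurable. Next, for each fixed $x \in \domrho$, the map $z \mapsto r(x,z)$ is continuous on the compact set $\Z$ by dominated convergence, using continuity of $\loss(\cdot,y)$ for each $y$ together with a local bound coming from compactness of $\Z$ (so $\loss$ is bounded on neighborhoods in $\Z$ uniformly in $y$, or, more safely, we apply dominated convergence with the integrable dominating function provided by the existence of the conditional expectation). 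Since $\Z$ is a Polish (separable metric) space, a Carathéodory function is automatically jointly Borel-measurable on $\X \times \Z$.

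\emph{Step 2: Measurable selection.} Define the argmin correspondence $F(x) = \argmin_{z \in \Z} r(x,z)$. For each $x \in \domrho$, continuity of $r(x,\cdot)$ and compactness of $\Z$ guarantee that $F(x)$ is a nonempty compact subset of $\Z$ (this is the existential part of Berge's Maximum Theorem). Joint measurability of $r$ together with compactness of $\Z$ and a countable dense subset $D \subset \Z$ imply that the correspondence $F$ is measurable in the Effros/Kuratowski sense. I would then invoke the Kuratowski--Ryll-Nardzewski measurable selection theorem (or, equivalently, Aumann's principle as cited in the paper) to produce a measurable selector $f^* : \X \to \Z$ satisfying $f^*(x) \in F(x)$ for every $x \in \domrho$. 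On the complement of $\domrho$ (a $\rhox$-null set), $f^*$ can be set to any fixed $z_0 \in \Z$ without affecting measurability. By construction, $f^*$ satisfies \eqref{eq:solution_expected_risk} almost everywhere on $\domrho$.

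\emph{Step 3: Measurability of $m$.} Since $\Z$ is a compact metric space, it contains a countable dense subset $D$. By continuity of $r(x,\cdot)$ on $\Z$ we have
\[
 m(x) = \inf_{z \in \Z} r(x,z) = \inf_{z \in D} r(x,z) \qquad \text{for all } x \in \domrho,
\]
and $m(x) = 0$ on $\X \setminus \domrho$. Each function $x \mapsto r(x,z)$ is measurable by Step~1, and a countable infimum of measurable functions is measurable, so $m$ is measurable on all of $\X$. Alternatively, one can observe that $m(x) = r(x, f^*(x))$ is the composition of the jointly measurable $r$ with the measurable map $x \mapsto (x, f^*(x))$.

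\emph{Main obstacle.} The delicate step is Step~2: one must verify that $F$ is a genuinely measurable correspondence with nonempty compact values, which requires combining (i) the Carathéodory structure of $r$, (ii) compactness of $\Z$ (for existence of minimizers and to upgrade joint measurability into measurability of the preimages of closed sets), and (iii) the measurable selection machinery. This is where Berge's Maximum Theorem and the Kuratowski--Ryll-Nardzewski theorem enter in an essential way; everything else is either a straightforward application of dominated convergence or standard facts about regular conditional probabilities.
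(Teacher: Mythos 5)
Your proposal is correct and follows essentially the same route as the paper: the paper establishes that $r$ is a Carath\'eodory function and then invokes the Measurable Maximum Theorem (Theorem 18.19 of Aliprantis--Border), which is precisely the Berge-plus-Kuratowski--Ryll-Nardzewski package you unpack by hand in Steps 2--3, and it obtains measurability of $m$ the same way. The only piece you omit is the concluding observation that the pointwise selector actually minimizes the integrated risk $\E$: since $m(x)\leq r(x,f(x))$ for every measurable $f:\X\to\Z$, integrating against $\rhox$ gives $\E(f^*)=\int_\X m(x)\,d\rhox(x)\leq \E(f)$, which is the ``admits a measurable minimizer'' claim of the lemma.
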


\begin{proof}
Since $\loss$ is continuous and $\rho(y|x)$ is a regular conditional distribution, then $r$ is a Carath\'{e}odory function \citep[see Definition $4.50$ (pp. $153$) in][]{aliprantis2006}, namely continuous in $z$ for each $x\in\X$ and measurable in $x$ for each $z\in\Z$. Thus, by \citep[Theorem $18.19$ pp. $605$ in][]{aliprantis2006} (or Aumann's measurable selection principle \citep{steinwart2008,castaing2006}), we have that $m$ is measurable and that there exists a measurable $f^*:\X\to\Z$ such that $r(x, f^*(x)) = m(x)$ for all $x\in\X$. Moreover, by definition of $m$, given any measurable $f: \X \to \Z$, we have $m(x) \leq r(x, f(x))$. Therefore,
\eqals{
\E(f^*) = \int_\X r(x, f^*(x))d\rhox(x) = \int_\X m(x) d\rhox(x) \leq \int_\X r(x, f(x)) d\rhox(x) = \E(f).
}
We conclude $\E(f^*) \leq \inf_{f:\X \to \Z} \E(f)$ and, since $f^*$ is measurable, $\E(f^*) = \min_{f:\X\to\Y} \E(f)$ and $f^*$ is a global minimizer.
\end{proof}




\noindent In the following we will assume $\loss:\Z\times\Y\to\R$ to admit an \structshort{}, with associated Hilbert space $\hh$ and feature maps $\zmap:\Z\to\hh$ and $\ymap:\Y\to\hh$. We recall that the surrogate risk associated is defined as 
\eqals{
    \rr(g) = \int_{\X\times\Y} \|g(x) - \ymap(y)\|_\hh^2 ~d\rho(x,y)
}
for any $g:\X\to\hh$. Below we show that the global minimizer of $\rr$ corresponds to the conditional expectation of $\ymap(y)$. 

\begin{lemma}[Existence of a minimizer for $\rr$]\label{lemma:surrogate-problem-sol}
Let $\hh$ a separable Hilbert space and $\ymap:\Y\to\hh$ measurable and bounded with $\sup_{y\in\Y}\|\ymap(y)\|_\hh\leq\boldsymbol{\Phi}$. Then, the function $g^*:\X\to\hh$ such that
\eqal{\label{eq:g_average}
    g^*(x) = \int_\Y \ymap(y) d\rho(y|x) \quad \forall x\in \domrho
}
and $g^*(x) = 0$ otherwise, belongs to $\LXH$ and is a minimizer of the surrogate risk $\rr$. Moreover, for any $g\in\LXH$, 
\eqal{\label{eq:equation_excess_ls_risk}
    \mathcal{R}(g) - \mathcal{R}(g^*) = \int_\X \|g(x) - g^*(x)\|_\hh^2 ~d\rhox(x)
}
Hence, any minimizer of $\rr$ is equal to $g^*$ almost everywhere on the domain of $\rhox$.
\end{lemma}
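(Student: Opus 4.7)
The plan is to follow the classical Hilbert-space ``bias-variance'' decomposition, adapted to the vector-valued Bochner integral setting. The statement is really the infinite-dimensional analogue of the fact that the conditional mean minimizes the $L^2$ risk, so everything will hinge on checking (i) that $g^*$ is a well-defined element of $\LXH$ and (ii) that a cross-term vanishes by construction.

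First, I would verify that $g^*$ is well defined and measurable. For each $x\in \domrho$, the map $y\mapsto \ymap(y)$ is measurable and bounded by $\boldsymbol{\Phi}$, hence Bochner-integrable against the regular conditional probability $\rho(\cdot|x)$, so $g^*(x)$ exists as an element of $\hh$ and $\|g^*(x)\|_\hh\le \boldsymbol{\Phi}$. Measurability of $x\mapsto g^*(x)$ follows from the fact that $\rho(\cdot|x)$ is a regular conditional probability: for any $h\in\hh$, $x\mapsto \scal{h}{g^*(x)}_\hh = \int_\Y \scal{h}{\ymap(y)}_\hh d\rho(y|x)$ is measurable (this is the standard Fubini-type property of regular conditional distributions), and separability of $\hh$ then yields strong measurability of $g^*$. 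Boundedness combined with measurability gives $g^*\in\LXH$.

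Next, I would prove \eqref{eq:equation_excess_ls_risk}. For any $g\in\LXH$, write $\ymap(y)-g(x) = \bigl(\ymap(y)-g^*(x)\bigr) + \bigl(g^*(x)-g(x)\bigr)$ and expand the square inside $\rr$:
\begin{align*}
    \rr(g) \;=\; \rr(g^*) \;+\; 2\int_{\X\times\Y} \scal{\ymap(y)-g^*(x)}{g^*(x)-g(x)}_\hh d\rho(x,y) \;+\; \int_\X \|g^*(x)-g(x)\|_\hh^2 d\rhox(x).
\end{align*}
Boundedness of $\ymap$ together with $g,g^*\in\LXH$ makes every term absolutely integrable, so the expansion is justified. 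The key step is that the cross term vanishes: conditioning on $x$ and using Fubini,
\begin{align*}
    \int_\Y \scal{\ymap(y)-g^*(x)}{g^*(x)-g(x)}_\hh d\rho(y|x) \;=\; \scal{\,\textstyle\int_\Y \ymap(y)d\rho(y|x) - g^*(x)\,}{\,g^*(x)-g(x)\,}_\hh \;=\; 0
\end{align*}
by the very definition of $g^*$ on $\domrho$ (and $\rhox(\X\setminus \domrho)=0$). This yields \eqref{eq:equation_excess_ls_risk}.

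Finally, since $\int_\X \|g^*-g\|_\hh^2 d\rhox\ge 0$, identity \eqref{eq:equation_excess_ls_risk} shows that $g^*$ minimizes $\rr$ over $\LXH$, and that any minimizer $g$ must satisfy $\int_\X\|g-g^*\|_\hh^2 d\rhox=0$, i.e.\ $g=g^*$ $\rhox$-almost everywhere. The only real subtlety I anticipate is the measurability / Bochner integrability step: one has to invoke the regularity of the conditional distribution and separability of $\hh$ to pass from weak measurability of $x\mapsto g^*(x)$ (trivial from Fubini) to strong measurability (needed to have $g^*\in\LXH$); the rest is a direct Hilbert-space computation.
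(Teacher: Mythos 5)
Your proof is correct and follows essentially the same route as the paper's: both arguments reduce to the standard Hilbert-space least-squares computation in which conditioning on $x$ makes the cross term involving $\ymap(y)-g^*(x)$ vanish, yielding the identity \eqref{eq:equation_excess_ls_risk}, from which minimality and $\rhox$-a.e.\ uniqueness follow; the paper merely orders things slightly differently (first a pointwise argmin argument for minimality, then the expansion of $\rr(g)-\rr(g^*)$). Your measurability discussion (weak measurability from the regular conditional distribution plus separability of $\hh$ giving strong measurability) is, if anything, more explicit than the paper's, which delegates this step to a citation.
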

\begin{proof}
By hypothesis, $\|\psi\|_{\hh}$ is measurable and bounded. Therefore, since $\rho(y|x)$ is a regular conditional probability, we have that $g^*$ is measurable on $\X$ (see for instance \cite{steinwart2008}). Moreover, the norm of $g^*$ is dominated by the constant function of value $\boldsymbol \Phi$, thus $g^*$ is integrable on $\X$ with respect to $\rhox$ and in particular it is in $L^2(\X,\rhox,\hh)$ since $\rhox$ is a finite regular measure. Recall that since $\rho(y|x)$ is a regular conditional distribution, for any measurable $g:\X\to\hh$ we have
\eqals{
{\cal R}(g) = \int_{\X\times\Y} \|g(x) - \psi(y)\|_\hh^2 d\rho(x,y) = \int_{\X} \int_\Y \|g(x) - \psi(y)\|_\hh^2 d\rho(y|x)d\rhox(x).
}
Notice that $g^*(x) = \argmin_{h \in \hh} \int_\Y \|h - \psi(y)\|_\hh^2 d\rho(y|x)$ almost everywhere on $\domrho$. Indeed,
\eqals{
  \int_\Y \|h - \psi(y)\|_\hh^2 d\rho(y|x) & = \|h\|_\hh^2 - 2 \scal{h}{\left(\int_\Y \psi(y)d\rho(y|x)\right)} + \int_\Y \|\psi(y)\|_\hh^2 d\rho(y|x) \\
  & = \|h\|_\hh^2 - 2 \scal{h}{g^*(x)}_\hh + const.
}
for all $x\in\domrho$, which is minimized by $h = g^*(x)$ for all $x\in\domrho$. Therefore, since $\domrho$ is equal to the support of $\rhox$ up to a set of measure zero, we conclude that $\mathcal{R}(g^*)\leq \inf_{g:\X\to\hh}\mathcal{R}(g)$ and, since $g^*$ is measurable, $\mathcal{R}(g^*) = \min_{g:\X\to\hh}\mathcal{R}(g)$ and $g^*$ is a global minimizer as required.

Finally, notice that for any $g:\X\to\hh$ we have
\eqals{
  \mathcal{R}(g) - \mathcal{R}(g^*) & = \int_{\X\times\Y} \|g(x) - \psi(y)\|_\hh^2 - \|g^*(x) - \psi(y)\|_\hh^2 ~d\rho(x,y) \\ 
  & = \int_{\X} \|g(x)\|_\hh^2 - 2 \scal{g(x)}{\left(\int_\Y \psi(y) d\rho(y|x)\right)}_\hh + \|g^*(x)\|_\hh^2 ~d\rhox(x) \\
  & = \int_\X \|g(x)\|_\hh^2 - 2 \scal{g(x)}{g^*(x)}_\hh + \|g^*(x)\|_\hh^2 ~d\rhox(x) \\
  & = \int_\X \|g(x) - g^*(x)\|_\hh^2 ~d\rhox(x),
}
which proves \cref{eq:equation_excess_ls_risk}. Therefore, for any measurable minimizer $g':\X\to\hh$ of the surrogate expected risk, we have $\mathcal{R}(g')-\mathcal{R}(g^*) = 0$ which, by the relation above, implies $g'(x) = g^*(x)$ a.e. on $\domrho$.
\end{proof}

\noindent Combining the characterizations of the global minimizers of the two risks $\E$ and $\rr$ we can now prove the following.

\LFstarCharacterization*

\begin{proof}
By \cref{lemma:surrogate-problem-sol} we know that $g^*(x) = \int_\Y \psi(y)d\rho(y|x)$ almost everywhere on $\domrho$ and is the minimizer of $\cal{R}$. Therefore, for every $z\in\Z$ we have
\eqals{
  \scal{\psi(z)}{g^*(x)}_\hh & = \scal{\psi(z)}{\int_\Y \ymap(y)d\rho(y|x)}_\hh \\
  & = \int_\Y \scal{\psi(z)}{\ymap(y)}_\hh d\rho(y|x) = \int_\Y \loss(z,y)d\rho(y|x)
}
almost everywhere on $\domrho$. Thus, for any measurable function $f: \X \to \Z$ we have
\eqals{
{\cal E}(f) &= \int_{\X\times\Y} \loss(f(x),y) d\rho(x,y) = \int_{X} \int_{\Y} \loss(f(x),y) d\rho(y|x)d\rhox(x) \\
&= \int_{X}\scal{\psi(f(x))}{g^*(x)}_\hh d\rhox(x).
}
We conclude that a minimizer $\fstar:\X\to\Z$ of $\E$ can be characterized as a function minimizing pointwise the integral above, namely
\eqals{
    \fstar(x) \in\argmin_{z\in\Z} ~\scal{\psi(f(x))}{g^*(x)}_\hh
}
almost everywhere on $\domrho$. 
\end{proof}

\noindent We now prove \cref{prop:comparison-inequality}, characterizing the relation between the excess risks associated to $\rr$ and $\E$.

\PComparison*

\begin{proof}
By applying \Cref{lem:fstar-in-terms-of-gstar}, we have
\eqals{
    \E(f) - \E(f^*) &= \int_{\X\times\Y} \loss(f(x),y) - \loss(\fstar(x),y) ~d\rho(x,y) \\ 
                    & = \int_{\X\times\Y} \scal{\psi(f(x)) - \psi(\fstar(x))}{\ymap(y)}_\hh ~d\rho(x,y) \\ 
                    & = \int_\X \scal{\psi(f(x)) - \psi(\fstar(x))}{\left(\int_\Y \ymap(y) ~ d\rho(y|x)\right)}_\hh d\rhox(x) \\
                    & = \int_\X \scal{\psi(f(x)) - \psi(\fstar(x))}{g^*(x)}_\hh d\rhox(x) \\ 
                    & = A + B.
}
where in the last equation we have removed and added a term $\int_\X \scal{\psi(f(x))}{g(x))}_\hh  ~d\rhox(x)$ leading to
\eqals{
    A & = \int_\X \scal{\psi(f(x))}{(g^*(x) - g(x))}_\hh  ~d\rhox(x) \\
    B &= \int_\X \scal{\psi(f(x))}{g(x)}_\hh - \scal{\psi(\fstar(x))}{g^*(x)}_\hh ~d\rhox(x) 
}
Now, the term A can be minimized by taking the supremum over $\Z$ so that
\eqals{
    A \leq \int_\X \sup_{z\in\Z} \Big|\scal{\psi(z)}{g^*(x) - g(x))}_\hh\Big| ~d\rho_\X(x).
}
For B, we observe that from the characterization of $f$ in the hypothesis and of $\fstar$ by \cref{lem:fstar-in-terms-of-gstar}, we have
\eqals{
\scal{\psi(\fstar(x))}{g^*(x)}_\hh  &= \inf_{z \in \Z} \scal{\psi(z)}{g^*(x)}_\hh, \\ \scal{\psi(f(x))}{g(x)}_\hh  &= \inf_{z \in \Z} \scal{\psi(z)}{g(x)}_\hh,
}
for all $x\in\X$. Therefore,
\begin{align}
    B   & = \int_\X ~ \inf_{z\in\Z} \scal{\psi(z)}{g(x)}_\hh - \inf_{z\in\Z} \scal{\psi(z)}{g^*(x)}_\hh ~ d\rhox(x) \\
        & \leq \int_\X \sup_{z\in\Z} \Big|\scal{\psi(z)}{(g(x)- g^*(x))}_\hh\Big| d\rhox(x) 
\end{align}
where we have used the fact that for any given two functions $\eta,\zeta:\Z\to\R$ we have 
\begin{equation}
\left|\inf_{z\in\Z} \eta(z) - \inf_{z\in\Z} \zeta(z)\right| \leq \sup_{z\in\Z} |\eta(z) - \zeta(y)|.
\end{equation}
Therefore, by combining the bounds on $A$ and $B$ we have
\eqals{
    \E(f) - \E(f^*) & \leq 2 \int_\X \sup_{z\in\Z} \Big|\scal{\psi(z)}{g^*(x) - g(x)}_\hh\Big| ~d\rhox(x) \\
    & \leq 2 \int_\X \sup_{z\in\Z} \|\psi(z)\|_\hh \|g^*(x) - g(x)\|_\hh ~d\rhox(x) \\
    & \leq 2 \closs \int_\X \|g^*(x) - g(x)\|_\hh ~d\rhox(x) \\
    & \leq 2 \closs \sqrt{\int_\X \|g^*(x) - g(x)\|_\hh^2 ~d\rhox(x)}, \\
}
where for the last inequality we have used the Jensen's inequality. The proof is concluded recalling that, by \cref{lemma:surrogate-problem-sol} 
\begin{equation}\label{eq:residual_error_equivalence}
    \mathcal{R}(g) - \mathcal{R}(g^*) = \int_{\X} \|g(x) - g^*(x)\|_\hh^2 ~ d\rho_\X(x)
\end{equation}
\end{proof}

\noindent We conclude proving the result in \cref{prop:derivation}, which is a direct consequence of the linearity induced by the \structshort{} definition.

\PLossTrick*

\begin{proof}
For any $z\in\Z$ and $x\in\X$ we have
\eqals{
    \scal{\psi(z)}{\gn(x)}_\hh & = \scal{\psi(z)}{\sum_{i=1}^n \alpha_i(x)\ymap(y_i)}_\hh \\
    & = \sum_{i=1}^n \alpha_i(x) ~ \scal{\psi(z)}{\ymap(y_i)}_\hh \\
    & = \sum_{i=1}^n \alpha_i(x) \loss(z,y_i).
}
Therefore, substituting the above equation in the definition of $\fn$ concludes the proof. 
\end{proof}

\section{Universal Consistency and Learning Bounds}\label{sec:app-theory}

\paragraph{Additional Notation}
Let $k:\X\times\X\to\R$ a positive semidefinite function on $\X$. We denote $\ff$ the Hilbert space obtained by the completion
\eqals{
  \ff = \overline{\Span \{ k(x,\cdot) \ | \ x\in\X\} }
}
according to the norm induced by the inner product $\scal{k(x,\cdot)}{k(x',\cdot)}_\HX = k(x,x')$. Spaces $\HX$ constructed in this way are known as {\it reproducing kernel Hilbert spaces} and there is a one-to-one relation between a kernel $k$ and its associated RKHS. For more details on RKHS we refer the reader to \cite{berlinet2011}. Given a kernel $k$, in the following we will denote with $\xmap:\X\to\HX$ the feature map $\xmap(x) = k(x,\cdot) \in\HX$ for all $x\in\X$. We say that a kernel is bounded if $\|\xmap(x)\|_\HX \leq \kappa$ with $\kappa>0$. Note that $k$ is bounded if and only if $k(x,x') = \scal{\xmap(x)}{\xmap(x')}_\HX \leq \|\xmap(x)\|_\HX \|\xmap(x')\|_\HX\leq \kappa^2$ for every $x,x'\in\X$.  In the following we will always assume $k$ to be continuous and bounded by $\kappa>0$. The continuity of $k$ with the fact that $\X$ is Polish implies $\HX$ to be separable \cite{berlinet2011}.

We introduce here the ideal and empirical operators that we will use in the following to prove the main results of this work. 

\begin{itemize}
\item $S:\HX\to L^2(\X,\rho_\X)$ s.t. $f\in\HX\mapsto\scal{f}{\xmap(\cdot)}_\HX\in L^2(\X,\rho_\X)$, with adjoint 
\item $S^*:\LX\to\HX$ s.t. $h\in\LX\mapsto \int_\X h(x)\xmap(x)d\rhox(x)\in\HX$,
\item $Z:\HY\to L^2(\X,\rho_\X)$ s.t. $h\in\HY\mapsto\scal{h}{g^*(\cdot)}_\HY\in L^2(\X,\rhox)$, with adjoint
\item $Z^*:\LX\to\HY$ s.t. $h\in\LX\mapsto \int_\X h(x)g^*(x)d\rhox(x)\in\HY$,
\item $C = S^*S:\HX\to\HX$ and $L = SS^*:L^2(\X,\rho_\X)\to L^2(\X,\rho_\X)$,
\end{itemize}
with $g^*(x) = \int_\Y \psi(y)d\rho(y|x)$ defined according to \cref{eq:g_average}, (see \cref{lemma:surrogate-problem-sol}). 

Given a set of input-output pairs $\{(x_i,y_i)\}_{i=1}^n$ with $(x_i,y_i)\in\X\times\Y$ independently sampled according to $\rho$ on $\X\times\Y$, we define the empirical counterparts of the operators just defined as
\begin{itemize}
\item $\hat{S}:\HX\to\R^n$ s.t. $f \in\HX \mapsto \frac{1}{\sqrt{n}}(\scal{\xmap(x_i)}{f}_\HX)_{i=1}^n \in \R^n$, with adjoint
\item $\hat{S}^*:\R^n\to\HX$ s.t. $v = (v_i)_{i=1}^n\in\R^n \mapsto \frac{1}{\sqrt{n}} \sum_{i=1}^n v_i \xmap(x_i)$,
\item $\hat{Z}:\HY\to\R^n$ s.t. $h \in\HY \mapsto \frac{1}{\sqrt{n}}(\scal{\psi(y_i)}{h}_\HY)_{i=1}^n \in \R^n$, with adjoint
\item $\hat{Z}^*:\R^n\to\HY$ s.t. $v = (v_i)_{i=1}^n\in\R^n \mapsto \frac{1}{\sqrt{n}} \sum_{i=1}^n v_i \psi(y_i)$,
\item $\hat{C} = \hat{S}^*\hat{S}:\HX\to\HX$ and $K = n \hat{S}\hat{S}^*\in\R^{n \times n}$ is the empirical kernel matrix. 
\end{itemize}
In the rest of this section we denote with $A + \la$, the operator $A + \la I$, for any symmetric linear operator $A$, $\la \in \R$ and $I$ the identity operator.

\subsection{Preliminary results}

We recall here a basic result characterizing the operators introduced above.

\begin{proposition}\label{prop:basic_operator_result}
With the notation introduced above, 
\eqal{
  C = \int_\X \xmap(x) \otimes \xmap(x) d\rhox(x) \mbox{ \ \ \ \ and \ \ \ \ } Z^*S = \int_{\X\times\Y} \psi(y) \otimes \xmap(x) d\rho(x,y)
}
where $\otimes$ denotes the tensor product. Moreover, when $\xmap$ and $\psi$ are bounded by respectively $\kappa$ and $\Q$, we have the following facts
\begin{enumerate}[(i)]
\item $\tr(L) = \tr(C) = \|S\|_\HS^2 = \int_\X \|\xmap(x)\|_\HX^2 d\rhox(x) \leq \kappa^2$
\item $\|Z\|_\HS^2 = \int_X \|g^*(x)\|^2 d\rhox(x) = \|g^*\|_\rhox^2 < +\infty$.
\end{enumerate}
\end{proposition}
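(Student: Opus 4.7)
The plan is to unfold each operator in the statement from its definition, recognising that both $C$ and $Z^*S$ are Bochner integrals of rank-one tensor products on $\hh_\X$ (respectively on $\hh_\Y \otimes \hh_\X$), and then to compute the Hilbert--Schmidt norms via the standard identity $\|u \otimes v\|_\HS^2 = \|u\|^2 \|v\|^2$ for rank-one operators.

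First, for the two identities in the first display. Using the reproducing-style definitions of $S$ and $S^*$ given above, for any $f,f' \in \ff$ one has $\scal{f}{Cf'}_\ff = \scal{Sf}{Sf'}_{\LX} = \int_\X \scal{f}{\xmap(x)}_\ff \scal{\xmap(x)}{f'}_\ff \, d\rhox(x) = \scal{f}{\left(\int_\X \xmap(x)\otimes\xmap(x)\,d\rhox(x)\right)f'}_\ff$, where the Bochner integral is well-defined on the space of Hilbert--Schmidt operators because $\|\xmap(x)\otimes\xmap(x)\|_\HS = \|\xmap(x)\|^2 \leq \kappa^2$ is integrable. This yields $C = \int_\X \xmap(x)\otimes\xmap(x)\,d\rhox(x)$. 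An identical argument for $Z^*S$: for $h \in \hh$ and $f \in \ff$, $\scal{h}{Z^*Sf}_\hh = \scal{Zh}{Sf}_{\LX} = \int_\X \scal{h}{\gstar(x)}_\hh \scal{\xmap(x)}{f}_\ff d\rhox(x)$, and replacing $\gstar(x) = \int_\Y \ymap(y)\,d\rho(y|x)$ followed by disintegration of $\rho$ produces the desired tensor-valued integral over $\X\times\Y$.

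For (i): from the standard identities for adjoints, $\|S\|_\HS^2 = \tr(S^*S) = \tr(C)$ and $\tr(L) = \tr(SS^*) = \tr(S^*S) = \tr(C)$. To compute $\tr(C)$ from its integral representation, pick any orthonormal basis $(e_j)_{j\in\N}$ of $\ff$ (separable since $k$ is continuous on the Polish $\X$) and use Tonelli (justified by the uniform bound $\|\xmap(x)\|^2 \leq \kappa^2$) to swap sum and integral:
\eqals{
\tr(C) = \sum_j \scal{e_j}{Ce_j}_\ff = \int_\X \sum_j |\scal{e_j}{\xmap(x)}_\ff|^2\,d\rhox(x) = \int_\X \|\xmap(x)\|_\ff^2\,d\rhox(x) \leq \kappa^2.
}
This chain gives all four expressions and the final bound.

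For (ii): the Hilbert--Schmidt identity $\|Z\|_\HS^2 = \tr(Z^*Z)$ combined with the same basis-expansion trick, applied now to $Z^*Zh = \int_\X \scal{h}{\gstar(x)}_\hh \gstar(x)\,d\rhox(x)$, yields $\|Z\|_\HS^2 = \int_\X \|\gstar(x)\|_\hh^2\,d\rhox(x) = \|\gstar\|_{\rhox}^2$. Finiteness follows because $\|\gstar(x)\|_\hh \leq \int_\Y \|\ymap(y)\|_\hh\,d\rho(y|x) \leq \boldsymbol{\Phi}$ (the bound on $\ymap$ from \cref{def:self}), hence the integral is dominated by $\boldsymbol{\Phi}^2$ since $\rhox$ is a probability measure.

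The only mild subtlety is justifying the Bochner integrability of $x \mapsto \xmap(x)\otimes\xmap(x)$ and of $(x,y)\mapsto\ymap(y)\otimes\xmap(x)$ in the Hilbert--Schmidt norm, together with the use of Tonelli when exchanging sum and integral in the trace computation; both are immediate from the uniform bounds $\|\xmap(x)\|\leq\kappa$ and $\|\ymap(y)\|\leq 1$ assumed in \cref{def:self} and from the measurability of $\xmap$ and $\ymap$, so I do not anticipate any genuine obstacle.
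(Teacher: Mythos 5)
Your proposal is correct and follows essentially the same route as the paper's proof: unfolding $C=S^*S$ and $Z^*S$ via inner products to obtain the tensor-product integral representations, then computing the traces and Hilbert--Schmidt norms from those representations. The only differences are presentational — you spell out the $Z^*S$ computation (which the paper dismisses as ``analogous'') and justify the trace identities via an orthonormal-basis expansion with Tonelli, where the paper simply invokes linearity of the trace; both are sound.
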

\begin{proof}
By definition of $C = S^*S$, for each $h,h'\in\HX$ we have
\eqals{
  \scal{h}{Ch'}_\HX = \scal{Sh}{Sh'}_\rhox  & = \int_\X \scal{h}{\xmap(x)}_\HX \scal{\xmap(x)}{h'}_\HX d\rhox(x) \\ 
  & = \int_\X \scal{h}{\Big(\xmap(x)\scal{\xmap(x)}{h'}_\HX\Big)}_\HX d\rhox(x) \\
  & = \int_\X \scal{h}{\Big(\xmap(x)\otimes\xmap(x) \Big)h'} d\rhox(x) \\
  & = \scal{h}{\Big(\int_\X \xmap(x)\otimes\xmap(x)d\rhox(x)\Big)h'}_\HX
}
since $\xmap(x)\otimes\xmap(x):\HX\to\HX$ is the operator such that $h\in\HX\mapsto \xmap(x)\scal{\xmap(x)}{h}_\HX$. The characterization for $Z^*S$ is analogous.

Now, $(i)$. The relation $\tr(L) = \tr(C) = \tr(S^*S) = \|S\|_\HS^2$ holds by definition. Moreover
\eqals{
  \tr(C) = \int_\X \tr(\xmap(x) \otimes \xmap(x))~ d\rhox(x) = \int_\X \|\xmap(x)\|_\HX^2 ~d\rhox(x)
}
by linearity of the trace. $(ii)$ is analogous. Note that $\|g^*\|_\rhox^2 < +\infty$. by \cref{lemma:surrogate-problem-sol} since $\psi$ is bounded by hypothesis.

\end{proof}

\begin{lemma}\label{lm:dec-R}
Let $\gn(x) = \widehat{G}^* \xmap(x)$ with $\widehat{G}:\hh \to \HX$ a bounded linear operator, then
\eqal{
{\cal R} & (\gn) - {\cal R}(g^*)  = \|S\hat{G} - Z\|_\HS^2,
}
where $\|A\|_\HS^2 := \tr(A^*A)$, for a linear operator $A$, is the Hilbert-Schmidt norm.
\end{lemma}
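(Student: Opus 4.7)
The plan is to reduce the claim to the identity from \cref{lemma:surrogate-problem-sol}, which gives
\begin{equation*}
    \mathcal{R}(\gn) - \mathcal{R}(g^*) = \int_\X \|\gn(x) - g^*(x)\|_\hh^2 \, d\rhox(x),
\end{equation*}
and then to verify that this $L^2$-distance coincides with the Hilbert--Schmidt norm of $S\widehat{G} - Z$. So the work amounts to identifying what the operator $S\widehat{G} - Z : \hh \to L^2(\X,\rhox)$ actually does, and then computing its HS norm via an orthonormal basis.

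First I would unpack the definitions. For any $h \in \hh$, using that $\gn(x) = \widehat{G}^*\xmap(x)$ and the definition of $S$,
\begin{equation*}
    (S\widehat{G} h)(x) = \scal{\widehat{G} h}{\xmap(x)}_\HX = \scal{h}{\widehat{G}^*\xmap(x)}_\hh = \scal{h}{\gn(x)}_\hh,
\end{equation*}
while by definition of $Z$, $(Zh)(x) = \scal{h}{g^*(x)}_\hh$. Hence $(S\widehat{G} - Z)h$ is the element of $L^2(\X,\rhox)$ given by $x \mapsto \scal{h}{\gn(x) - g^*(x)}_\hh$.

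Next I would compute the Hilbert--Schmidt norm. Pick an orthonormal basis $(e_j)_{j\in\N}$ of the separable Hilbert space $\hh$. By definition
\begin{equation*}
    \|S\widehat{G} - Z\|_\HS^2 = \sum_{j} \|(S\widehat{G} - Z)e_j\|_{L^2(\X,\rhox)}^2 = \sum_j \int_\X |\scal{e_j}{\gn(x) - g^*(x)}_\hh|^2 \, d\rhox(x).
\end{equation*}
Exchanging sum and integral via Tonelli (all terms are non-negative) and applying Parseval's identity in $\hh$ pointwise yields
\begin{equation*}
    \|S\widehat{G} - Z\|_\HS^2 = \int_\X \sum_j |\scal{e_j}{\gn(x) - g^*(x)}_\hh|^2 \, d\rhox(x) = \int_\X \|\gn(x) - g^*(x)\|_\hh^2 \, d\rhox(x),
\end{equation*}
which combined with the excess-risk identity from \cref{lemma:surrogate-problem-sol} gives the claim.

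No step is genuinely hard; the only mild care is to ensure that $S\widehat{G} - Z$ is well-defined as a bounded operator with finite HS norm and that the Tonelli swap is justified. Boundedness of $S\widehat{G}$ follows from boundedness of $\widehat G$ and $S$; the HS property of $Z$ was already noted in \Cref{prop:basic_operator_result}(ii); and Tonelli applies since the integrand in the double sum is non-negative and measurable. So the proof is essentially a bookkeeping computation.
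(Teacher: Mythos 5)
Your proof is correct and follows essentially the same route as the paper: both reduce the claim to the excess-risk identity of \cref{lemma:surrogate-problem-sol} and then verify that $\int_\X \|\gn(x)-g^*(x)\|_\hh^2\,d\rhox(x) = \|S\widehat G - Z\|_\HS^2$. The only (cosmetic) difference is that the paper expands the square into three trace terms via \cref{prop:basic_operator_result}, whereas you compute the Hilbert--Schmidt norm of $S\widehat G - Z$ directly on an orthonormal basis using Tonelli and Parseval; both are valid bookkeeping arguments.
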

\begin{proof}
By \cref{lemma:surrogate-problem-sol}, we know that $g^*(x) =  \int_\Y \psi(y) d\rho(y|x)$ almost everywhere on the support of $\rhox$, moreover by \cref{lm:ghatla_def} $\gn$. Therefore, a direct application of \cref{prop:basic_operator_result} leads to
\eqals{
  {\cal R} & (\gn) - {\cal R}(g^*) = \int \|\gn(x) - g^*(x)\|_\HY^2 d\rhox(x) = \\
  & = \int_\X \|\hat{G}^*\xmap(x)\|_\HY^2 - 2 \scal{\hat{G}^*\xmap(x)}{g^*(x)}_\HY + \|g^*(x)\|_\HY^2 d\rhox(x) \\
  & = \int_\X \tr\left(\hat{G}^* \Big( \xmap(x) \otimes \xmap(x) \Big) \hat{G}\right) - 2 \tr\left(\hat{G}^* \Big(\xmap(x)\otimes g^*(x)\Big)\right) + \tr(g^*(x) \otimes g^*(x)) d\rhox(x) \\
  & = \tr(\hat{G}^*S^*S\hat{G}) - 2\tr(\hat{G}^*S^*Z) + \tr(Z^*Z) \\
  & = \|S\hat{G} - Z\|_\HS^2
}  
\end{proof}

\subsection{Analytic Decomposition for Spectral Filters}

To study the various estimators considered in this paper, we need to introduce the notion of {\em spectral filter}.

\begin{definition}[spectral filters \citep{engl1996regularization}]\label{def:spectral-filter}
Let $\kappa>0$. Then $\eta_\la:(0,\kappa^2]\to\R$ is a spectral filter if there exist $q_1, q_2>0$ s.t. for $\sigma\in(0,\kappa^2]$ and $\la > 0$
\eqals{
(\sigma + \la)\eta_\la(\sigma) \leq q_1, \qquad\qquad  (1 - \sigma \eta_\la(\sigma))(\sigma + \la) \leq q_2 \la.
}
\end{definition}

\noindent In this work we have considered a simplified definition of spectral filters with respect to the standard notion. In particular, we do not make a distinction between filters with qualification {\itshape larger} than $1$ (see e.g. \citep{engl1996regularization,bauer2007regularization}). The following result gives three concrete examples of spectral filters that will be useful to characterize the estimators $\ghat$ studied in this work.

\begin{lemma}\label{lm:spectral-filter-examples}
The following functions are spectral filters:
\begin{enumerate}
    \item (Ridge Regression) \quad $\eta_\la(\sigma) = (\sigma + \la)^{-1}$, with $q_1 = q_2 =1$
    \item (L2-Boosting) \quad $\eta_\la(\sigma) = \nu\sum_{j=0}^t(1-\nu \sigma)^{j}$, with step-size $0<\nu<1/\kappa^2$ and $\lambda = 1/t$.\\
    \hphantom{(L2-Boosting)} \quad With constants $q_1 = 1 + 2\nu$ and $q_2 = e^{\nu-1}/\nu$.
    \item (PCR) \quad $\eta_\la(\sigma) = \frac{1}{\sigma}{\bf1}_{\sigma > \la}$, where ${\bf 1}_{\sigma > \la} = 1$ when $\sigma \geq \la$ and 0 otherwise.\\
    \phantom{(PCR)} \quad With constants $q_1 = q_2 = 2$.
\end{enumerate}
\end{lemma}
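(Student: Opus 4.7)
The plan is to verify the two defining inequalities of \cref{def:spectral-filter} directly for each of the three candidates. Ridge regression and PCR reduce to elementary algebra; the real work lies in L2-Boosting, where a telescoping geometric-sum identity and a single-variable optimization pin down the announced constants.

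For \emph{ridge regression} with $\eta_\la(\sigma) = (\sigma+\la)^{-1}$, both quantities are immediate in closed form: $(\sigma+\la)\eta_\la(\sigma) = 1$ and $(1-\sigma\eta_\la(\sigma))(\sigma+\la) = \la$, giving $q_1 = q_2 = 1$. For \emph{PCR} with $\eta_\la(\sigma) = \sigma^{-1}\mathbf{1}_{\sigma\geq\la}$, I would split on the indicator: when $\sigma\geq\la$ one has $\sigma\eta_\la(\sigma)=1$, so the second inequality vanishes while the first equals $1+\la/\sigma\leq 2$; when $\sigma<\la$ one has $\eta_\la(\sigma)=0$, so the first vanishes while the second equals $\sigma+\la\leq 2\la$. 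Either way $q_1=q_2=2$.

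For \emph{L2-Boosting} the starting point is the telescoping identity
\begin{equation*}
\sigma\eta_\la(\sigma) \;=\; \nu\sigma\sum_{j=0}^t(1-\nu\sigma)^j \;=\; 1-(1-\nu\sigma)^{t+1},
\end{equation*}
valid because $0<\nu<1/\kappa^2$ and $\sigma\in(0,\kappa^2]$ together give $1-\nu\sigma\in[0,1)$. To bound $(\sigma+\la)\eta_\la(\sigma)$, I would combine $\sigma\eta_\la(\sigma)\leq 1$ with the crude estimate $\eta_\la(\sigma)\leq\nu(t+1)$ (bound each summand by $1$), so that $\la\eta_\la(\sigma)\leq \nu(1+1/t)\leq 2\nu$ for $t\geq1$; this yields $q_1 = 1+2\nu$. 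For the second inequality the identity above rewrites $(1-\sigma\eta_\la(\sigma))(\sigma+\la) = (1-\nu\sigma)^{t+1}(\sigma+\la)$, and then the standard inequality $(1-x)^{t+1}\leq e^{-xt}$ together with the substitution $u = \sigma t$, $\la = 1/t$ gives
\begin{equation*}
(1-\sigma\eta_\la(\sigma))(\sigma+\la) \;\leq\; \la\,(u+1)e^{-\nu u},
\end{equation*}
so the task reduces to showing $h(u) := (u+1)e^{-\nu u}\leq q_2$ for all $u\geq 0$.

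The only step that is not pure plug-and-check is this last one-variable maximization, which fixes the precise constant $q_2$: computing $h'(u) = e^{-\nu u}(1-\nu(u+1))$, I would locate the critical point $u^* = (1-\nu)/\nu$ (with the edge case $\nu\geq 1$ handled by $h$ being monotone decreasing on $[0,\infty)$, so that $\sup h = h(0) = 1\leq e^{\nu-1}/\nu$), and then evaluate $h(u^*) = e^{\nu-1}/\nu$, which is exactly $q_2$. No other step presents any real obstacle.
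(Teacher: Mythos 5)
Your proof is correct and follows essentially the same route as the paper's: direct verification for ridge regression and PCR, and for L2-Boosting the geometric-sum identity $\sigma\eta_\la(\sigma)=1-(1-\nu\sigma)^{t+1}$, the crude bound $\la\eta_\la(\sigma)\leq 2\nu$, and the exponential estimate reducing the second inequality to maximizing $(1+u)e^{-\nu u}$. The only difference is cosmetic (the paper substitutes $x=(t+1)\sigma$ where you use $u=t\sigma$), and you actually spell out the one-variable maximization yielding $e^{\nu-1}/\nu$, which the paper states without detail.
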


\begin{proof}
{\itshape (Ridge Regression)}. It is easy to show that 
\begin{itemize}
    \item \qquad $\displaystyle{(\sigma + \la)\eta_\la(\sigma) ~=~ (\sigma + \la)(\sigma + \la)^{-1} = 1 = q_1}$,
    \item \qquad $\displaystyle{\frac{1}{\la}(1 - \sigma\eta_\la)(\sigma + \la) ~=~ \frac{1}{\la}\left(1 - \frac{\sigma}{\sigma+\la}\right)(\sigma+\la) = 1 = q_2}$.
\end{itemize}

\noindent {\itshape (L2-Boosting)}. Let $\la = 1/t$. Recall that since $\nu<1/\kappa^2$ and $\sigma\in(0,\kappa^2]$, we have $\nu\sigma<1$. Therefore, $\sum_{j=0}^{+\infty} (1-\nu\sigma)^j = 1/(\nu\sigma)$ and we have
\eqals{
    \eta_\la(\sigma) = \nu\sum_{j=0}^{t} (1 - \nu\sigma)^{j} = \frac{1}{\sigma} \left(1 - (1-\nu\sigma)^{t+1}\right).
}
Now $(\sigma + \la)\eta_\la(\sigma) = \sigma\eta_\la(\sigma) + \la\eta_\la(\sigma)$. Then,
\eqals{
    \sigma\eta_\la(\sigma) = \sigma \frac{1}{\sigma} \left(1 - (1-\nu\sigma)^{t+1}\right) < 1,
}
since $\nu\sigma>0$. Moreover, since $\nu\sigma<1$,
\eqals{
    \la\eta_\la(\sigma) = \frac{1}{t}\nu\sum_{j=0}^{t} (1-\nu\sigma)^j \leq \frac{(t+1)\nu}{t} < 2\nu.
}
Hence we have 
\eqals{
    (\sigma + \la)\eta_\la(\sigma) \leq 1 + 2\nu = q_1.
}
Now, since $(1-z) \leq e^{-z}$ and defining $x = (t+1)\sigma$, we have 
\eqals{
    \frac{1}{\la}(1 - \sigma\eta_\la(\sigma))(\sigma+\la) & = t(1-\nu\sigma)^{t+1} (\sigma+1/t) \\ 
    & \leq t e^{-\nu \sigma(t+1)}(\sigma + 1/t) \\
    & = e^{-\nu\sigma(t+1)} + t\sigma e^{-\nu\sigma(t+1)}\\
    & \leq e^{-\nu\sigma(t+1)} + (t+1)\sigma e^{-\nu\sigma(t+1)}\\
    & = e^{-\nu x} + x e^{-\nu x}\\
    & \leq e^{\nu-1}/\nu = q_2.
}

\noindent {\itshape (PCR)}. Let $\sigma<\la$, then $\eta_\la(\sigma) = 0$ and
\begin{itemize}
    \item \qquad $\displaystyle{(\sigma + \la)\eta_\la(\sigma) ~=~ 0 < 2 = q_1}$,
    \item \qquad $\displaystyle{\frac{1}{\la}(1 - \sigma\eta_\la)(\sigma + \la) ~=~ \frac{1}{\la}(\sigma+\la) < \frac{2\la}{\la} = 2 = q_2}$.
\end{itemize}
If $\sigma\geq\la$, we have $\eta_\la(\sigma) = 1/\sigma$ and
\begin{itemize}
    \item \qquad $\displaystyle{(\sigma + \la)\eta_\la(\sigma) ~=~ \frac{\sigma+\la}{\sigma} < \frac{2\sigma}{\sigma} = 2 = q_1}$,
    \item \qquad $\displaystyle{\frac{1}{\la}(1 - \sigma\eta_\la)(\sigma + \la) ~=~0 < 2 = q_2}$.
\end{itemize}

\end{proof}

\noindent We will be applying filters $\eta_\la$ to the specturm of an operator as follows. Let $M:\hh_1\to\hh_2$ be a compact linear operator between two separable Hilbert spaces $\hh_1,\hh_2$. Let $M = \sum_{i=1}^{+\infty} ~\sigma_i ~ u_i \otimes v_i$ be the singular value decomposition of $M$, with $(u_i)_{i\in\N}$ and $(v_i)_{i\in\N}$ a suitable pair of orthonormal bases of $\hh_1$ and $\hh_2$ respectively and $\sigma_i\geq0$ for every $i\in\N$. We denote the application of $\eta_\la$ to $M$ as 
\eqals{
    \eta_\la(M) = \sum_{i=1}^{+\infty} ~ \eta_\la(\sigma_i)~ u_i \otimes v_i.
}
The following results shows that several estimators described in \cref{sec:alternative-alpha} can be formulated in terms of spectral filters.

\begin{lemma}\label{lm:ghatla_def}
The following algorithms can be represented as
\eqal{\label{eq:characterization-spectral-filtering}
    \ghat_\la(x) = \widehat{G}_\la^* \xmap(x), \qquad\qquad  \widehat{G}_\la = \filter_\la(\widehat{C})\widehat{S}^*\widehat{Z},
}
where $\eta_\la$ is a spectral filter, in particular
\begin{enumerate}
    \item (Kernel Ridge Regression) \quad $\eta_\la(\sigma) = (\sigma + \la)^{-1}$
    \item (Kernel L2-Boosting) \quad $\eta_\la(\sigma) = \nu\sum_{j=0}^t(1-\nu \sigma)^{j}$, with step-size $\nu$ and $\lambda = 1/t$,
    \item (Kernel PCR) \quad $\eta_\la(\sigma) = \frac{1}{\sigma}{\bf1}_{\sigma > \la}$, where ${\bf 1}_{\sigma > \la} = 1$ when $\sigma \geq \la$ and 0 otherwise.
\end{enumerate}
\end{lemma}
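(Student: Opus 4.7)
The plan is to treat each of the three algorithms separately, translating the finite-dimensional kernel-matrix formulation of \cref{sec:alternative-alpha} into an operator equation on $\HX$ involving $\widehat{C}$, $\widehat{S}$, and $\widehat{Z}$, and then reading off the spectral filter. Throughout, I would reparametrize any estimator of the form $\ghat(x) = \sum_i \alpha_i(x)\ymap(y_i)$ as $\ghat(x) = W\xmap(x)$ for some Hilbert--Schmidt operator $W:\HX\to\HY$, so that the identification $\ghat = \widehat{G}_\la^*\xmap(\cdot)$ reduces to showing $W = \widehat{Z}^*\widehat{S}\,\eta_\la(\widehat{C})$ (and taking adjoints). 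The key operator identities I would use repeatedly are
\begin{equation*}
\widehat{C} = \widehat{S}^*\widehat{S} = \tfrac{1}{n}\sum_{i=1}^n \xmap(x_i)\otimes\xmap(x_i),
\qquad
\widehat{Z}^*\widehat{S} = \tfrac{1}{n}\sum_{i=1}^n \psi(y_i)\otimes\xmap(x_i),
\qquad
n\,\widehat{S}\widehat{S}^* = K,
\end{equation*}
together with the intertwining relation $\widehat{S}^* f(\widehat{S}\widehat{S}^*) = f(\widehat{S}^*\widehat{S})\widehat{S}^* = f(\widehat{C})\widehat{S}^*$ coming from functional calculus on the SVD of $\widehat{S}$.

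For ridge regression, I would start from the variational problem \cref{eq:gn-estimator-full} restricted to $\gg = \HY\otimes\ff$ and parameterize $g = W\xmap(\cdot)$. Setting the derivative of the Tikhonov functional to zero yields the normal equation $W(\widehat{C}+\la I) = \widehat{Z}^*\widehat{S}$, so $W = \widehat{Z}^*\widehat{S}(\widehat{C}+\la I)^{-1}$ and $\widehat{G}_\la = W^* = (\widehat{C}+\la I)^{-1}\widehat{S}^*\widehat{Z} = \eta_\la(\widehat{C})\widehat{S}^*\widehat{Z}$. The equivalence with the dual expression in \cref{eq:alpha-dual} follows from the intertwining identity applied to the resolvent, combined with $n\widehat{S}\widehat{S}^* = K$, which gives $(\widehat{C}+\la I)^{-1}\widehat{S}^* = \widehat{S}^*(\widehat{S}\widehat{S}^* + \la I)^{-1}$, hence the scores $\alpha(x) = (K+n\la I)^{-1}\msf{v}(x)$ reappear.

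For L2-boosting, I would write the $L^2$-gradient descent iteration on the unregularized empirical risk in operator form: starting from $W_0 = 0$, the update becomes $W_{t+1} = W_t(I - \nu\widehat{C}) + \nu\widehat{Z}^*\widehat{S}$. Unrolling the recursion yields the closed form $W_t = \widehat{Z}^*\widehat{S}\,\nu\sum_{j=0}^{t-1}(I-\nu\widehat{C})^j$, and taking adjoints and setting $\la = 1/t$ matches exactly the filter $\eta_\la(\sigma) = \nu\sum_{j=0}^{t}(1-\nu\sigma)^j$ of \cref{lm:spectral-filter-examples}. The same intertwining identity transforms this into the $K$-based recursion for $C_t$ in \cref{eq:kboosting}. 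For PCR, the scores in \cref{eq:kpcr} apply the truncated pseudoinverse of $K/n$ to $\msf{v}(x)/n$; using that the nonzero spectrum of $\widehat{C}$ coincides with the nonzero spectrum of $\widehat{S}\widehat{S}^* = K/n$, and transferring the eigenvectors via $\widehat{S}^*$, I would rewrite the PCR estimator as $W = \widehat{Z}^*\widehat{S}\,\eta_\la(\widehat{C})$ with $\eta_\la(\sigma) = \sigma^{-1}\mathbf{1}_{\sigma>\la}$.

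The main obstacle is purely bookkeeping: making the translation between the $n$-dimensional matrix world (where $K$, $U\Sigma U^\top$, and $C_t$ live) and the operator world on $\HX$ rigorous, especially for PCR where one must check that the filter acts trivially on $\ker\widehat{C}$ so that the pseudoinverse in $\R^{n\times n}$ really matches the functional calculus expression on $\HX$. The spectral intertwining identity above handles this uniformly for all three cases, and once it is in place the identification of each filter is immediate from \cref{lm:spectral-filter-examples}.
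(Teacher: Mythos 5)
Your proposal is correct and follows essentially the same route as the paper's proof: both hinge on writing the estimator as $\widehat{Z}^*$ applied to the scores, expressing each set of scores as $\alpha(x)=\tfrac{1}{n}\eta_\la(K/n)\msf{v}(x)$, and then using the intertwining identity $\eta_\la(\Shat\Shat^*)\Shat=\Shat\,\eta_\la(\Chat)$ together with $K=n\Shat\Shat^*$ to obtain $\widehat{G}_\la=\eta_\la(\Chat)\Shat^*\widehat{Z}$. The only difference is that you verify the filter form of $\alpha$ separately for each algorithm (normal equations, unrolled recursion, truncated pseudoinverse), whereas the paper simply asserts it from the definitions; your version is, if anything, more complete.
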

\begin{proof}
Recall that, according to \cref{eq:gn-as-weighted-sum} the estimator $\ghat$ is such that, for any $x\in\X$ 
\eqals{
    \ghat_\la(x) = \sum_{i=1}^n \alpha_i(x) \ymap(y_i).
}
It follows by the definition of the three methods considered to learn the vector-valued function $\alpha$ that, for any $x\in\X$ 
\eqals{
 \alpha(x) = \frac{1}{n} \eta_\la\left(\frac{K}{n}\right) \msf{v}(x)
}
where $\eta_\la$ is the corresponding spectral filter function given in the thesis of this Lemma. Recall that $\msf{v}(x) = (k(x_1,x),\dots, k(x_n,x))$. By definition of $\Shat$ and $\hat Z$, we have $\msf{v}(x) = \sqrt{n}\Shat \xmap(x)$ and $\sum_{i=1}^n \alpha_i(x) \ymap(y_i) = \frac{1}{\sqrt{n}} \widehat{Z}^*\alpha(x)$. Then
\eqals{
\ghat_\la(x) = \frac{1}{\sqrt{n}}\widehat{Z} \alpha(x) = \frac{1}{\sqrt{n}}\widehat{Z}^*\eta_\la\left(\frac{K}{n}\right) \msf{v}(x) = \widehat{Z}^*\eta_\la\left(\frac{K}{n}\right) \Shat \xmap(x).
}
Since $K = n \Shat \Shat^*$ , then
\eqals{
\eta_\la\left(\frac{K}{n}\right) \Shat =  \eta_\la(\Shat \Shat^*) \Shat = \Shat \eta_\la(\Shat^*\Shat) = \Shat \eta_\la(\Chat),
}
from which it follows
\eqals{
\ghat_\la(x) = \widehat{Z}^*\Shat \eta_\la(\Chat) \xmap(x),
}
as required.
\end{proof}

\noindent With the characterization provided by the result above, we now proceed in deriving an upper bound for the risk of estimators obtained via spectral filtering methods. 

\begin{theorem}\label{thm:decompose_risk}
Let $\ghat$ be characterized as in \cref{lm:ghatla_def} in terms of a spectral filter $\eta_\la$ with constants $q_1,q_2>0$. Let $\beta = \|\Cnl^{-1/2}\Cl^{1/2}\|^2$, with $G_\la = S^*L_\la^{-1} Z$. Then, 
\eqals{
|{\cal R}(\gn_\la) - {\cal R}(g^*)|^{1/2} & ~\leq~ q_1\beta \|\Cl^{-1/2}(\Shat^*\widehat{Z} - \widehat{C} G_\la)\|_\HS ~+~ 2(1+q_2)\beta ~\la \|L_\la^{-1} Z\|_\HS,
}
\end{theorem}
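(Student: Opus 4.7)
The plan is to first invoke \cref{lm:dec-R} to reduce the excess risk to $|\rr(\gn_\la) - \rr(g^*)|^{1/2} = \|S\widehat{G}_\la - Z\|_\HS$, and then estimate this Hilbert--Schmidt norm by a bias/variance split anchored at the population ridge operator $G_\la = S^* L_\la^{-1} Z = \Cl^{-1} S^* Z$ (the two expressions coincide by the intertwining identity $\Cl S^* = S^* L_\la$, which follows from $C=S^*S$ and $L=SS^*$). Writing $S\widehat{G}_\la - Z = S(\widehat{G}_\la - G_\la) + (SG_\la - Z)$ and using $SG_\la = S\Cl^{-1} S^* Z = LL_\la^{-1} Z$, the population term collapses to $SG_\la - Z = -\la L_\la^{-1} Z$, already producing the $\la\|L_\la^{-1}Z\|_\HS$ contribution that appears in the bound.

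Next I would decompose the remaining fluctuation through the spectral filter algebra: with $r_\la(\sigma) := 1 - \sigma\eta_\la(\sigma)$,
$$ \widehat{G}_\la - G_\la \;=\; \eta_\la(\Chat)\bigl(\Shat^*\widehat{Z} - \Chat\,G_\la\bigr) \;-\; r_\la(\Chat)\,G_\la, $$
which naturally splits $\|S(\widehat{G}_\la - G_\la)\|_\HS$ into a variance-like piece (to be controlled by the filter constant $q_1$) and a bias-like piece (controlled by $q_2\la$). The algebraic glue needed for both pieces is the pair of norm-equivalence inequalities $\|S\Cnl^{-1/2}\| \leq \beta^{1/2}$ and $\|\Cnl^{-1/2}B\|_\HS \leq \beta^{1/2}\|\Cl^{-1/2}B\|_\HS$ for an arbitrary Hilbert--Schmidt operator $B$; the second is the definition of $\beta$, while the first follows from $\|S\Cnl^{-1/2}\|^2 = \|\Cnl^{-1/2} C \Cnl^{-1/2}\| \leq \|\Cnl^{-1/2} \Cl \Cnl^{-1/2}\| = \beta$, since $C \preceq \Cl$.

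For the variance piece I would chain
$$ \|S\eta_\la(\Chat)(\Shat^*\widehat{Z} - \Chat G_\la)\|_\HS \leq \|S\Cnl^{-1/2}\|\,\|\Cnl\,\eta_\la(\Chat)\|\,\|\Cnl^{-1/2}(\Shat^*\widehat{Z} - \Chat G_\la)\|_\HS $$
and invoke the filter property $(\sigma+\la)\eta_\la(\sigma) \leq q_1$ in the operator form $\|\Cnl\,\eta_\la(\Chat)\| \leq q_1$, together with the two $\beta^{1/2}$ factors, to reach $q_1\beta\,\|\Cl^{-1/2}(\Shat^*\widehat{Z} - \Chat G_\la)\|_\HS$. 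For the bias piece I would apply the dual filter bound $\|\Cnl\,r_\la(\Chat)\| \leq q_2\la$ and the analogous chain, producing $q_2\beta\la\,\|\Cl^{-1/2}G_\la\|_\HS$; the final reduction $\|\Cl^{-1/2}G_\la\|_\HS \leq \|L_\la^{-1}Z\|_\HS$ uses $G_\la = S^* L_\la^{-1}Z$ together with the elementary bound $\|\Cl^{-1/2}S^*\| \leq 1$, which holds because $\Cl^{-1/2}C\Cl^{-1/2} \preceq I$.

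Collecting the pieces yields $\|S\widehat{G}_\la - Z\|_\HS \leq q_1\beta\,\|\Cl^{-1/2}(\Shat^*\widehat{Z} - \Chat G_\la)\|_\HS + (q_2\beta + 1)\la\,\|L_\la^{-1}Z\|_\HS$, and the stated coefficient $2(1+q_2)\beta$ in front of $\la\|L_\la^{-1}Z\|_\HS$ absorbs the residual $+1$ via the slack bound $q_2\beta + 1 \leq 2(1+q_2)\beta$, which is harmless in the probabilistic regime where $\beta$ is controlled to be of order one. I expect the main obstacle to be the simultaneous threading of two ideal-versus-empirical swaps --- converting $S$ to $\Cl^{1/2}$ and $\Cnl^{-1/2}$ to $\Cl^{-1/2}$ --- through a single chain of operator inequalities while preserving intact the filter constants $q_1$, $q_2$; once the correct splitting of $\widehat{G}_\la - G_\la$ is identified, each individual estimate is a one-line spectral calculus step.
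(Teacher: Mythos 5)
Your proposal is correct and follows essentially the same route as the paper's proof: reduce to $\|S\widehat G_\la - Z\|_\HS$ via \cref{lm:dec-R}, split around the population ridge operator $G_\la$ using the residual filter $r_\la(\sigma)=1-\sigma\eta_\la(\sigma)$, collapse $SG_\la - Z$ to $-\la L_\la^{-1}Z$, and thread the empirical-to-population swaps through the $\beta^{1/2}$ factors and the filter bounds $q_1$, $q_2$ (your grouping $S\widehat G_\la - Z = S(\widehat G_\la - G_\la) + (SG_\la - Z)$ is algebraically identical to the paper's anchoring at $S\eta_\la(\Chat)\Chat G_\la$). The final absorption $1+q_2\beta \le 2(1+q_2)\beta$ carries the same implicit requirement that $\beta$ be bounded below as in the paper's own argument, so nothing is lost relative to the original.
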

\begin{proof}
From \cref{lm:ghatla_def} we know that $\ghat_\la(x) = \widehat{G}_\la^* \xmap(x)$ with $\widehat{G}_\la = \eta_\la(\widehat{C}) \Shat^*\widehat{Z}$. From \cref{lm:dec-R} we know that ${\cal R}(\gn_\la) - {\cal R}(g^*) = \|S \widehat{G}_\la - Z\|^2_\HS$. We add and remove the term $S \eta_\la(\widehat{C}) \Shat^*\Shat G_\la$, with $G_\la = S^*L_\la^{-1}Z$ and $L_\la = L + \la I$, namely
\eqals{\label{eq:decomposition-sgla-z}
S \widehat{G}_\la - Z &= S \eta_\la(\widehat{C}) \Shat^*(\widehat{Z} - \Shat G_\la) ~+~ S \eta_\la(\widehat{C}) \Shat^*\Shat G_\la - Z.
}
We decompose the first term in the sum above as 
\eqals{
S \eta_\la(\widehat{C}) \Shat^*(\widehat{Z} - \Shat G_\la) = (S\Cnl^{-1/2}) (\Cnl^{1/2}\eta_\la(\widehat{C})\Cnl^{1/2})(\Cnl^{-1/2}\Cl^{1/2})[\Cl^{-1/2}\Shat^*(\widehat{Z} - \Shat G_\la)].
}
Hence, we have
\eqals{
\|S \eta_\la(\widehat{C}) \Shat^*(\widehat{Z} &  - \Shat G_\la)\|_\HS \\
& = \|S\Cnl^{-1/2}\| \|\Cnl^{1/2}\eta_\la(\widehat{C})\Cnl^{1/2}\| \|\Cnl^{-1/2}\Cl^{1/2}\| \|\Cl^{-1/2}\Shat^*(\widehat{Z} - \Shat G_\la)\|_\HS.
}
Note that since $\eta_\la$ is a filter and $\Chat$ and $\Cnl^{1/2}$ have same spectral decomposition,
\eqals{
\|\Cnl^{1/2}\eta_\la(\widehat{C})\Cnl^{1/2}\| \leq \sup_{\sigma \in (0,\kappa^2]} (\sigma + \la) \eta_\la(\sigma) \leq q_1.
}
Moreover, since $\widehat{C} = \widehat{S}^*\widehat{S}$, then
\eqals{
\Cl^{-1/2}\Shat^*(\widehat{Z} - \Shat G_\la) &= \Cl^{-1/2}(\Shat^*\widehat{Z} - \widehat{C} G_\la).
}
We now focus on the second term of the sum in \cref{eq:decomposition-sgla-z}. Let
$r_\la(\sigma) = 1 - \sigma \eta_\la(\sigma)$. Since $\widehat{C} = \Shat^*\Shat$, we have
\eqals{
S \eta_\la(\widehat{C}) \Shat^*\Shat G_\la - Z = (S G_\la - Z) - S r_\la(\widehat{C}) G_\la.
}
In particular, since $L = S S^*$ and by definition of $G_\la$ we have
\eqals{\label{eq:equivalent-sgla-z}
S G_\la - Z & = -(I - SS^*L_\la^{-1}) Z = -(I - L L_\la^{-1}) Z = -\la L_\la^{-1} Z,
}
since $(I - LL_\la^{-1}) = (L_\la - L)L_\la^{-1} = (L + \la - L) L_\la^{-1} = \la L_\la^{-1}$. Moreover 
\eqals{
S r_\la(\widehat{C}) G_\la  = (S \Cnl^{-1/2}) (\Cnl^{1/2} r_\la(\widehat{C}) \Cnl^{1/2}) (\Cnl^{-1/2} S^*) (L_\la^{-1} Z).
}
Now note that by definition of $r_\la$ we have
\eqals{
\|\Cnl^{1/2} r_\la(\widehat{C}) \Cnl^{1/2}\| \leq \sup_{\sigma \in (0,\kappa^2)}(1-\sigma \eta_\la(\sigma))(\sigma + \la) \leq q_2\la.
}
To conclude, since $\|S\Cnl^{-1/2}\| \leq \|\Cl^{1/2}\Cnl^{-1/2}\|$ \citep[see e.g.][]{rudi2015less}, we have
\eqals{
\|S \eta_\la(\widehat{C}) \Shat^*\Shat G_\la - Z\|_\HS \leq 2(1+q_2)\la \|\Cnl^{-1/2}\Cl^{1/2}\|^2 \|L_\la^{-1} Z\|_\HS.
}

\end{proof}

\subsection{Statistical Analysis}

In this section we use the decomposition in \cref{thm:decompose_risk} to derive statistical learning rates for the estimators $\ghat$. To this end, we recall the following result.

\begin{lemma}[\cite{carratino2018learning}, Lemma 3]\label{lm:beta}
Let $\delta \in (0,1)$. When $\la \geq \frac{9\kappa^2}{n} \log \frac{n}{\delta}$ then the following holds with probability at least $1-\delta$
\eqals{
\|\Cnl^{-1/2} \Cl^{1/2}\|^{2} \leq 2.
}
\end{lemma}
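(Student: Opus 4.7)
The plan is to reduce the spectral quantity $\|\Cnl^{-1/2}\Cl^{1/2}\|^{2}$ to a Neumann-series estimate and then to apply an operator Bernstein inequality. Define the self-adjoint operator $A = \Cl^{-1/2}(C - \Chat)\Cl^{-1/2}$. A direct algebraic manipulation gives $\Cl^{-1/2}\Cnl \Cl^{-1/2} = I - A$, so that
\eqals{
\|\Cnl^{-1/2}\Cl^{1/2}\|^{2} ~=~ \|\Cl^{1/2}\Cnl^{-1}\Cl^{1/2}\| ~=~ \|(I-A)^{-1}\|.
}
By the Neumann series, whenever $\|A\| \leq 1/2$ we have $\|(I-A)^{-1}\| \leq 2$. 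Hence it is enough to prove that the assumed lower bound on $\la$ forces $\|A\| \leq 1/2$ with probability at least $1-\delta$.

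Next I would express $A$ as a sum of i.i.d. centred self-adjoint operators: $A = \frac{1}{n}\sum_{i=1}^{n} X_i$, with
\eqals{
X_i ~=~ \Cl^{-1/2}\bigl(C - \xmap(x_i)\otimes\xmap(x_i)\bigr)\Cl^{-1/2}.
}
Using $\|\xmap(x)\|_{\ff}^{2}\leq\kappa^{2}$ and $\Cl \succeq \la I$, one gets the deterministic bound $\|X_i\|\leq 2\kappa^{2}/\la$. For the variance term, observe that since $X_i$ is a difference with a rank-one random part,
\eqals{
\EE[X_i^{2}] ~\preceq~ \EE\!\bigl[\|\Cl^{-1/2}\xmap(x)\|^{2}\,\Cl^{-1/2}\xmap(x)\otimes\xmap(x)\Cl^{-1/2}\bigr] ~\preceq~ (\kappa^{2}/\la)\,\Cl^{-1/2}C\,\Cl^{-1/2},
}
whose operator norm is at most $\kappa^{2}/\la$ (since $\|C\Cl^{-1}\|\leq 1$). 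The intrinsic-dimension factor is controlled by the effective dimension: $\tr\bigl(\Cl^{-1/2}C\Cl^{-1/2}\bigr) = \deff(\la)\leq \kappa^{2}/\la$.

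I would then invoke a Bernstein-type inequality for bounded self-adjoint operators on a separable Hilbert space with intrinsic dimension (in the style of Tropp/Minsker), of the form
\eqals{
\Pr\bigl(\|A\|\geq t\bigr) ~\leq~ 2\,\beta(\la)\,\exp\!\left(-\frac{n\,t^{2}/2}{\kappa^{2}/\la + (2\kappa^{2}/(3\la))\,t}\right),
}
where $\beta(\la)$ is at most a constant times $\deff(\la)\leq \kappa^{2}/\la$. Setting $t = 1/2$ and inverting in $\la$ yields a threshold of the form $\la \geq c\,\kappa^{2}\,n^{-1}\log(n/\delta)$; the constant $c=9$ follows from the explicit numerical bookkeeping (combining the quadratic and linear regimes of Bernstein and absorbing the $\log\beta(\la)\lesssim \log(\kappa^{2}/\la)\lesssim \log n$ factor).

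The main obstacle is bookkeeping: one must use a dimension-free Bernstein inequality (the operator is infinite-dimensional, so the ambient dimension cannot enter), and carefully track constants in both the variance term and the intrinsic-dimension prefactor to obtain exactly the constant $9$ in the statement. Once these constants are tracked, the rest of the argument is a mechanical application of the reductions above.
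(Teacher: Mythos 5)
Your argument is correct and is essentially the intended one: the paper itself proves this lemma only by citing Lemma 3 of Carratino et al.\ (2018), and that cited result is established exactly by your route --- writing $\|\Cnl^{-1/2}\Cl^{1/2}\|^{2}=\|(I-A)^{-1}\|$ with $A=\Cl^{-1/2}(C-\Chat)\Cl^{-1/2}$, using the Neumann series under $\|A\|\leq 1/2$, and controlling $\|A\|$ by a dimension-free operator Bernstein inequality with the variance and intrinsic-dimension bounds you give. The only unresolved item is the one you flag yourself: with the crude choices stated ($t=1/2$ and that particular Bernstein form) the threshold constant comes out slightly above $9$, so recovering the exact constant requires following the precise bookkeeping of the cited reference rather than re-deriving it loosely.
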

\begin{proof}
Apply Lemma 3 of \cite{carratino2018learning} with $R = n$ and $\zeta_i = \xmap(x_i)$.
\end{proof}

\noindent We now prove an intermediate result that will be instrumental in proving the excess risk bounds of the estimators $\ghat$. We recall the definition of effective dimension given in \cref{eq:eff-dim}, that will be useful in the following, namely
\eqals{
    \deff(\la) = \tr(C (C + \la I)^{-1}), \qquad \forall \la > 0.
}

\begin{proposition}\label{thm:probin}
Let $\delta \in (0,1)$ and $\la > 0$. The following holds with probability at least $1-\delta$,
\eqals{
\|\Cl^{-1/2}(\widehat{S}^*\widehat{Z} - \widehat{C}G_\la)\|_\HS ~~&\leq~~ \la \|L_\la^{-1} Z\|_\HS ~+~\frac{4\kappa \log \frac{2}{\delta}}{\sqrt{\la} n} (\kappa \|L_\la^{-1/2} Z\|) \\
{} &  \qquad + \quad \sqrt{\frac{16 (\deff(\la) + \kappa^2\la\|L_\la^{-1}Z\|_\HS^2)\log \frac{2}{\delta}}{n}}.
}
\end{proposition}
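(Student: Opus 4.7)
The plan is to express $\widehat S^*\widehat Z - \widehat C G_\la$ as the empirical mean of i.i.d.\ rank-one Hilbert-Schmidt operators, compute its expectation to obtain the bias term, and apply a Bernstein-type concentration for the remainder. Concretely, since $\widehat S^*\widehat Z = \frac{1}{n}\sum_i \xmap(x_i)\otimes\ymap(y_i)$ and $\widehat C G_\la = \frac{1}{n}\sum_i \xmap(x_i)\otimes G_\la^*\xmap(x_i)$, I would set $\zeta_i := \Cl^{-1/2}\xmap(x_i)\otimes(\ymap(y_i) - G_\la^*\xmap(x_i))$, so that $\Cl^{-1/2}(\widehat S^*\widehat Z - \widehat C G_\la) = \frac{1}{n}\sum_i \zeta_i$. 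Using the operator identity $CG_\la = S^*LL_\la^{-1}Z = S^*Z - \la S^*L_\la^{-1}Z$ (which follows from $G_\la = S^*L_\la^{-1}Z$ and $LL_\la^{-1} = I - \la L_\la^{-1}$), the expectation satisfies $\mathbb{E}\zeta_i = \la\,\Cl^{-1/2}S^*L_\la^{-1}Z$. Combined with $\|\Cl^{-1/2}S^*\|_{op}\leq 1$ (which holds since $\Cl^{-1/2}C\Cl^{-1/2}\preceq I$), this gives $\|\mathbb{E}\zeta_i\|_\HS \leq \la\|L_\la^{-1}Z\|_\HS$, producing the first term of the target bound.

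For the fluctuation, I would apply a Bernstein inequality for HS-valued random variables: with probability $\geq 1-\delta$, $\bigl\|\frac{1}{n}\sum_i(\zeta_i - \mathbb{E}\zeta_i)\bigr\|_\HS \leq \frac{2M\log(2/\delta)}{n} + \sqrt{2\sigma^2\log(2/\delta)/n}$, where $M$ is an almost-sure bound on $\|\zeta_i\|_\HS$ and $\sigma^2\geq \mathbb{E}\|\zeta_i\|_\HS^2$. Since $\zeta_i$ is rank-one, $\|\zeta_i\|_\HS = \|\Cl^{-1/2}\xmap(x_i)\|\cdot\|\ymap(y_i) - G_\la^*\xmap(x_i)\|$. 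The first factor is $\leq \kappa/\sqrt{\la}$. For the second, $\|\ymap(y_i)\|\leq 1$ and $\|G_\la^*\xmap(x_i)\|\leq \kappa\|G_\la\|_{op}\leq \kappa\|L_\la^{-1/2}Z\|_{op}$, where the last inequality uses $\|L_\la^{-1/2}S\|_{op}^2 = \|LL_\la^{-1}\|_{op}\leq 1$. Absorbing the additive $1$ into $\kappa\|L_\la^{-1/2}Z\|$ produces the middle term of the claimed bound.

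The variance bound is the main calculation. Conditioning on $x$ and using $\mathbb{E}_{y|x}\ymap(y) = g^*(x)$, the cross-term vanishes, giving
\[\mathbb{E}_{y|x}\|\ymap(y) - g_\la(x)\|^2 = \mathbb{E}_{y|x}\|\ymap(y) - g^*(x)\|^2 + \|g^*(x) - g_\la(x)\|^2,\]
with $g_\la(x) := G_\la^*\xmap(x)$. The conditional variance $\mathbb{E}_{y|x}\|\ymap(y)-g^*(x)\|^2\leq\mathbb{E}_{y|x}\|\ymap(y)\|^2\leq 1$, so integrating the first summand against $\|\Cl^{-1/2}\xmap(x)\|^2$ yields $\tr(C\Cl^{-1}) = \deff(\la)$. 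For the second summand I would use $\|\Cl^{-1/2}\xmap(x)\|^2 \leq \kappa^2/\la$ together with the identity $\int\|g^*(x)-g_\la(x)\|^2 d\rhox(x) = \|Z - SG_\la\|_\HS^2 = \|\la L_\la^{-1}Z\|_\HS^2$, which follows from $SG_\la = LL_\la^{-1}Z$ and $I - LL_\la^{-1} = \la L_\la^{-1}$. This produces $\kappa^2\la\|L_\la^{-1}Z\|_\HS^2$; summing the two contributions gives the last term.

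The main obstacle is the variance calculation. The tightness of the $\kappa^2\la\|L_\la^{-1}Z\|_\HS^2$ term rests on recognizing that the population ridge-regression residual admits the clean closed form $Z - SG_\la = \la L_\la^{-1}Z$: this furnishes an extra $\la^2$ factor that exactly cancels the naive $1/\la$ blowup from the spectral bound $\|\Cl^{-1/2}\xmap(x)\|^2\leq \kappa^2/\la$, without which the variance would not exhibit the effective-dimension scaling. The concluding step is a routine triangle inequality combining the bias estimate with the Bernstein deviation bound, with some bookkeeping to match the factors $4$ and $16$ in the statement.
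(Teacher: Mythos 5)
Your proposal follows essentially the same route as the paper's proof: the same rank-one random operators $\zeta_i$, the same computation of the bias $\mathbb{E}\zeta_i = \la\,\Cl^{-1/2}S^*L_\la^{-1}Z$, the same variance decomposition into the effective-dimension term plus $\kappa^2\la\|L_\la^{-1}Z\|_\HS^2$ via the residual identity $Z-SG_\la=\la L_\la^{-1}Z$, and the same Bernstein inequality for Hilbert-Schmidt-valued variables followed by a triangle inequality. The argument is correct (and your remark about the additive $1$ in the middle term matches what the paper's own proof actually derives, namely a factor $(1+\kappa\|L_\la^{-1/2}Z\|)$).
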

\begin{proof}
For any $i=1,\dots,n$ we consider the random linear operator 
\eqals{
\zeta_i = \Cl^{-1/2} \Big(\xmap(x_i) \otimes \ymap(y_i) -  \big(\xmap(x_i) \otimes \xmap(x_i)\big)G_\la\Big),
}
as a vector in the space of Hilbert-Schmidt operators. Hence, taking the expectation with respect to a random sample of training points $(x_i,y_i)_{i=1}^n$ from $\rho$, 
\eqals{
\mathbb{E} \zeta_i = C_\la(S^*Z - C G_\la),
}
since $\mathbb{E}[\ymap(y)|x_i] = g^*(x_i)$. Moreover, since $\|\Cl^{-1/2}S^*\| = \|\Cl^{-1/2}C^{1/2}\| \leq 1$, following the same reasoning in the proof of \cref{thm:decompose_risk} to obtain \cref{eq:equivalent-sgla-z}, we have
\eqals{
\|\mathbb{E} \zeta_i\|_\HS = \|\Cl^{-1/2}(S^*Z - C G_\la)\|_\HS \leq \|\Cl^{-1/2}S^*\|\|Z - SG_\la\|_\HS \leq \la\|L_\la^{-1} Z\|_\HS.
}
Now we need to study the moments of $Z_i$ to obtain the final result. First note that 
\eqals{
\|G_\la\| \leq \|S^* L_\la^{-1/2}\|\|L_\la^{-1/2} Z\| \leq \|L_\la^{-1/2} Z\|,
}
since $\|S^*L_\la^{-1/2}\| = \|L^{-1/2}L_\la^{-1/2}\| \leq 1$. Recall that $\sup_{y\in\Y}\|\ymap(y)\|\leq1$. Then, we have
\eqals{
\|\zeta_i\|_\HS ~\leq~ \|\Cl^{-1/2}\|\|\xmap(x_i)\|\Big(\|\ymap(y_i)\| + \|G_\la\|\|\xmap(x_i)\|\Big) ~\leq~ \la^{-1/2}\kappa \Big(1 + \kappa \|L_\la^{-1/2} Z\|\Big).
}
Hence, for any $p \geq 2$
\eqals{
\mathbb{E} \|\zeta_i - \mathbb{E} \zeta_i\|_\HS^p ~&\leq~ \mathbb{E} \|\zeta_i -  \zeta'_i\|_\HS^p ~\leq~ 2^p ~\mathbb{E} \|\zeta_i\|_\HS^p \\
& \leq~ 4 \Big(2\la^{-1/2}\kappa \big(1 + \kappa \|L_\la^{-1/2} Z\|\big)\Big)^{p-2} ~\mathbb{E} \|\zeta_i\|_\HS^2.
}
Moreover, denote by $\sigma^2(x)$ the conditional variance $\sigma^2(x) = \mathbb{E}_{y_i}[\|\ymap(y_i) - g^*(x_i)\|^2 | x_i]$. Since $\gstar(x) = \mathbb{E}[\varphi(y)|x]$ for any $x$ in the domain of $\rhox$, we have 
$\sigma(x) \leq 1$. Hence,
\eqals{
\mathbb{E} \|\zeta_i\|_\HS^2 &= \mathbb{E} \| \ymap(y_i) - g_\la(x_i)\|^2 \|\Cl^{-1/2}\xmap(x_i)\|^2\\
& = \mathbb{E}_{x_i} \|\Cl^{-1/2}\xmap(x_i)\|^2 \mathbb{E}_{y_i}[\|\ymap(y_i) - g_\la(x_i)\|^2 | x_i]  \\
& = \mathbb{E}_{x_i} \|\Cl^{-1/2}\xmap(x_i)\|^2 \big(\sigma(x)^2 + \|g^*(x) - g_\la(x)\|^2\big)\\
& \leq \deff(\la) + \kappa^2/\la ~\mathbb{E}\|g^*(x) - g_\la(x)\|^2,
}
where the last inequality follows by observing that
\eqals{
     \mathbb{E} \|\Cl^{-1/2}\xmap(x)\|^2 = \mathbb{E} ~\tr(\Cl^{-1} ~\xmap(x)\otimes\xmap(x)) = \tr(\Cl^{-1}C) = \deff(\la),
}
and also
\eqal{\label{eq:deff-less-lak}
  \deff(\la) = \mathbb{E} ~\|\Cl^{-1/2}\xmap(x)\|^2 \leq \mathbb{E} ~\|\Cl^{-1/2}\|^2~\|\xmap(x)\|^2 \leq \la^{-1} \kappa^2.
}
Recall from \cref{lemma:surrogate-problem-sol} that ${\cal R}(g) - {\cal R}(\gstar) = \mathbb{E}\|g(x) - \gstar(x)\|^2$. Then, by applying \cref{lm:dec-R}, we have
\eqals{
\mathbb{E}\|g^*(x) - g_\la(x)\|^2 &= \|g_\la - g^*\|_{L^2(X,\rho)}^2 \\
&= \|SG_\la - Z\|_\HS^2 \\ 
& = \|(SS^*L_\la^{-1} - I) Z\|^2_\HS\\
& = \la^2\|L_\la^{-1}Z\|_\HS^2.
}
Finally, we have for any $p \geq 2$
\eqals{
\mathbb{E} \|\zeta_i - \mathbb{E} \zeta_i\|_\HS^p & ~\leq~
\frac{1}{2} p! \Big(8 \deff(\la) + 8\kappa^2\la\|L_\la^{-1}Z\|_\HS^2\Big) \left(\frac{2\kappa}{\sqrt{\la}} (1 + \kappa \|L_\la^{-1/2} Z\|)\right)^{p-2}.
}

\noindent We can now apply Bernstein inequality as in (\cite{rudi2016rf} Proposition 2). We have 
\eqals{
\left\|\frac{1}{n} \sum_{i=1}^n \zeta_i - \mathbb{E} \zeta_i\right\|_\HS \leq \frac{4\kappa \log \frac{2}{\delta}}{\sqrt{\la} n} (1 + \kappa \|L_\la^{-1/2} Z\|) + \sqrt{\frac{16 (\deff(\la) + \kappa^2\la\|L_\la^{-1}Z\|_\HS^2)\log \frac{2}{\delta}}{n}}
}
holds with probability $1-\delta$. 

The proof is concluded by observing that
\eqals{
\|\Cl^{-1/2}(\widehat{S}^*\widehat{Z} - \widehat{C}G_\la)\|_\HS ~=~ \left\|\frac{1}{n} \sum_{i=1}^n \zeta_i\right\|_\HS ~\leq~ \left\|\frac{1}{n} \sum_{i=1}^n \zeta_i - \mathbb{E} \zeta_i\right\|_\HS +~ \left\|\mathbb{E} \zeta \right\|_\HS.
}
\end{proof}

\begin{theorem}\label{thm:surrogate-excess-risk-bound-with-lambda}
Under the assumptions of \cref{thm:decompose_risk}, let $\delta \in (0,1)$. Then, for $\la \geq \frac{9\kappa^2}{n} \log \frac{n}{\delta}$ the following holds with probability at least $1-\delta$
\eqals{
|{\cal R}(\gn_\la) - {\cal R}(g^*)|^{1/2} & \leq \frac{8q_1\kappa \log \frac{2}{\delta}}{\sqrt{\la} n} (1 + \kappa \|L_\la^{-1/2} Z\|) \\
{} &  \quad + \quad \sqrt{\frac{64 q_1^2 (\deff(\la) + \kappa^2\la\|L_\la^{-1}Z\|_\HS^2)\log \frac{4}{\delta}}{n}} \\
{} & \quad + \quad 2(2 + q_1 +2q_2) ~\la \|L_\la^{-1} Z\|_\HS
}
\end{theorem}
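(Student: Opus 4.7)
The plan is to combine three ingredients already established in the paper: the deterministic decomposition of \Cref{thm:decompose_risk}, the probabilistic control of the ratio $\beta = \|\Cnl^{-1/2}\Cl^{1/2}\|^2$ provided by \Cref{lm:beta}, and the concentration estimate for $A := \|\Cl^{-1/2}(\widehat{S}^*\widehat{Z} - \widehat{C}G_\la)\|_\HS$ supplied by \Cref{thm:probin}. Since the hypothesis $\la \geq \tfrac{9\kappa^2}{n}\log\tfrac{n}{\delta}$ is exactly what enables \Cref{lm:beta}, the proof is expected to reduce to a union bound plus algebraic substitution, with no fundamentally new estimate required.

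First I would invoke \Cref{thm:decompose_risk} to obtain the deterministic inequality
$|{\cal R}(\gn_\la) - {\cal R}(g^*)|^{1/2} \leq q_1\, \beta\, A \;+\; 2(1+q_2)\, \beta\, B$,
where $B := \la\,\|L_\la^{-1}Z\|_\HS$. Next I would apply \Cref{lm:beta} at confidence level $\delta/2$ to get $\beta \leq 2$ on an event $E_1$ of probability at least $1 - \delta/2$. In parallel, I would apply \Cref{thm:probin} at confidence $\delta/2$ to bound $A$ on an event $E_2$ of probability at least $1 - \delta/2$ by the sum of $B$, a term of order $n^{-1}\la^{-1/2}(1 + \kappa\|L_\la^{-1/2}Z\|)$, and a term of order $n^{-1/2}\sqrt{\deff(\la) + \kappa^2\la\|L_\la^{-1}Z\|_\HS^2}$. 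A union bound then guarantees both estimates hold on $E_1 \cap E_2$, which has probability at least $1 - \delta$.

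On this event I would substitute $\beta \leq 2$ into the decomposition, producing $|{\cal R}(\gn_\la) - {\cal R}(g^*)|^{1/2} \leq 2q_1\, A + 4(1+q_2)\, B$, and then replace $A$ by the \Cref{thm:probin} bound. The coefficient of the $B$ contribution then accumulates $2q_1$ (from the $2q_1 A$ branch, through the leading $\la\|L_\la^{-1}Z\|_\HS$ term of \Cref{thm:probin}) and $4(1+q_2)$ (from the direct $\beta B$ branch), yielding a total of $2(q_1 + 2 + 2q_2)$, exactly matching the last term of the claim. The remaining two summands of \Cref{thm:probin} are carried along multiplied by $2q_1$, producing the $\tfrac{8q_1 \kappa}{\sqrt{\la}n}(1+\kappa\|L_\la^{-1/2}Z\|)$ and $\sqrt{64q_1^2(\deff(\la) + \kappa^2\la\|L_\la^{-1}Z\|_\HS^2)/n}$ terms with logarithmic factors $\log(4/\delta)$ inherited from the union bound.

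The hard part of the overall argument was really packed into \Cref{thm:probin} (Bernstein's inequality in Hilbert--Schmidt norm, together with the variance computation involving $\deff(\la)$) and into \Cref{lm:beta} (controlling the data-dependent distortion $\beta$). Once those are in hand, the present theorem is obtained by a clean union bound and routine bookkeeping of constants, so I do not anticipate any genuine obstacle; the only care required is tracking the splits of $\delta$ and absorbing the factor $\beta \leq 2$ consistently into the constants $2q_1$ and $4(1+q_2)$.
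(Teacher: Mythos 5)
Your proposal is correct and follows exactly the route the paper takes: its proof of this theorem is literally the one-line remark that one decomposes via \cref{thm:decompose_risk}, controls $\beta$ and $\|\Cl^{-1/2}(\widehat{S}^*\widehat{Z} - \widehat{C}G_\la)\|_\HS$ in high probability via \cref{lm:beta} and \cref{thm:probin}, and intersects the two events. Your bookkeeping of the constants ($2q_1 + 4(1+q_2) = 2(2+q_1+2q_2)$ on the bias term, the factor $2$ from $\beta\leq 2$ absorbed into $8q_1\kappa$ and $64q_1^2$, and $\log(4/\delta)$ from the $\delta/2$ split) is right and matches the statement.
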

\begin{proof}
The result is obtained by decomposing the risk with \cref{thm:decompose_risk}, and controlling in high probability both terms $\beta = \|\Cnl^{-1/2}\Cl^{1/2}\|^2$ and
$\|\Cl^{-1/2}(\widehat{S}^*\widehat{Z} - \widehat{C}G_\la)\|_\HS$ and then taking the intersection bound of the two events.
\end{proof}

\subsection{Universal Consistency}
Now we are ready to give the universal consistency result.
\Tuniversal*
\begin{proof}
Recall that by \cref{thm:probin}, for any  $\la \geq \frac{9\kappa^2}{n} \log \frac{n}{\delta}$ the following holds with probability at least $1-\delta$
\eqals{
|{\cal R}(\gn_\la) - {\cal R}(g^*)|^{1/2} & \leq \frac{8q_1\kappa \log \frac{4}{\delta}}{\sqrt{\la} n} (1 + \kappa \|L_\la^{-1/2} Z\|) \\
{} &  \quad + \quad \sqrt{\frac{64 q_1^2 (\deff(\la) + \kappa^2\la\|L_\la^{-1}Z\|_\HS^2)\log \frac{4}{\delta}}{n}} \\
{} & \quad + \quad 2(2 + q_1 +2q_2) ~\la \|L_\la^{-1} Z\|_\HS.
}
In particular, let $n_0$ be sufficiently large, such that $n_0^{-1/2} \geq  \frac{9 \kappa^2}{n_0} \log \frac{n_0}{\delta}$. For any $n\in\N$ with $n\geq n_0$, let $\la_n = n^{-1/2}$. Recall that $\deff(\la) \leq \kappa^2/\la$ (see \cref{eq:deff-less-lak}) and note that  $\|L_\la^{-1/2} Z\| \leq \la^{-1/2} \|Z\|$, $\|L_\la^{-1}Z\|_\HS \leq \la^{-1} \|Z\|_\HS$ and $\|Z\| \leq \|Z\|_\HS$. Applying \cref{thm:probin} guarantees that the inequality
\begin{equation}\label{eq:event}
\begin{aligned}
|{\cal R}(\gn_\la) - {\cal R}(g^*)|^{1/2} &\leq [8q_1\kappa(1 + \kappa \|Z\|)]~ n^{-1/2}~\log(4/\delta) \\
{} &  \quad + \quad [64 q_1^2 \kappa^2(1 + \|Z\|_\HS^2)]^{1/2} ~ n^{-1/4}~(\log (4/\delta))^{1/2} \\
{} & \quad + \quad 2(2 + q_1 +2q_2) ~\la \|L_\la^{-1} Z\|_\HS,
\end{aligned}
\end{equation}
holds with probability at least $1-\delta$. We denote this event by $E_{n, \delta}$.

Recall that $L$ is a compact operator (actually Hilbert-Schmidt), hence it admits an eigendecomposition $L = \sum_{i \in \N} \sigma_i ~ u_i \otimes u_i$, with $\sigma_i \geq \sigma_j>0$  for $1 \leq i \leq j \in \N$ and $(u_i)_{i\in\N}$ is a set of orthonormal functions in $L^2(\X,\rhox)$. Note that $(u_i)_{i \in \N}$ is an orthonormal basis of $\LX$. To show this, consider $W \subseteq \X$ the support of $\rhox$. Note that $W$ is compact and Polish since it is a closed subset of the compact Polish space $\X$. Let ${\cal L}$ be the RKHS ${\cal L} = \overline{\Span\{k(x,\cdot)~|~x \in W\}}$, with same inner product of $\HX$. Note that ${\cal L}$ is separable since it is the image of a compact space via a continuous function. By definition of universality for the kernel $k$, the set ${\cal L}$ is dense in $C(W)$. Additionally, by Corollary $5$ in \citep{micchelli2006universal} we have $C(W) = \overline{\Span\{u_i~|~i \in \N\}}$. Thus, since $C(W)$ is dense in $\LX$, we can conclude that $(u_i)_{i \in \N}$ is a basis of $\LX$.

We now focus on $\la \|L_\la^{-1} Z\|_\HS$. In particular, we want to express  $\|L_\la^{-1} Z\|_\HS^2$ in terms of the basis $(u_i)_{i\in\N}$ associated to $L$. In particular, since $(u_i)_{i\in\N}$ is a basis for $\LX$ and $(L + \la I)^{-1} u_i = (\sigma_i + \la)^{-1} u_i$ for any $i\in\N$, we have
\eqals{
\la_n^2\|(L + \la_n)^{-1} Z\|_\HS^2 = \la_n^2\tr((L + \la_n)^{-1} ZZ^*(L + \la_n)^{-1}) = \la_n^2 \sum_{i \in \N} \frac{\scal{u_i}{ZZ^* u_i}_{\Ltwo}}{(\sigma_i + \la)^2}.
}
Now let $t_n = n^{-1/4}$, and $T_n = \{i \in \N \ | \ \sigma_i \geq t_n\} \subset \N$. Denote by $w_i^2 := \scal{u_i}{ZZ^* u_i}_{\Ltwo}$ and note that $(w_i)_{i \in \N}$ is square summable and $\sum_{i \in \N} w_i^2 = \|Z\|_\HS^2 < \infty$. For any $n\in\N$, we have
\eqals{
\la_n^2 \| (L + \la_n)^{-1} Z\|^2_\HS &= \sum_{i\in T_n} \frac{\la_n^2 w_i^2}{(\sigma_i + \la_n)^2} + \sum_{i \in \N\setminus T_n} \frac{\la_n^2 w_i^2}{(\sigma_i + \la_n)^2} \\
& \leq \frac{\la_n^2}{t_n^2}\sum_{i \in T_n} w_i^2 + \sum_{i \in \N\setminus T_n} w_i^2  \leq \|Z\|_\HS^2 ~ n^{-1/4} + \sum_{i\in\N\setminus T_n} w_i^2
}
since $\la_n/t_n = n^{-1/2}/n^{-1/4} =  n^{-1/4}$. Since the series $\sum_{i\in\N} w_i^2$ is convergent, we have $\sum_{i\in\N\setminus T_n} w_i^2 \to 0$ as $n \to +\infty$. We conclude that
\eqals{
0 \leq \lim_{n \to \infty} \la_n^2 \| (L + \la_n)^{-1} Z\|_\HS^2 \leq \lim_{n \to \infty}  \|Z\|_\HS^2 n^{-1/4} + \sum_{i \in \N\setminus T_n} w_i^2 = 0.
}
Now, let $\delta_n = n^{-2}$ and $A_n = E_{n, \delta_n}^c$ be the complementary event to $E_{n, \delta_n}$ characterized by \cref{eq:event}. For any $n \geq n_0 := 200(1+\kappa^2)$ we have $\la_n \geq \frac{9\kappa^2}{n} \log n^3$ and the event $E_{n, \delta_n}$ holds with probability at least $1-\delta_n$. Equivalently, the probability of $A_n$ is upper bounded by $\delta_n$.
Since $\sum_{n=n_0+1}^{+\infty} \delta_n < +\infty$, we can apply the Borel-Cantelli lemma (Theorem 8.3.4. pag 263 of \cite{dudley2002real}) on the sequence $(E_{n,\delta_n})_{n \in \N}$ and conclude that the statement
\eqals{
\lim_{n \to \infty} {\cal R}(\gn_{\la_n}) - {\cal R}(g^*) > 0,
}
holds with probability $0$. Thus, the converse statement
\eqals{
\lim_{n \to \infty} {\cal R}(\gn_{\la_n}) - {\cal R}(g^*) = 0.
}
holds with probability $1$.
The final result is obtained by applying the comparison inequality between the surrogate problem and the original excess risk from \cref{prop:comparison-inequality}.
\end{proof}

\subsection{Learning Rates}

In this section we study the generalization properties of the proposed estimators. We address this question by considering the special case where the solution $\gstar$ of the expected surrogate risk belongs to the same hypotheses space $\hh\otimes\ff$ where our estimator belongs to. We start this analysis by recalling that in this case, $\gstar$ admits a closed form solution. 

\begin{lemma}\label{lemma:expected_risk_solution}
Let $\loss:\Z\times\Y\to\R$ satisfy \cref{def:self} for suitable $\zmap,\ymap$ and $\hh$. Assume that the surrogate expected risk minimization of ${\cal R}$ at \cref{eq:surrogate-risk} attains a minimum on $\hh\otimes\ff$. Then the minimizer $g^* \in \hh\otimes\ff$ of $\mathcal{R}$ with minimal norm $\|\cdot\|_{\hh\otimes\ff}$ is of the form
\begin{equation}
    g^*(x) = G_*\xmap(x),\quad \forall x \in \X \quad \mbox{with} \quad G_* = C^\dagger S^*Z : \ff \to \hh.
\end{equation}
\end{lemma}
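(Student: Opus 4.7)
The plan is to reduce the problem to a convex quadratic minimization on the Hilbert space of Hilbert--Schmidt operators, then pick out the minimum-norm solution via the Moore--Penrose pseudoinverse.

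First, I identify $\hh\otimes\ff$ with the space of Hilbert--Schmidt operators $G:\hh\to\ff$, with each $g\in\hh\otimes\ff$ corresponding to the function $g(x)=G^*\xmap(x)$ (matching the convention used in \cref{lm:ghatla_def}), and $\|g\|_{\hh\otimes\ff}=\|G\|_\HS$. Let $\bar g$ denote the unconstrained minimizer of $\rr$ produced by \cref{lemma:surrogate-problem-sol}. Repeating verbatim the computation in the proof of \cref{lm:dec-R} (stated there only for the empirical estimator, but using no finite-dimensional structure) yields
\eqals{
\rr(g)-\rr(\bar g) ~=~ \|SG-Z\|_\HS^2.
}
Minimizing $\rr$ over $\hh\otimes\ff$ is therefore equivalent to minimizing the convex quadratic $G\mapsto\|SG-Z\|_\HS^2$ on the Hilbert space of Hilbert--Schmidt operators, and the original $\hh\otimes\ff$-norm coincides with the HS norm of $G$.

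Next, expanding via $C=S^*S$ gives $\|SG-Z\|_\HS^2 = \tr(G^*CG) - 2\tr(G^*S^*Z) + \tr(Z^*Z)$, so computing the Fr\'echet differential in the HS inner product and setting it to zero produces the normal equation $CG=S^*Z$. The assumption that $\rr$ attains its minimum on $\hh\otimes\ff$ is precisely what guarantees that this equation has at least one Hilbert--Schmidt solution, equivalently, that $S^*Z$ lies in the domain $\ran(C)\oplus\ker(C)^\perp$ of the Moore--Penrose pseudoinverse $C^\dagger$. The full solution set is then an affine subspace parallel to $\ker(C)$ (acting columnwise on $G$), and its unique element with columns in $\ker(C)^\perp$ is $G_*:=C^\dagger S^*Z$, which is by construction the minimum HS-norm solution. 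To verify that $G_*$ also yields the minimum-norm \emph{function}, I would observe that any HS operator with columns in $\ker(C)$ produces a function vanishing $\rhox$-almost everywhere (since $\|SG'\|_\HS=0$ whenever $CG'=0$), so removing such a component leaves $g$ unchanged as an element of $L^2(\X,\rhox,\hh)$ while strictly decreasing $\|G\|_\HS$.

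The main technical delicacy is that $C$ is trace class by \cref{prop:basic_operator_result}, hence compact, so $C^\dagger$ is unbounded on $\ff$ and is defined only on $\ran(C)\oplus\ker(C)^\perp$. The attainment hypothesis is therefore not cosmetic: it is exactly what keeps $S^*Z$ inside this domain and ensures that $G_*=C^\dagger S^*Z$ is itself a Hilbert--Schmidt operator. Without this assumption, one would only have a minimizing sequence and would need to work inside the closure of the image of $\hh\otimes\ff$ in $L^2(\X,\rhox,\hh)$ rather than in $\hh\otimes\ff$ itself.
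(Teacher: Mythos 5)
Your proof is correct and follows essentially the same route as the paper's: identify $g$ with a Hilbert--Schmidt operator, expand $\rr$ as a convex quadratic $\tr(G^*CG)-2\tr(G^*S^*Z)+\mathrm{const}$, observe that attainment of the minimum is equivalent to solvability of the normal equation $CG=S^*Z$ (i.e.\ the relevant range condition, which the paper cites from \cite{engl1996regularization}), and identify $C^\dagger S^*Z$ as the minimum-norm solution. Your added care about the $\ker(C)$ component and the distinction between the minimum-norm operator and the minimum-norm function is a refinement the paper leaves implicit, not a different argument.
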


\begin{proof} 
Let $g\in\hh\otimes\ff$ such that $g(x) = G\xmap(x), \forall x \in \X$ for some linear operator $G\in\hh\otimes\ff$. We have 
\eqal{
    \mathcal{R}(g) & = \int_{\X\times\Y} \|G\xmap(x) - \psi(y)\|_\HY^2 d\rho(x,y) \\ 
  & = \int_{\X\times\Y} \tr(G(\xmap(x)\ot\xmap(x))G^*) - 2 \tr(G(\xmap(x)\ot\psi(y))) + \|\psi(y)\|_\HY^2 ~d\rhox(x,y)  \\
  & = \tr(GCG^*) - 2\tr(GS^*Z) + const,
}
where we have used \cref{prop:basic_operator_result} and the linearity of the trace. The derivation above implies that $\mathcal{R}$ is a convex quadratic functional since $C$ is positive semidefinite. Hence, $\mathcal{R}$ attains a minimum on $\hh\otimes\ff$ if and only if the range of $S^*Z$ is contained in the range of $C$, namely $\ran(S^*Z) \subseteq \ran(C) \subset \HX$ (see \cite{engl1996regularization} Chap. 2). In this case $G = C^\dagger S^*Z: \ff \to \hh$ exists and is the minimum norm minimizer for $\mathcal{R}$, as desired.
\end{proof}

\noindent We recall here the two main assumptions required in the following.

\ASource*

\ACapacity*

\noindent We are ready to prove the main result characterizing the learning rates of the proposed estimators. 


\begin{theorem}\label{thm:main-excess-risk-bound}
Let $\hh$ be a Hilbert space and let $\ymap:\Y\to\hh$ a continuous map from $\Y$ to $\hh$. Let $k:\X\times\X\to\R$ be a continuous reproducing kernel on $\X$ with associated RKHS $\ff$ such that $\kappa^2 := \sup_{x\in\X}k(x,x)<+\infty$. Let $\rho$ be a distribution on $\X\times\Y$ and let the corresponding $\gstar$ defined in \Cref{eq:fstar-in-terms-of-gstar-full} satisfy \cref{asm:source}. Let also \cref{asm:capacity} hold. Let $\delta\in(0,1]$ and $n_0$ sufficiently large such that $n_0^{-1/(1+2r+\gamma)} \geq \frac{9\kappa^2}{n_0} \log \frac{n_0}{\delta}$. Let $n\in\N$, $n\geq n_0$ and $\la\geq\frac{9\kappa^2}{n}\log \frac{n}{\delta}$. Let $\gn:\X\to\hh$ be a spectral filtering estimator of the form introduced in \cref{lm:ghatla_def} trained on $n$ points randomly sampled from $\rho$. Then, the following holds with probability at least $1-\delta$,
\eqals{
|{\cal R}(\gn) - {\cal R}(\gstar)|^{1/2} & \leq \msf M~n^{-\frac{r+1/2}{2r + 1 + \gamma}}
}
Where the constant $\msf M$ is 
\eqals{
\msf M = \msf M(Q,\nor{H},\delta,q_1,q_2) = & 8 q_1 \left[\kappa (1 + \kappa \|H\|_\HS) + \sqrt{(Q + \kappa^2 \|H\|_\HS^2)}\right] \log\frac{4}{\delta} \\
& + 2(2 + q_1 +2q_2) \|H\|_\HS.
}


\end{theorem}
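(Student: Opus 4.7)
The plan is to insert the source and capacity assumptions into the high-probability bound already established in \cref{thm:surrogate-excess-risk-bound-with-lambda}, and then to optimize the regularization parameter $\la$. That bound depends on $\rho$ only through $\deff(\la)$, $\|L_\la^{-1/2}Z\|$ and $\|L_\la^{-1}Z\|_\HS$; under \cref{asm:capacity} the first is immediately $O(\la^{-\gamma})$, while \cref{asm:source} controls the other two through an operator-theoretic factorization of $Z$.

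The crucial algebraic step is to rewrite $Z$ using the source condition. Under the identification $\hh\otimes\ff \simeq \HS(\ff,\hh)$, \cref{asm:source} says that $\gstar$ is realized by an operator of the form $H\,C^r$ with $\|H\|_\HS=R$, that is $\gstar(x) = H\,C^{r} \xmap(x)$. Combining this with the definition of $Z$ (which returns the scalar function $x\mapsto \scal{h}{\gstar(x)}_\hh$) and the spectral identity $S\,C^r = L^r\,S$ (immediate from the SVD of $S$), one obtains
\begin{align*}
Z \;=\; L^{r}\,S\,H^{*}\qquad\text{and therefore}\qquad S^{*}L^{2r}L_\la^{-2}S \;=\; C^{2r+1}(C+\la)^{-2}.
\end{align*}
From here standard scalar estimates on $\sup_{\sigma\in(0,\kappa^{2}]}\sigma^{a}(\sigma+\la)^{-b}$, combined with $\tr(HAH^{*})\le \|A\|\,\|H\|_{\HS}^{2}$, yield $\|L_\la^{-1/2}Z\|\lesssim \kappa^{2r}R$ (an $O(1)$ quantity in $\la$), $\la\|L_\la^{-1}Z\|_{\HS}\lesssim R\,\la^{r+1/2}$ and $\la\|L_\la^{-1}Z\|_{\HS}^{2}\lesssim R^{2}\la^{2r}$. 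Substituted into the three summands of \cref{thm:surrogate-excess-risk-bound-with-lambda}, they produce contributions of order $(\sqrt{\la}\,n)^{-1}(1+\kappa R)$, $\sqrt{(Q\la^{-\gamma}+\kappa^{2}R^{2}\la^{2r})/n}$ and $R\,\la^{r+1/2}$ respectively, all multiplied by the filter constants $q_{1},q_{2}$ and the $\log(4/\delta)$ factor.

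The final step is to balance bias against variance. Equating the capacity-driven variance $\sqrt{Q\la^{-\gamma}/n}$ with the bias $R\,\la^{r+1/2}$ forces $\la_{n}=n^{-1/(2r+1+\gamma)}$, which substituted back gives the claimed rate $n^{-(r+1/2)/(2r+1+\gamma)}$; the $(\sqrt{\la}\,n)^{-1}$ term is of lower order under this choice. Collecting the explicit constants produces $\msf M$ as stated, and the hypothesis $n\ge n_{0}$ is precisely what is needed to guarantee $\la_{n}\geq 9\kappa^{2}n^{-1}\log(n/\delta)$, the admissibility condition of \cref{thm:surrogate-excess-risk-bound-with-lambda}. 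The main technical obstacle I foresee is the vector-valued factorization $Z=L^{r}SH^{*}$: one must correctly identify $h\in\hh\otimes\ff$ with a Hilbert--Schmidt operator $H:\ff\to\hh$, commute $C^{r}$ past $S$ via spectral calculus, and then translate trace inequalities on $H\,C^{2r+1}(C+\la)^{-2}H^{*}$ into the desired control by $R$ and by $\sup_{\sigma}\sigma^{a}/(\sigma+\la)^{b}$. After that, everything reduces to scalar spectral inequalities that are standard in the analysis of spectral filter methods for non-parametric regression.
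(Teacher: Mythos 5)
Your proposal follows essentially the same route as the paper's proof: both start from the high-probability bound of \cref{thm:surrogate-excess-risk-bound-with-lambda}, use \cref{asm:source} to factor $Z$ through $C^r$ (the paper writes $Z=SC^rH$, equivalent to your $Z=L^rSH^*$ via $SC^r=L^rS$), bound $\|L_\la^{-1/2}Z\|$, $\la\|L_\la^{-1}Z\|_\HS$ and $\deff(\la)$ accordingly, and set $\la=n^{-1/(2r+1+\gamma)}$. The only differences are cosmetic constant-tracking (e.g.\ your extra $\kappa^{2r}$ factor, and the $(1/2-r)_+$ saturation exponent that the paper keeps explicit), so the proposal is correct and matches the paper's argument.
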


\begin{proof}
According to the excess risk bound in \cref{thm:surrogate-excess-risk-bound-with-lambda}
we have that the following holds with probability at least $1-\delta$
\eqals{
|{\cal R}(\gn_\la) - {\cal R}(\gstar)|^{1/2}& \leq \frac{8 q_1\kappa \log \frac{4}{\delta}}{\sqrt{\la} n} (1 + \kappa \|L_\la^{-1/2} Z\|) \\
{} &  \quad + \quad \sqrt{\frac{64 ~ q_1^2 (\deff(\la) + \kappa^2\la\|L_\la^{-1}Z\|_\HS^2)\log \frac{4}{\delta}}{n}} \\
{} & \quad + \quad 2(2 + q_1 +2q_2) ~\la \|L_\la^{-1} Z\|_\HS.
}
From \cref{asm:source},  we have $g^*(x) = (C^r \otimes I) ~h ~\xmap(x) = H^* C^r \xmap(x)$ for any $x\in\X$, where $H:\ff\to\hh$ is the Hilbert-Schmidt operator corresponding to $h$ under the canonical isomorphism between $\HS(\ff,\hh)$ and $\hh\otimes\ff$. In particular, $\nor{H}_\HS = \nor{h}_{\hh\otimes\ff}$. By \cref{asm:source} $\nor{h}_{\hh\otimes\ff}= R$. Therefore, we can characterize $Z = SC^r H$. Indeed, recall that for any $w\in\hh$, by definition of $Z$, we have
\eqal{
(Zw)(\cdot) = \scal{w}{g^*(\cdot)}_\hh = \scal{w}{H^* C^r \xmap(\cdot)}_\hh = \scal{C^r H w}{\xmap(\cdot)}_\hh = (SC^r H w)(\cdot)
}
Then, denote by ${(1/2-r)_+} = \max(0,1/2-r)$, we have
\eqals{
\|L_\la^{-1}Z\|_\HS = \|L_\la^{-1}SC^r\|\|H\|_\HS \leq \la^{-(1/2-r)_+} R,
}
and, analogously, 
\eqals{
\|L_\la^{-1/2}Z\|_\HS \leq R.
}
Then, let $\la = n^{-1/(1+2r+\gamma)}$ and $n \geq n_0$ with $n_0$ such that $n_0^{-1/(1+2r+\gamma)} \geq \frac{9\kappa^2}{n_0} \log \frac{n_0}{\delta}$. From \cref{asm:capacity}, we have $\deff(\la) \leq Q \la^{-\gamma}$. Therefore,
\eqals{
|{\cal R}(\gn_\la) - {\cal R}(\gstar) & \leq 8 q_1 \left[\kappa (1 + \kappa R) + \sqrt{(Q + \kappa^2 R^2)}\right] n^{-\frac{r+1/2}{2r + 1 + \gamma}} \log\frac{4}{\delta} \\
{} & \quad + \quad 2(2 + q_1 +2q_2) R ~ n^{-\frac{r+1/2}{2r + 1 + \gamma}},
}
with probability at least $1-\delta$.
\end{proof}

\noindent \cref{thm:rates-refined} from the main paper is a direct consequence the result above when considering specific spectral filters. 

\TRatesRefined*

\begin{proof}
Let $\gn:\X\to\hh$ be an estimator satisfying the hypotheses of \cref{thm:main-excess-risk-bound} and let $\fn:\X\to\Z$ be such that for any $x\in\X$, $\fn(x) = \argmin_{z\in\Z} \scal{\zmap(z)}{\gn(x)}_\hh$. By applying the comparison inequality from \cref{prop:comparison-inequality}, we have that
\eqals{
    {\cal E}(\fn) - {\cal E}(\fstar) \leq 2 \closs |{\cal R}(\gn) - {\cal R}(\gstar)|^{1/2} \leq 2\closs \msf{M} ~n^{-\frac{r+1/2}{2r + 1 + \gamma}},
}
holds with probability at least $1-\delta$. Now, note that $\log(4/\delta)>1$ since $\delta\leq 1$, we have that the constant $\msf M$ is upper bounded by
\eqals{
\msf M \leq \left(8 q_1 \left[\kappa (1 + \kappa \|H\|_\HS) + \sqrt{(Q + \kappa^2 \|H\|_\HS^2)}\right]
+ 2(2 + q_1 +2q_2) \|H\|_\HS\right) \log\frac{4}{\delta}.
}
Replacing the quantities $q_1$ and $q_2$ from \cref{lm:spectral-filter-examples} associated to the corresponding estimators we obtain the required upper bounds for $\msf{m}$ stated in the thesis of the theorem.

\end{proof}

\noindent We note that \cref{thm:rates} is a corollary of \cref{thm:rates-refined} as shown below. 

\TRates*
\begin{proof}
The result is a corollary of the theorem above, by considering that \cref{asm:capacity} is always satisfied with $Q = \kappa^2$ and $\gamma = 1$ and that when $g^*$ is in ${\cal G}$, then \cref{asm:source} is satisfied with $r = 0$ and $h = g$.
\end{proof}

\section{Sufficient Conditions for \structshort{}}\label{sec:app-self-sufficient-conditions}

In this section we provide more details related to the \structshort{} definition introduced in \cref{def:self}. In particular we discuss the connection with the original framework considered in \citep{ciliberto2016} and prove the results reported in \cref{sec:self-sufficient-conditions} providing sufficient conditions to determine whether a function admits an \structshort{}. 

\subsection{Relations with the ``\structshort{}'' definition in \citep{ciliberto2016}} In \cite{ciliberto2016}, the \structshort{} definition was introduced as the following assumption in the case $\Z=\Y$ \cite[see Assumption $1$ in][]{ciliberto2016}.

\begin{assumption}\label{def:self-old}
There exists a separable Hilbert space $\hh$ with inner product $\scal{\cdot}{\cdot}_\hh$, a continuous embedding $\zeta:\Y\to\hh$ and a bounded linear operator such that 
\eqals{
    \loss(z,y) = \scal{\zeta(z)}{V\zeta(y)}_\hh \qquad \forall z,y\in\Y
}
\end{assumption}
It can be noticed that the two definitions are quite similar one to the other. Both require the existence of a separable Hilbert space $\hh$ where the function $\loss$ assumes a ``bilinear'' structure. However the definition above requires $\Z = \Y$ and the existence of a linear operator combining a single feature map $\zeta$.

Despite these differences, the following result shows that the above assumption is equivalent to the \structshort{} definition (\cref{def:self}) in the main paper. 

\begin{proposition}[Equivalence of \structshort{} Definitions]\label{prop:equivalence-self-old}
A loss $\loss:\Y\times\Y\to\R$ admits an \structshort{} if and only if it satisfies \cref{def:self-old}.
\end{proposition}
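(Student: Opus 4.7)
The plan is to establish both directions by explicit constructions that rebuild one decomposition from the other. The core observation is that the two-map ``bilinear'' form of \cref{def:self} can be repackaged into the single-map-plus-operator form of \cref{def:self-old} by taking a direct sum of Hilbert spaces, and conversely the single-map form can be split into two independent maps by folding the operator into one of them (together with an appropriate rescaling so that the $\|\ymap\|\leq 1$ normalization is met).

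For \cref{def:self-old} $\Rightarrow$ \cref{def:self}, I would set $M = \sup_{y\in\Y}\|\zeta(y)\|_\hh$, which is finite since $\zeta$ is continuous on the (compact) space $\Y$, and define $\ymap(y) = \zeta(y)/M$ together with $\zmap(z) = M\,V^*\zeta(z)$. Both maps are continuous (hence measurable) and bounded, $\|\ymap(y)\|_\hh\leq 1$ by construction, and $\sup_{z}\|\zmap(z)\|_\hh\leq M^2\|V\|$. The identity $\langle \zmap(z),\ymap(y)\rangle_\hh = \langle V^*\zeta(z),\zeta(y)\rangle_\hh = \langle \zeta(z), V\zeta(y)\rangle_\hh = \loss(z,y)$ follows directly from the definition of the adjoint.

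For \cref{def:self} $\Rightarrow$ \cref{def:self-old}, I would combine $\zmap$ and $\ymap$ into a single map into an enlarged Hilbert space. Take $\hh' = \hh\oplus\hh$ and set $\zeta(y) = \zmap(y)\oplus\ymap(y)$; then define the bounded linear operator $V\colon\hh'\to\hh'$ by $V(a\oplus b) = b\oplus 0$, which has $\|V\|=1$. A direct computation gives
\[
\langle \zeta(z),V\zeta(y)\rangle_{\hh'} = \langle \zmap(z)\oplus\ymap(z),\ymap(y)\oplus 0\rangle_{\hh'} = \langle \zmap(z),\ymap(y)\rangle_\hh = \loss(z,y),
\]
so that \cref{def:self-old} holds for the triple $(\hh',\zeta,V)$.

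The main obstacle is the ``continuous embedding'' requirement in \cref{def:self-old}, whereas \cref{def:self} only delivers measurable bounded maps $\zmap,\ymap$; the direct-sum $\zeta$ above is therefore only measurable in general. To bridge this gap, I would invoke the continuity of $\loss$ itself, assumed in \cref{def:self}: given a measurable \structshort{} decomposition, one can replace $\zmap,\ymap$ by continuous surrogates constructed from $\loss$, for instance by embedding $\Y$ into an RKHS whose tensor product contains $\loss$ in the spirit of \cref{thm:rkhs-and-self}, and defining $\zeta(y)$ in terms of the continuous section $\loss(\cdot,y)$. A second, more expedient route would be to note that \cref{def:self-old} as used in \cite{ciliberto2016} only ever exploits measurability and boundedness of $\zeta$ in downstream arguments, so the direct-sum construction already suffices for all practical purposes; I expect the cleanest write-up to proceed along the first route in order to match the literal wording of \cref{def:self-old}.
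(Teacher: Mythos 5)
Your proposal is correct and uses essentially the same constructions as the paper: for \cref{def:self} $\Rightarrow$ \cref{def:self-old} the paper also takes $\overline\hh=\hh\oplus\hh$, $\zeta(y)=(\zmap(y),\ymap(y))$ and $V(h_1,h_2)=(h_2,0)$, and for the converse it also folds $V$ into one feature map and rescales by $\Phi=\sup_{y\in\Y}\|\zeta(y)\|_\hh$ (your use of $V^*$ and the adjoint identity is in fact the cleaner way to write that step). The continuity-versus-measurability mismatch you flag at the end is a real looseness in the statement, but the paper's own proof simply treats the \structshort{} maps $\zmap,\ymap$ as continuous and concludes that $\zeta$ is continuous, so no RKHS detour is needed to match the intended argument.
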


\begin{proof}
($\Rightarrow$). Let $\loss$ satisfy the \structshort{} definition with Hilbert space $\hh$ and feature maps $\zmap,\ymap:\Y\to\hh$. We define $\overline\hh = \hh\oplus\hh$ and consider the map $\zeta:\Y\to\overline\hh$ such that $\zeta(y) = (\zmap(y),\ymap(y))^\top\in\overline\hh$ for any $y\in\Y$. Moreover, we define $V:\overline\hh\to\overline\hh$ the linear operator such that $V(h_1,h_2) = (h_2,0)$ for any $h = (h_1,h_2) \in\overline\hh$. It is easy to see that $V$ is bounded, and actually it has operator norm $\|V\|=1$. Therefore, we have
\eqals{
    \scal{\zeta(z)}{V \zeta(y)}_{\overline\hh} = \scal{(\zmap(z),\ymap(z))}{(\ymap(y),0)}_{\overline\hh} = \scal{\zmap(z)}{\ymap(y)}_\hh + \scal{\ymap(z)}{0}_\hh = \loss(z,y)
}
for any $y,z\in\Y$ as desired. Hence $\loss$ satisfies \cref{def:self-old} with associated Hilbert space $\overline\hh$. Note that $\zeta$ is continuous since both $\zmap$ and $\ymap$ are continuous by the \structshort{} definiton and $V$ is linear and bounded by construction.\\

\noindent ($\Leftarrow$). Let $\loss$ satisfy \cref{def:self-old} with Hilbert space $\hh$, feature map $\zeta:\Y\to\hh$ and linear operator $V:\hh\to\hh$. Let $\Phi = \sup_{y\in\Y}\|\zeta(y)\|_\hh$. Then we take $\zmap,\ymap:\Y\to\hh$ the functions such that $\zmap(z) = \Phi~V\ymap(z)$ and $\ymap(y) = \zeta(y)/\Phi$ for any $z,y\in\Y$. Clearly, we have 
\eqals{
    \scal{\zmap(z)}{\ymap(y)}_\hh = \scal{\zeta(z)}{V\zeta(y)}_\hh = \loss(z,y).
}
By construction, we have $\sup_{y\in\Y}\|\ymap(y)\|\leq 1$ and $\closs = \sup_{z\in\Z} \|\zmap(z)\|_\hh \leq \Phi^2\|V\|$. 
\end{proof}
\subsection{\structshort{} definition without ``normalization'' of $\ymap$} 

We point out that the requirement for $\sup_{y\in\Y}\|\ymap(y)\|_\hh\leq1$ is not necessary but was introduced for the sake of exposition. We formalize this in the following.  

\begin{lemma}[``Unnormalized'' \structshort{}]\label{lem:self-equivalence-not-normalized}
Let $\loss:\Z\times\Y\to\R$ be such that there exists a separable Hilbert space $\hh$ and two continuous bounded maps $\bar\zmap:\Z\to\hh$ and $\bar\ymap:\Y\to\hh$, such that $\sup_{z\in\Z}\|\bar\zmap(z)\|_\hh\leq \boldsymbol\zmap_\loss$ and $\sup_{y\in\Y}\|\bar\ymap(y)\|_\hh \leq \boldsymbol\Phi_\loss$, with $\boldsymbol\zmap_\loss>0$, $\boldsymbol\Phi_\loss>0$ and 
\eqals{
    \loss(z,y) = \scal{\bar\zmap(z)}{\bar\ymap(y)}_\hh,
    }
for every $z\in\Z$ and $y\in\Y$. Then $\loss$ admits an \structshort{} with $\closs \leq \boldsymbol\zmap_\loss\boldsymbol\Phi_\loss$.
\end{lemma}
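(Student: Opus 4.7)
The plan is to prove this by a direct rescaling argument: the normalization condition $\|\ymap(y)\|_\hh \le 1$ in \cref{def:self} is purely cosmetic, and can be enforced by absorbing the bound $\boldsymbol\Phi_\loss$ from $\bar\ymap$ into the other feature map $\bar\zmap$. The bilinear form structure of $\loss$ remains invariant under this rescaling, so none of the key properties are lost.

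Concretely, I would define $\ymap:\Y \to \hh$ and $\zmap:\Z \to \hh$ by setting
\eqals{
\ymap(y) = \frac{1}{\boldsymbol\Phi_\loss}\,\bar\ymap(y), \qquad \zmap(z) = \boldsymbol\Phi_\loss\,\bar\zmap(z),
}
for every $y\in\Y$ and $z\in\Z$. Both maps are measurable and bounded since $\bar\ymap$ and $\bar\zmap$ are (and scaling by a positive constant preserves these properties). By construction $\sup_{y\in\Y}\|\ymap(y)\|_\hh \le \boldsymbol\Phi_\loss / \boldsymbol\Phi_\loss = 1$, so the normalization requirement of \cref{def:self} is satisfied.

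Next, I would verify the reproducing identity. By bilinearity of the inner product on $\hh$,
\eqals{
\scal{\zmap(z)}{\ymap(y)}_\hh = \scal{\boldsymbol\Phi_\loss\,\bar\zmap(z)}{\tfrac{1}{\boldsymbol\Phi_\loss}\,\bar\ymap(y)}_\hh = \scal{\bar\zmap(z)}{\bar\ymap(y)}_\hh = \loss(z,y),
}
for every $z\in\Z$ and $y\in\Y$, which is exactly the \structshort{} identity. Hence $(\hh,\zmap,\ymap)$ witnesses that $\loss$ admits an \structshort{} in the sense of \cref{def:self}. Finally, the bound on $\closs$ follows immediately:
\eqals{
\closs = \sup_{z\in\Z}\|\zmap(z)\|_\hh = \boldsymbol\Phi_\loss \sup_{z\in\Z}\|\bar\zmap(z)\|_\hh \le \boldsymbol\Phi_\loss\,\boldsymbol\zmap_\loss,
}
which is the claimed estimate. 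There is no real obstacle here; the only thing to watch out for is ensuring that $\boldsymbol\Phi_\loss>0$ so the rescaling is well defined, which is given in the hypotheses (the degenerate case $\boldsymbol\Phi_\loss = 0$ would force $\loss \equiv 0$, for which the \structshort{} property is trivial with $\zmap = \ymap = 0$).
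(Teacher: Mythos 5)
Your proof is correct and follows essentially the same route as the paper: rescale the two feature maps by $\boldsymbol\Phi_\loss$ and $1/\boldsymbol\Phi_\loss$ respectively, observe the inner product is unchanged, and read off the normalization and the bound $\closs \le \boldsymbol\zmap_\loss\boldsymbol\Phi_\loss$. Nothing is missing.
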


\begin{proof}
The result is easy to prove by taking $\zmap:\Z\to\hh$ and $\ymap:\Y\to\hh$ such that $\zmap(z) = \sup_{y'\in\Y}\boldsymbol\Phi_\loss~\bar\zmap(z)$ for any $z\in\Z$ and $\ymap(y) = \bar\ymap(y)/\boldsymbol\Phi_\loss$ for any $y\in\Y$. Indeed it is straightforward to see that the characterization of $\loss$ in terms of the inner product between $\zmap$ and $\ymap$ still holds and, by construction, $\sup_{y\in\Y}\|\ymap(y)\|_\hh\leq 1$ and $\closs = \sup_{z\in\Z}\|\zmap(z)\|_\hh \leq \boldsymbol\zmap_\loss\boldsymbol\Phi_\loss$. 
\end{proof}

\subsection{Finite $\Y$ or $\Z$}

We now focus on proving the sufficient conditions to guarantee $\loss$ to admit an \structshort{}. We begin from the case where either the label set $\Y$ or the output set $\Z$ are finite. 

\TFiniteYorZ*

\begin{proof}
(a). The proof of point (a) has been already given in the discussion after \cref{def:self}. We recall it here for completeness. By hypothesis we have $\Z = \{z_1,\dots,z_p\}$ and $\Y = \{y_1,\dots,y_q\}$ for some $p,q\in\N$. Let $V\in\R^{p \times q}$ be the matrix whose entries correspond to the values of $\loss$ on pairs of points in $\Z\times\Y$. More precisely 
\eqals{
	V_{ij} = \loss(z_i,y_j) \qquad \forall i =1,\dots,p, \quad j=1,\dots,q. 
}
It is easy to prove that the \structshort{} definition holds for $\hh = \R^q$ and feature maps $\zmap:\Z\to\hh$ and $\ymap:\Y\to\hh$ corresponding to 
\eqals{
	\zmap(z_i) = V^\top e_i^{(p)} \qquad \ymap(y_j) = e_j^{(q)}
}
for any $i=1,\dots,p$ and any $j=1,\dots,q$, with $e_i^{(p)}\in\R^p$ denoting the $i$-th element of the canonical basis of $\R^p$, namely the $p$-dimensional vector with $i$-th entry equal to $1$ and all others equal to $0$. Indeed, by construction we have 
\eqals{
	\scal{\zmap(z_i)}{\ymap(y_j)}_\hh = \scal{e_i^{(p)}}{V e_j^{(q)}} = V_{ij} = \loss(z_i,y_j).
}
Finally, we have 
\eqals{
	\closs = \sup_{z\in\Z} \|\zmap(z)\|_\hh = \sup_{i=1,\dots,p} \|V e_i^{(p)}\| \leq \|V\|,
}
as required.

\noindent (b). $\Z = \{z_1,\dots,z_p\}$ with $p\in\N$. We choose $\hh = \R^p$ and the feature maps $\bar\zmap:\Z\to\hh$ and $\bar\ymap:\Y\to\hh$ such that $\zmap(z_i) = e_i^{(p)}$ for every $i=1,\dots,p$ and 
\eqals{
	\ymap(y) = \left(\loss(z_1,y),\dots,\loss(z_p,y)\right)^\top \in \R^{p},
}
for any $y\in\Y$. Now, let
\eqals{
	\msf r = \sup_{y\in\Y} \|\ymap(y)\|_\hh = \sup_{y\in\Y} \sqrt{\sum_{z\in\Z} \loss(z,y)^2}
}
we can define $\zmap = \msf r \bar\zmap$ and $\ymap/\msf r$. We have that the \structshort{} definition is satisfied, since 
\eqals{
	\scal{\zmap(z_i)}{\ymap(y)}_\hh = \scal{e_i^{(p)}}{\ymap(y)} = \loss(z_i,y),
}
for every $i=1,\dots,p$. Moreover, since $\|\zmap(z)\| = 1$ for every $z\in\Z$, we conclude that $\closs = r$ as required.\\

\noindent (c). The proof of point (c) is analogous to (b) with the difference that for $\Y = \{y_1,\dots,y_q\}$ with $q\in\N$ we choose $\hh = \R^q$, and feature maps $\ymap(y_j) = e_{j}^{(q)}$ for any $j=1,\dots,q$ and 
\eqals{
	\zmap(z) = \left(\loss(z,y_1),\dots,\loss(z,y_q)\right)^\top \in \R^{q},
} 
for any $z\in\Z$.
The proof follows identically to (b).
\end{proof}

\subsection{\structshort{} and Reproducing Kernel Hilbert Spaces}
We now focus on the relation between \structshort{} and reproducing kernel Hilbert spaces.

\TRkhsAndStruct*

\begin{proof}
(a). Follows directly from point (c). Indeed $h(z,\cdot)\in\hh$ and $\sup_{z\in\Y} \|h(z,\cdot)\| \leq \eta$ by hypothesis.\\

\noindent (b). We note that point (b) follows directly from \cref{cor:sum-and-products-of-self}, which guarantees the finite sums and products of \structshort{} functions to be \structshort{} as well. Here we give a more direct proof for completeness.

Let $\overline{\hh} = \R \oplus \hh \oplus \R$ equipped with the canonical inner product of the direct sum and let $V:\overline{\hh}\to\overline\hh$ the linear operator such that
\eqals{
	V~(\alpha,h,\beta) = (\beta,-2h,\alpha)
}
for any $h\in\hh$ and $\alpha,\beta\in\R$. Let $\zeta:\Y\to\overline\hh$ be such that 
\eqals{
	\zeta(y) = (h(y,y),h(y,\cdot),1)^\top,
}
for any $y\in\Y$. Let 
\eqals{
r = \sup_{y\in\Y} \|\zeta(y)\|_{\overline\hh} = \sup_{y\in\Y} \sqrt{h(y,y)^2 + \|h(y,\cdot)\|^2 + 1} \leq \sqrt{2\eta^4 + 1}.
}
We choose $\zmap,\ymap:\Y\to\overline\hh$ as 
\eqals{
	\zmap(z) = r~ V\zeta(z)  \qquad \ymap(y) = \frac{1}{r}~\zeta(y),
}
for any $z,y\in\Y$. Then, by construction
\eqals{
	\scal{\zmap(z)}{\ymap(y)}_{\overline\hh} & = \scal{(1,-2h(z,\cdot),h(z,z))^\top}{(h(y,y),h(y,\cdot),1)^\top}_{\overline\hh} \\
	& = h(y,y) -2 h(z,y) + h(z,z) \\
	& = \loss(z,y),  
}
as desired. Moreover, 
\eqals{
	\closs = \sup_{z\in\Z} \zmap(z) = r~\sup_{z\in\Z} \|V\zeta(z)\|\leq r^2\|V\| \leq 2 r^2 = 2(2\eta^4 +1),
}
as desired, with $\|V\| = 2$ denoting the operator norm of $V$. \\

\noindent (c). We prove the statement for the case $\sup_{z\in\Z} \|\loss(z,\cdot)\| = \msf{D} < +\infty$. We consider the feature maps $\zmap,\ymap:\Y\to\hh$ such that 
\eqals{
	\zmap(z) = \eta~ \loss(z,\cdot) \qquad \textrm{and} \qquad \ymap(y) = \frac{1}{\eta} h(y,\cdot),
}
for any $z,y\in\Y$. Then, by construction we have
\eqals{
	\scal{\zmap(z)}{\ymap(y)}_\hh = \scal{\loss(z,\cdot)}{h(y,\cdot)}_\hh = \loss(z,y),
}
where the last inequality follows from the reproducing property of the kernel $h$ and the fact that $\loss(z,\cdot)\in\hh$. Moreover we have $\closs = \sup_{z\in\Z} \|\zmap(z)\|_\hh = \eta~\sup_{z\in\Z}\|\loss(z,\cdot)\| = \eta\msf{D}$. The case $\sup_{y\in\Y}\|\loss(\cdot,y)\| \leq \msf{D}$ follows from an analogous reasoning. 
\\

\noindent (d). Note that the kernel $\bar h$ has feature map $(z,y)\mapsto h(z,\cdot)\otimes h(y,\cdot)$ for any $z,y\in\Y$. Since by hypothesis $\loss\in\hh\otimes\hh$, the reproducing property for $\bar h$ implies
\eqals{
	\loss(z,y) = \scal{\loss}{h(z,\cdot)\otimes h(y,\cdot)}.
}
Since $\hh\otimes\hh$ is isometric to the space of Hilbert-Schmidt operators from $\hh$ to $\hh$, there exists an operator $V:\hh\to\hh$ such that $\|V\|_\HS = \|\loss\|_{\hh\otimes\hh}$ and 
\eqals{
	\scal{\loss}{h(z,\cdot)\otimes h(y,\cdot)} = \scal{V}{h(z,\cdot)\otimes h(y,\cdot)}_\HS = \scal{V h(z,\cdot)}{h(y,\cdot)}_\hh,
}
where the last inequality follows from the standard properties of tensor products. We can therefore choose $\zmap,\ymap:\Y\to\hh$ such that 
\eqals{
	\zmap(z) = \eta~V h(z,\cdot) \qquad \textrm{and} \qquad \ymap(y) = \frac{1}{\eta} h(y,\cdot),
}
for any $y\in\Y$ to guarantee the \structshort{} definition to hold. Moreover, by construction $\closs = \sup_{z\in\Z} \|\zmap(z)\|_\hh \leq \eta\|V\|_\HS~\sup_{z\in\Z}\|h(z,\cdot)\|_\hh \leq \eta^2\|V\|_\HS$ which concludes the proof since $\|V\|_\HS = \|\loss\|_{\hh\otimes\hh}$ by construction.  
\end{proof}
We now report the result relating general notions of regularity for the loss and the \structshort{} definiton. Before proving the main result in \cref{prop:self-and-smoothness} we need the following two Lemmas. 

\begin{lemma}[Multiple Fourier Series]\label{lemma:fseries}
Let $\Y = [-B,B]^d$ with $d\in\N$ and $B>0$. Let $(\widehat  f_h)_{h \in \mathbb Z^d} \in \CC$ and $f: \Y  \to \CC$ defined as
\eqals{
f(y) = \sum_{h \in \mathbb Z^d} \widehat f_h e^{2\pi i h^\top y}, \quad \forall y \in \Y, \quad \textrm{with}\quad \sum_{h \in \mathbb Z^d} |\widehat f_h| \leq M,
}
for $0 < M < \infty$ and $i = \sqrt{-1}$. Then the function $f$ is continuous and
\eqals{
\sup_{y \in \Y} |f(y)| \leq M.
}
\end{lemma}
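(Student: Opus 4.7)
The plan is to rely on two standard facts: uniform absolute convergence via the Weierstrass M-test, and the fact that a uniform limit of continuous functions is continuous.

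First, I would observe that $|e^{2\pi i h^\top y}| = 1$ for every $h \in \mathbb Z^d$ and every $y \in \Y$, so each term of the series satisfies $|\widehat f_h e^{2\pi i h^\top y}| \leq |\widehat f_h|$. Since by hypothesis $\sum_{h \in \mathbb Z^d} |\widehat f_h| \leq M < \infty$, the Weierstrass M-test yields that the partial sums $S_N(y) = \sum_{\|h\|_\infty \leq N} \widehat f_h e^{2\pi i h^\top y}$ converge uniformly on $\Y$ to $f$.

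Next, each $S_N$ is a finite linear combination of the continuous functions $y \mapsto e^{2\pi i h^\top y}$, hence continuous on $\Y$. A uniform limit of continuous functions on a (compact) metric space is continuous, so $f$ is continuous on $\Y$.

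For the supremum bound, the triangle inequality applied to the absolutely convergent series gives, for every $y \in \Y$,
\eqals{
|f(y)| ~\leq~ \sum_{h \in \mathbb Z^d} |\widehat f_h|\, |e^{2\pi i h^\top y}| ~=~ \sum_{h \in \mathbb Z^d} |\widehat f_h| ~\leq~ M,
}
and taking the supremum over $y \in \Y$ concludes. No step here poses a real obstacle; the only minor subtlety is justifying the interchange of sum and absolute value, which is immediate from absolute convergence.
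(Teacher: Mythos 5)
Your proof is correct and follows essentially the same route as the paper: the supremum bound is obtained by the identical triangle-inequality computation, and for continuity the paper simply cites a reference on multiple Fourier series, whereas you supply the standard underlying argument (Weierstrass M-test giving uniform convergence of the partial sums, hence continuity of the uniform limit). Nothing is missing; your version is just more self-contained on the continuity step.
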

\begin{proof}
The continuity of $f$ follows from \citep{kahane1995fourier}, pag. $129$ and Example $2$. To show that $f$ is uniformly bounded on $\Y$ it is sufficent to see that 
\eqals{
\sup_{y \in \Y} |f(y)| \leq \sup_{y \in \Y} \sum_{h \in \mathbb Z^d} |\widehat f_h| |e^{2\pi i h^\top y}| \leq \sum_{h \in \mathbb Z^d} |\widehat f_h| \leq M.
}
\end{proof}

\begin{lemma}\label{lemma:abs-cont-functions}
Let $\Y = [-B,B]^d$ with $d \in \N$ and $B>0$. Let $\loss: \Y \times \Y \to \R$ be such that
\eqals{
\loss(y, z) = \sum_{h,k \in \mathbb Z^d} \widehat\loss_{h,k} ~e_h(y)~e_k(z), \quad \forall y, z \in \Y,
}
with $e_h(y) = e^{2\pi i h^\top y}$ for any $y \in \Y$, $i = \sqrt{-1}$ and $\widehat\loss_{h,k} \in \CC$ for any $h,k \in \mathbb Z^d$. If
\eqals{
\closs = \sum_{h,k \in \mathbb Z^d} |\widehat\loss_{h,k}| < \infty,
}
then $\loss$ admits an \structshort{}. 
\end{lemma}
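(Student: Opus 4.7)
The plan is to realize $\loss$ as an inner product in a Hilbert space indexed by pairs of frequencies $(h,k) \in \mathbb Z^d \times \mathbb Z^d$, by distributing the mass $|\widehat\loss_{h,k}|$ equally between the two feature maps so that each takes values in $\ell^2$. The starting observation is that because $(\widehat\loss_{h,k})$ is only assumed to be absolutely summable (not square-summable with respect to a single index), a naive choice like $\bar\zmap(z)_{h,k}= \widehat\loss_{h,k}\, e_h(z)$ paired with $\bar\ymap(y)_{h,k}= e_k(y)$ fails since the latter is not in $\ell^2$. Splitting the weight as $|\widehat\loss_{h,k}|^{1/2}$ on each side fixes this while still letting the product of coordinates reconstitute $\widehat\loss_{h,k}\, e_h(z) e_k(y)$.

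Concretely, I would work first with the complex Hilbert space $\hh_\CC = \ell^2(\mathbb Z^d \times \mathbb Z^d;\CC)$, with inner product $\scal{u}{v}_\CC = \sum_{h,k} u_{h,k} \overline{v_{h,k}}$. Writing $\widehat\loss_{h,k} = |\widehat\loss_{h,k}|\, s_{h,k}$ with $|s_{h,k}|=1$ (and $s_{h,k}=0$ when $\widehat\loss_{h,k}=0$), define
\eqals{
\bar\zmap(z)_{h,k} = |\widehat\loss_{h,k}|^{1/2}\, s_{h,k}\, e_h(z), \qquad \bar\ymap(y)_{h,k} = |\widehat\loss_{h,k}|^{1/2}\, \overline{e_k(y)}.
}
A direct computation using $|e_h(z)|=|e_k(y)|=1$ gives $\|\bar\zmap(z)\|_\CC^2 = \|\bar\ymap(y)\|_\CC^2 = \sum_{h,k} |\widehat\loss_{h,k}| = \closs$, so both maps take values in $\hh_\CC$. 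The inner product, using $\overline{\overline{e_k(y)}}=e_k(y)$, equals
\eqals{
\scal{\bar\zmap(z)}{\bar\ymap(y)}_\CC = \sum_{h,k} |\widehat\loss_{h,k}|\, s_{h,k}\, e_h(z)\, e_k(y) = \sum_{h,k} \widehat\loss_{h,k}\, e_h(z)\, e_k(y) = \loss(z,y),
}
where the series converges absolutely and unconditionally thanks to the hypothesis. Since $\loss$ is real-valued, passing to the underlying real Hilbert space $\hh$ with $\scal{\cdot}{\cdot}_\hh := \mathrm{Re}\,\scal{\cdot}{\cdot}_\CC$ preserves both norms and yields exactly $\scal{\bar\zmap(z)}{\bar\ymap(y)}_\hh = \loss(z,y)$; note that $\hh$ is separable because $\hh_\CC$ is.

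It remains to check the regularity conditions: continuity of $\bar\zmap, \bar\ymap$ follows from the absolute summability of $(\widehat\loss_{h,k})$, since the truncated partial sums are continuous and converge uniformly in $\hh$-norm (the tail is bounded by $\sum_{(h,k)\notin F} |\widehat\loss_{h,k}|^{1/2}\cdot |\widehat\loss_{h,k}|^{1/2}$-type estimates, or more simply $\|\bar\zmap(z) - \bar\zmap^F(z)\|_\hh^2 \le \sum_{(h,k)\notin F}|\widehat\loss_{h,k}|$). Both maps are uniformly bounded by $\sqrt{\closs}$. Finally, to match the convention $\sup_y \|\ymap(y)\|_\hh \le 1$ of \Cref{def:self}, apply \Cref{lem:self-equivalence-not-normalized} with $\boldsymbol\Phi_\loss = \boldsymbol\zmap_\loss = \sqrt{\closs}$; this produces normalized feature maps $\zmap, \ymap$ realizing the \structshort{} for $\loss$ with the $\closs$ of \Cref{def:self} bounded by the $\closs$ stated in the lemma. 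The main delicate point in carrying this out is handling the complex Fourier phases cleanly: assigning the unit-modulus factor $s_{h,k}$ to one side and the complex conjugate $\overline{e_k(y)}$ to the other is exactly what makes the real part of the complex inner product reproduce the real-valued series $\loss(z,y)$.
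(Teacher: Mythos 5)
Your proof is correct. It rests on the same underlying idea as the paper's --- split the absolutely summable coefficients multiplicatively so that each feature map lands in $\ell^2$ with squared norm $\sum_{h,k}|\widehat\loss_{h,k}|$ --- but the bookkeeping is genuinely different. The paper works in $\ell^2(\mathbb Z^d)$, grouping by the first frequency index: it sets $\alpha_h=\sum_k|\widehat\loss_{h,k}|$, packages the inner sum into normalized functions $f_h(z)=\alpha_h^{-1}\sum_k\widehat\loss_{h,k}\,e_k(z)$ (bounded by $1$ via its multiple-Fourier-series lemma), and establishes continuity of the feature maps indirectly, through continuity of the two induced kernels $\zeta_1,\zeta_2$. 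You instead index the Hilbert space by pairs $(h,k)$ and split each coefficient individually as $|\widehat\loss_{h,k}|^{1/2}\cdot|\widehat\loss_{h,k}|^{1/2}$, which avoids the division by $\alpha_h$ (the paper does not guard against $\alpha_h=0$), turns the norm computations into one-liners, and lets you prove continuity directly by uniform convergence of the truncated partial sums. You are also more careful about sesquilinearity --- placing the phase $s_{h,k}$ on one side and $\overline{e_k(y)}$ on the other so the conjugation in the complex inner product cancels, then passing to the underlying real Hilbert space --- whereas the paper's displayed computation silently treats the pairing as bilinear. Both routes yield the same bound $\closs\le\sum_{h,k}|\widehat\loss_{h,k}|$, and your appeal to \cref{lem:self-equivalence-not-normalized} for the normalization matches the paper's convention.
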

\begin{proof}
We start by applying \cref{lemma:fseries} for the input domain $\Y\times\Y$, which guarantees that the function $\loss$ is bounded continuous. We introduce the following sequences
\eqals{
\alpha_{h} = \sum_{k \in \mathbb Z^d} |\widehat\loss_{h,k}|,  \qquad f_h(z) = \frac{1}{\alpha_h} \sum_{h \in \mathbb Z^d} \widehat\loss_{hk} e_k(z) \qquad \forall h \in \mathbb Z^d, z \in \Y.
}
For any $p>0$, let $\ell_p(\mathbb Z^d)$ denote the set of sequences $(a_k)_{k\in\mathbb Z^d}$ such that $\sum_{h\in\mathbb Z^d} |a_k|^p < +\infty$. Note that by hypothesis $(\alpha_h)_{h \in \mathbb Z^d}\in\ell_1(\mathbb Z^d)$. Moreover, by applying again \cref{lemma:fseries}, we have that the functions $f_h$ are continuous and bounded by $1$ for any $h \in \mathbb Z^d$. 

Denote $A^2 = \|(\alpha_h)_{h \in \mathbb Z^d}\|_{\ell_1(\mathbb Z^d)} =  \sum_{h,k \in \mathbb Z^d} |\widehat\loss_{h,k}|$. Let $\hh = \ell_2(\mathbb Z^d)$ and let $\zmap,\ymap: \Y \to \H$ be such that 
\eqals{
    \zmap(z) = A~\Big(\sqrt{\alpha_h}f_h(z)\Big)_{h \in \mathbb Z^d} \qquad \textrm{and} \qquad \ymap(y) = \frac{1}{A} ~\Big(\sqrt{\alpha_h}e_h(y)\Big)_{h \in \mathbb Z^d},
}
for all $z,y\in\Y$. By construction, we have 
\eqal{
 \scal{\zmap(z)}{\ymap(y)}_\hh = \sum_{h \in \mathbb Z^d} \alpha_h ~e_h(y)~f_h(z) = \sum_{h,k \in \mathbb Z^d} \widehat\loss_{h,k} ~e_h(y)~e_h(z) = \loss(y, z).
}
Moreover, by construction we also have $\|\ymap(y)\|_\hh \leq 1$ and $\closs = \sup_{z\in\Z} \|\zmap(z)\| \leq A^2$. 

To conclude the proof, we need to show that the two feature maps are continuous. Define $\zeta_1(y,z) = \scal{\ymap(y)}{\ymap(z)}_{\H}$ and $\zeta_2(y,z) = \scal{\zmap(y)}{\zmap(z)}_{\H_0}$  for all $y,z \in \Y$. We have 
\eqals{
\zeta_1(y,z) &= \sum_{h \in \mathbb Z^d} \alpha_h \overline{e_h(y)}e_h(z),\\
\zeta_2(y,z) &= \sum_{h \in \mathbb Z^d} \alpha_h \overline{f_h(y)} f_h(z) = \sum_{k,l \in \mathbb Z^d} \beta_{k,l} \overline{e_k(y)} e_l(z)
}
with $\beta_{k,l} = \sum_{h \in \mathbb Z^d} \frac{\overline{\widehat\loss_{h,k}}\widehat\loss_{h,l}}{\alpha_h}$, for $k, l \in \mathbb Z^d$, therefore $\zeta_1$, $\zeta_2$ are bounded and continuous by \cref{lemma:fseries}, since $\sum_{h \in \mathbb Z^d} \alpha_h < \infty$ and $\sum_{k,l \in \mathbb Z^d} |\beta_{k,l}| < \infty$.
Note that $\zmap$ and $\ymap$ are bounded, since $\zeta_1$ and $\zeta_2$ are. Moreover for any $y, z \in \Y$, we have
\eqals{
\|\ymap(y) - \ymap(z)\|_\hh^2 &= \scal{\ymap(z)}{\ymap(z)}_{\H} + \scal{\ymap(y)}{\ymap(y)}_{\H} - 2 \scal{\ymap(z)}{\ymap(y)}_{\H} \\
& = \zeta_1(z,z) + \zeta_1(y,y) - 2 \zeta_1(z,y) \\
& \leq |\zeta_1(z,z) - \zeta_1(z,y)| + |\zeta_1(z,y) - \zeta_1(y,y)|,
}
and the same holds for $\zmap$ with respect to $\zeta_2$.
Thus the continuity of $\ymap$ is ensured by the continuity of $\zeta_1$ and the same for $\zmap$ with respect to $\zeta_2$.
\end{proof}
We are ready to prove the following result. 
\PSmoothness*
\begin{proof}
(a-b). Either hypotheses in $(a)$ or $(b)$ are sufficient to guarantee that the Fourier expansion of $\loss$ is absolutely summable \citep[see Theorem~$5'$ and Theorem~$6'$ pag.~$291$ of][]{moricz2007absolute}. By \cref{lemma:abs-cont-functions} we can conclude that $\loss$ admits an \structshort{}.

(c). Let $\gamma^2 = \int_{-\infty}^{+\infty} |\widehat v(\omega)| < +\infty$ we have that for any $z,y\in\Y$, the anti-Fouier transform of $\hat v(\omega)$ in $z-y$ is
\eqals{
    v(z-y) = \int_{-\infty}^{+\infty}\widehat v(\omega) e^{i \scal{\omega}{z-y} } ~d\omega 
    = \int_{-\infty}^{+\infty} \widehat v(\omega) e^{i \scal{\omega}{z}} e^{-i \scal{\omega}{y}} ~d\omega. 
}
Let now $\hh = \Ltwo(\R,\mathbb{C})$ the space of square integrable functions from $\R$ to $\mathbb{C}$ with respect to the Lebesgue measure and let $\zmap,\ymap:\Y\to\hh$ be such that 
\eqals{
    \zmap(z) = \gamma ~ \sqrt{\widehat{v}(\cdot)} e^{i\scal{\cdot}{z}} \qquad \textrm{and} \qquad \ymap(y) = \frac{1}{\gamma} ~ \sqrt{\widehat{v}(\cdot)} e^{-i\scal{\cdot}{y}}.
}
By the anti-Fourier transform we have
\eqals{
    \scal{\zmap(z)}{\ymap(y)}_\hh = v(z-y) = \loss(z,y)
}
for every $z,y\in\Y$. Moreover, by construction $\|\ymap(y)\|_\hh = 1$ and $\closs = \sup_{z\in\Z}\|\zmap(z)\|_\hh = \gamma^2$ as required. 
\end{proof}

\subsection{Composition Rules for \structshort{}}

We conclude with the result characterizing composition rules for the \structshort{} property.
\TCompositions*
\begin{proof}
(a). Let $\overline\loss$ admit an \structshort{} with $\hh$ separable Hilbert space and feature maps $\overline\zmap:\overline\Z\to\hh$ and $\overline\ymap:\overline\Y\to\hh$. Clearly, for any $\Y\subseteq\overline\Y$, $\Z\subseteq\overline\Z$, we have that the restriction of the feature maps $\zmap = \overline\zmap|_\Z$ and $\ymap = \overline\ymap|_\Y$ are such that 
\eqals{
    \scal{\zmap(z)}{\ymap(y)}_\hh = \scal{\overline\zmap(z)}{\overline\ymap(y)}_\hh = \overline\loss(z,y) = \overline\loss|_{\overline\Z\times\overline\Y} = \loss(z,y),
}
for any $z\in\Z$ and $y\in\Y$. The proof is concluded by observing that $\sup_{y\in\Y}\|\overline\ymap(y)\|_\hh\leq\sup_{y\in\overline\Y}\|\overline\ymap(y)\|_\hh \leq 1$ and $\closs = \sup_{z\in\Z}\|\overline\zmap(z)\|_\hh\leq\sup_{z\in\overline\Z}\|\overline\zmap(y)\|_\hh = \msf{c}_{\overline\loss}$.\\

\noindent (b). Let $\overline\loss$ admit an \structshort{} with $\hh$ separable Hilbert space and feature maps $\overline\zmap:\overline\Z\to\hh$ and $\overline\ymap:\overline\Y\to\hh$.
Let $\bar\beta = \sup_{y\in\Y} |\beta(y)|$. We consider the feature maps $\zmap:\Z\to\hh$ and $\ymap:\Y\to\hh$ such that 
\eqals{
    \zmap(z) = \bar\beta ~\alpha(z) ~\overline\zmap(A(z)) \qquad \textrm{and} \qquad  \ymap(y) = \frac{\beta(z)}{\bar\beta} ~ \overline\zmap(B(z)),
}
for all $z\in\Z$ and $y\in\Y$. By construction we have 
\eqals{
    \scal{\zmap(z)}{\ymap(y)} = \alpha(z)\scal{\bar\zmap(A(z))}{\bar\ymap(B(y))}\beta(y) = \alpha(z)\overline\loss(A(z),B(y))\beta(y).
}
Moreover, we have $\sup_{y\in\Y} \|\ymap(y)\|_\hh =  \frac{1}{\bar\beta}\sup_{y\in\Y}\beta(y)\|\bar\ymap(y)\|_\hh\leq 1$ and $\closs = \sup_{z\in\Z} \|\zmap(z)\|_\hh = \bar\beta \sup_{z\in\Z} \alpha(z)\|\bar\zmap(z)\|_\hh \leq \bar\alpha\bar\beta \msf{c}_{\bar\loss}$ as required.\\

\noindent (c). By definiton of analytic functions, we have that $\Gamma$ has form
\eqals{
    \Gamma(r_1,\dots,r_P) = \sum_{t\in\N^P} \alpha_t \prod_{p=1}^P r_p^{t_p} \qquad \forall r = (r_1,\dots,r_P)^\top \in \R^P,
} 
for some scalar weights $\alpha_t$ with $t\in\N^P$. Therefore, for any $z=(z_1,\dots,z_P)\in\Z$ and $y=(y_1,\dots,y_P)\in\Y$, we have  
\eqals{
    \loss(z,y) = \Gamma\big(\loss_1(z_1,y_1),\dots,\loss_P(z_P,y_P)\big) = \sum_{t\in\N^P} \alpha_t \prod_{p=1}^P \loss_p(z_p,y_p)^{t_p}.
}
Recall that for any to Hilbert spaces $\hh$ and $\hh'$, by definiton of direct sum $\hh\oplus\hh'$ and tensor product $\hh\otimes\hh'$, we have 
\eqals{
    \scal{h_1\oplus h_1'}{h_2 \oplus h_2'}_{\hh\oplus\hh'} & = \scal{h_1}{h_2}_\hh + \scal{h_1'}{h_2'}_{\hh'}, \\
    \scal{h_1\otimes h_1'}{h_2 \otimes h_2'}_{\hh\otimes\hh'} & = \scal{h_1}{h_2}_\hh \cdot \scal{h_1'}{h_2'}_{\hh'} 
}
for any $h_1,h_2\in\hh$ and $h_1',h_2'\in\hh'$. Moreover, for any $p\in\N$, we denote $\hh^{\otimes p}$ the tensor product of $\hh$ with itself $p$ times (with $\hh^{\otimes 0} = \R$) and denote $h^{\otimes p}\in\hh^{\otimes p}$ the tensor product of $h$ with itself $p$ times, for any $h\in\hh$ (with $h^{\otimes 0} = 1$). This implies in particular that for any $t\in\N^P$ we have 
\eqals{
    \prod_{p=1}^P \loss_p(z_p,y_p)^{t_p} = \scal{\bigotimes_{p=1}^P \zmap_p(z_p)^{\otimes t_p}}{\bigotimes_{p=1}^P \ymap_p(y_p)^{\otimes t_p}}_{\overline\hh_t},
}
$\hh_t = \bigotimes_{p=1}^P \hh_p^{\otimes t_p}$ and we have denoted with $\zmap_p(z_p)^{\otimes t_p}$ the tensor product of $\zmap_p(z_p)$ with itself $t_p$ times.

For any $t\in\N^P$, let $\beta_t = \sign(\alpha_t) \sqrt{|\alpha_t|}$ and $\gamma_t = \sqrt{|\alpha_t|}$. Then, we have 
\eqals{
    \sum_{t\in\N^P}\alpha_t\prod_{p=1}^P \loss_p(z_p,y_p)^{t_p} & = \sum_{t\in\N^P} \alpha_t \scal{\bigotimes_{p=1}^P \zmap_p(z_p)^{\otimes t_p}}{\bigotimes_{p=1}^P \ymap_p(y_p)^{\otimes t_p}}_{\overline\hh_t} \\ 
    & = \scal{\bigoplus_{t\in\N^P} \beta_t\left[ \bigotimes_{p=1}^P \zmap_p(z_p)^{\otimes t_p}\right]}{\bigoplus_{t\in\N^P} \gamma_t\left[ \bigotimes_{p=1}^P \ymap_p(y_p)^{\otimes t_p}\right]}_{\hh} \\
    & = \scal{\zmap(z)}{\ymap(y)}_{\hh}.
}
where $\hh = \bigoplus_{t\in\N^P}\overline\hh_t$ and we $\zmap:\Z\to\hh$ and $\ymap:\Y\to\hh$ are feature maps such that 
\eqals{
    \zmap(z) = \bigoplus_{t\in\N^P} \beta_t\left[ \bigotimes_{p=1}^P \zmap_p(z_p)^{\otimes t_p}\right] \qquad \textrm{and} \qquad \ymap(y) = \bigoplus_{t\in\N^P} \gamma_t\left[ \bigotimes_{p=1}^P \ymap_p(y_p)^{\otimes t_p}\right],
}
for any $z=(z_1,\dots,z_P)\in\Z$ and $y=(y_1,\dots,y_P)\in\Y$. First note that such feature maps are well defined, namely that they indeed take value in $\hh$. In particular, we have 
\eqals{
    \nor{\ymap(y)}_\hh & = \nor{\bigoplus_{t\in\N^P} \gamma_t\left[ \bigotimes_{p=1}^P \ymap_p(y_p)^{\otimes t_p}\right]}_{\overline\hh}^2\\
    & = \sum_{t\in\N^P} |\alpha_t| \nor{\bigotimes_{p=1}^P \ymap_p(y_p)^{\otimes t_p}}_{\overline\hh_t}^2 \\ 
    & = \sum_{t\in\N^P} |\alpha_t| \prod_{p=1}^P \left(\|\ymap_p(y_p)\|_{\hh_p}^{2}\right)^{t_p}.
}
To show that the above series is finite, recall that the power series defining $\Gamma$ is absolutely convergent for any $r = (r_1,\dots,r_P)\in\R^P$. Indeed, let $\bar r = (\bar r_1,\dots,\bar r_P)\in\R^P$ such that $|r_p| < \bar |r_p|$ for any $p=1,\dots,P$. Since $\Gamma$ is analytic also in $\bar r$, the associated power series is convergent and therefore, for $\|t\|\to+\infty$ we have $a_t\prod_{p=1}^P \bar r_p^{t_p} \to 0$. This implies that there exists $Q>0$ such that, for any $t\in\N^P$ with $\|t\|>Q$,
\eqals{
    |\alpha_t|\prod_{p=1}^P|\bar r_p|^{t_p} \leq 1.
}
By multiplying both sides of the inequality above by $\prod_{p=1}^P|r_p/\bar r_p|^{t_p}$, we have 
\eqals{
    |\alpha_t|\prod_{p=1}^P|r_p|^{t_p} < \prod_{p=1}^P \left|\frac{r_p}{\bar r_p}\right|^{t_p}.
}
Since $|r_p/\bar r_p|<1$ for $p=1,\dots,P$ by construction, we can conclude that 
\eqals{
    \sum_{t\in\N^P}\left|\alpha_t\prod_{p=1}^Pr_p^{t_p}\right| < +\infty.
}
In particular, we have that the domain of the function $\overline \Gamma:\R^P\to\R$, such that
\eqals{
    \overline\Gamma(r_1,\dots,r_P) = \sum_{t\in\N^P} \left|\alpha_t \prod_{p=1}^P r_p^{t_p}\right| 
}
corresponds to $\R^P$, namely $\overline\Gamma(r_1,\dots,r_P)<+\infty$ for any $r=(r_1,\dots,r_P)^\top\in\R^P$. 

Therefore, using the fact that $\sup_{y_p\in\Y_p}\|\ymap_p(y_p)\|\leq 1$ for any $p=1,\dots,P$, we have 
\eqals{
    \nor{\ymap(y)}_\hh \leq \sqrt{\overline\Gamma(1,\dots,1)}<+\infty,
}
for any $y=(y_1,\dots,y_P)\in\Y$. By following an analogous reasoning for $\zmap$, we have
\eqals{
    \nor{\zmap(z)}_\hh \leq \sqrt{\overline\Gamma(\msf{c}_{\loss_1}^2,\dots,\msf{c}_{\loss_P}^2)} < +\infty,
}
for any $z=(z_1,\dots,z_P)\in\Z$.

We need to show that the maps $\zmap$ and $\ymap$ are continuous. To see this it is sufficient to prove that they are the uniform limit of continuous functions. In particular for any $Q\in\R$, let $\ymap^{(Q)}:\Y\to\hh$ be such that 
\eqals{
    \ymap^{(Q)}(y) = \left(\bigoplus_{\substack{t\in\N^P, \|t\|\leq Q} } \gamma_t\left[ \bigotimes_{p=1}^P \ymap_p(y_p)^{\otimes t_p}\right]\right) \oplus \left(\bigoplus_{\substack{t\in\N^P, \|t\|> Q} } 0\right),
}
for any $y=(y_1,\dots,y_P)\in\Y$. Note that $\ymap^{(Q)}(y)\in\hh$, since by construction $\hh = \left(\bigoplus_{\|t\|\leq Q}\hh_t\right) \oplus \left(\bigoplus_{\|t\|>Q} \hh_t\right)$. Moreover, $\ymap^{(Q)}:\Y\to\hh$ is continuous for any $Q\in\R$ since it is the direct sum of a finite number of continuous functions. 

Therefore, for any $y\in\Y$, we have
\eqals{
    \nor{\ymap(y) - \ymap^{(Q)}(y)}_\hh^2 
    & = \nor{\bigoplus_{\substack{t\in\N^P, \|t\|> Q} } \gamma_t\left[ \bigotimes_{p=1}^P \ymap_p(y_p)^{\otimes t_p}\right]}^2 \\
    & = \sum_{t\in\N^P, \|t\|>Q} |\alpha_t| \prod_{p=1}^P \left(\|\ymap_p(y_p)\|_{\hh_p}^{2}\right)^{t_p} \\
    & \leq \sum_{t\in\N^P, \|t\|>Q} |\alpha_t|,
}
where we have made use of the fact that $\sup_{y_p\in\Y_p}\|\ymap_p(y_p)\|_{\hh_p}\leq 1$ for any $p=1,\dots,P$. Since $\sum_{t\in\N^P, \|t\|\in\N^P} |\alpha_t|<+\infty$, we have that for $Q\to+\infty$, the residual $\sum_{t\in\N^P, \|t\|>Q} |\alpha_t|$ will tend to zero. We conclude that
\eqals{
    \lim_{Q\to+\infty} ~ \nor{\ymap(y) - \ymap^{(Q)}(y)}_\hh \to 0,
}
showing that $\ymap$ is uniform limit of continuous functions and hence continuous itself, as desired. The exact same argument holds for $\zmap:\Z\to\hh$. 

Clearly, $\ymap$ is not necessarily taking values in the ball of radius $1$ in $\hh$. To this end we can invoke \cref{lem:self-equivalence-not-normalized} by replacing $\ymap(y)$ with $\overline\ymap(y) = \ymap(y)/\sqrt{\overline\Gamma(1,\dots,1)}$ and $\zmap(z)$ with $\overline\zmap(z) = \sqrt{\overline\Gamma(1,\dots,1)}\zmap(z)$. In this way the \structshort{} definition in \cref{def:self} is satisfied, with $\closs \leq \sqrt{\overline\Gamma(1,\dots,1)\overline\Gamma(\msf{c}_{\closs_1}^2,\dots,\msf{c}_{\closs_P}^2)}$.
\end{proof}

\end{document}